\theoremstyle{plain}
\theoremstyle{definition}
\newtheorem{theorem}{Theorem}[section]
\newtheorem{proposition}[theorem]{Proposition}
\newtheorem{example}[theorem]{Example}
\newtheorem{definition}[theorem]{Definition}
\theoremstyle{remark}
\setlist[itemize,0]{leftmargin=0.5cm,itemsep=-0.1cm,topsep=-0.2cm}
\setlist[enumerate,0]{leftmargin=0.75cm,itemsep=-0.1cm,topsep=-0.2cm}
\definecolor{lightblue}{HTML}{89CFF0}
\DeclareMathOperator*{\argmax}{arg\,max}
\DeclareMathSymbol{\indexminus}{\mathbin}{AMSa}{"39}
\newcommand{\gsps}{geometric switching policies\xspace}
\newcommand{\GSPs}{GSPs\xspace}
\newcommand{\gsp}{geometric switching policy\xspace}
\newcommand{\GSP}{GSP\xspace}
\newcommand{\switchprob}{\alpha}
\newcommand{\gspcollection}{\Pi}
\newcommand{\ggpi}{GGPI\xspace}
\newcommand{\lltwotd}{LL2TD\xspace}
\newcommand{\logittoprob}{\mu}
\icmltitlerunning{Generalised Policy Improvement with Geometric Policy Composition}
\begin{document}

\twocolumn[
\icmltitle{Generalised Policy Improvement with Geometric Policy Composition}



\icmlsetsymbol{equal}{*}

\begin{icmlauthorlist}
\icmlauthor{Shantanu Thakoor}{equal,dm}
\icmlauthor{Mark Rowland}{equal,dm}
\icmlauthor{Diana Borsa}{dm}
\icmlauthor{Will Dabney}{dm}
\icmlauthor{R\'emi Munos}{dm}
\icmlauthor{Andr\'e Barreto}{dm}
\end{icmlauthorlist}

\icmlaffiliation{dm}{DeepMind, London}

\icmlcorrespondingauthor{Shantanu Thakoor}{thakoor@deepmind.com}
\icmlcorrespondingauthor{Mark Rowland}{markrowland@deepmind.com}

\icmlkeywords{Machine Learning, ICML, Reinforcement Learning, Planning, Generative Models, Policy Improvement}

\vskip 0.3in
]



\printAffiliationsAndNotice{\icmlEqualContribution} 

\begin{abstract}
    We introduce a method for policy improvement that interpolates between the greedy approach of value-based reinforcement learning (RL) and the full planning approach typical of model-based RL. The new method builds on the concept of a geometric horizon model (GHM, also known as a $\gamma$-model), which models the discounted state-visitation distribution of a given policy.
    We show that we can evaluate any non-Markov policy that switches between a set of base Markov policies with fixed probability by a careful composition of the base policy GHMs, \emph{without any additional learning}.
    We can then apply generalised policy improvement (GPI) to collections of such non-Markov policies to obtain a new Markov policy that will in general outperform its precursors. We provide a thorough theoretical analysis of this approach, develop applications to transfer and standard RL, and empirically demonstrate its effectiveness over standard GPI on a challenging deep RL continuous control task. We also provide an analysis of GHM training methods,
    proving a novel convergence result regarding previously proposed methods 
    and showing how to train these models stably in deep RL settings.
\end{abstract}

\section{Introduction}

Policy improvement is at the heart of reinforcement learning (RL). The prototypical approach to policy improvement in value-based RL is to take the Q-function of a policy and act \emph{greedily} with respect to it. In contrast, in model-based RL, planning with a model in principle aims to derive a (near-)optimal policy directly. Choosing between these two extremes involves some trade-offs. While greedy improvement requires estimating only a Q-function, from which it is computationally trivial to derive the greedy policy, this may result in only a weak improvement over the existing policy. Planning, on the other hand, is a computationally intensive process, yet can yield extremely high-quality policies. In this paper, we introduce an approach to policy improvement that interpolates between these two extremes.

\begin{figure}[t]
    \centering
    \includegraphics[keepaspectratio,width=.47\textwidth]{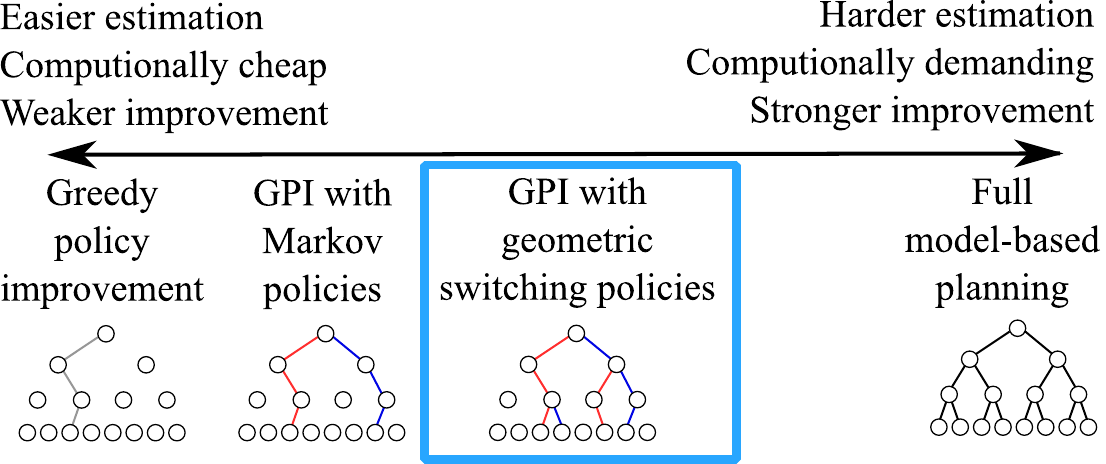}
    \caption{A spectrum of trade-offs in policy improvement. \citet{barreto2016successor} propose generalised policy improvement (GPI) as a means of simultaneously improving over several policies (illustrated with blue and red trajectories), a step from greedy improvement of a single policy towards planning. The central contribution of this paper, GPI with \gsps, moves a step further in this direction, allowing for improvement over non-Markov  \GSPs (illustrated as trajectories that switch between blue and red base policies).
    }
    \label{fig:model-free-based}
\end{figure}

\citet{barreto2016successor} propose generalised policy improvement (GPI), a method that allows for improvement over a \emph{collection} of policies $\{\pi_1,\ldots,\pi_k\}$ simultaneously, generalising the notion of greedy improvement of an individual policy.
We show that GPI can be extended to a much wider class of \emph{non-Markov} policies. These policies, which we call \emph{\gsps} (\GSPs), switch between executing a base set of Markov policies $\{\pi_1,\ldots,\pi_k\}$ with fixed probability. In general, these policies do not ever need to be executed, and can instead be evaluated using information learnt about the base policies, without any further learning required, leading to a stronger improvement guarantee in GPI.
This approach to policy improvement makes statistical and computational trade-offs that interpolate between greedy improvement and full model-based planning, potentially providing benefits of both worlds; Figure~\ref{fig:model-free-based} shows where the proposed approach lies along the spectrum of methods between the conventional model-free and model-based extremes.

Central to our approach is the notion of a geometric horizon model (GHM) \citep{janner2020gamma}, which models the discounted future state-visitation distribution of a given Markov policy. \citet{janner2020gamma} introduced GHMs mainly as a mechanism to compute the value function of a single policy. In this paper we show that GHMs of \emph{distinct} policies can be composed to evaluate a potentially large number of \GSPs
with no additional learning required.
We can then apply GPI over this collection 
of non-Markov policies to obtain a new Markov policy that will in general outperform all of its precursors (base policies \emph{and} switching policies). 

In carrying out the above, we address several technical questions which are contributions in their own right. Specifically, our central technical contributions include:
\begin{itemize}
    \item \emph{\GSP evaluation with GHMs}, a method for evaluating \gsps that only requires learning GHMs for a base class of Markov policies
    (Section~\ref{sec:non-markov-eval}).
    \item \emph{Geometric generalised policy improvement} (\ggpi), a method for deriving a Markov policy that improves over a collection of \gsps, interpolating between greedy improvement and full model-based planning 
    (Section~\ref{sec:nmgpi}).
    \item Convergence analysis of \emph{cross-entropy temporal-difference learning}, an algorithm introduced by \citet{janner2020gamma} for learning GHMs (Section~\ref{sec:learning_ghm}).
    \item New practical methods and insights for training GHMs at scale, including cross-entropy temporal-difference learning with VAE-GHMs (Section~\ref{sec:ghm_training_exp}).
    \item Applications of GHM evaluation and \ggpi
    to both transfer and standard RL settings (Section~\ref{sec:applications_description}), with successful implementation in combination with deep learning in continuous control tasks (Section~\ref{sec:large_scale_ant_exps}).
\end{itemize}

\section{Background}

A Markov decision process (MDP) is specified by a state space $\mathcal{X}$, action space $\mathcal{A}$, transition probabilities $P : \mathcal{X} \times \mathcal{A} \rightarrow \mathscr{P}(\mathcal{X})$, reward distributions $\mathcal{R} : \mathcal{X} \times \mathcal{A} \rightarrow \mathscr{P}_1(\mathbb{R})$, and corresponding expected reward function $r : \mathcal{X} \times \mathcal{A} \rightarrow \mathbb{R}$, defined by $r(x, a) = \mathbb{E}_{R \sim \mathcal{R}(x, a)}[R]$. For ease of presentation, we focus on the case where $\mathcal{X}$ is finite, although much of the material of the paper extends to more general state spaces. 
An agent interacting with the environment using a policy $\pi : \mathcal{X} \rightarrow \mathscr{P}(\mathcal{A})$ generates a trajectory of states, actions, and rewards $(X_t, A_t, R_t)_{t \geq 0}$, and the agent's return along this trajectory is defined by $\sum_{t \geq 0} \gamma^ t R_t$, where $\gamma \in [0,1)$ is the discount factor. The agent's expected return under $\pi$ when beginning in state $x$ and initially taking action $a$ is $Q^\pi_\gamma(x, a) = \mathbb{E}^\pi_{x, a}[\sum_{t \geq 0} \gamma^t R_t]$, where $\mathbb{E}^\pi_{x, a}$ and $\mathbb{P}^\pi_{x, a}$ denote the expectation operator and probability distribution of a trajectory beginning at state-action pair $(x, a)$ and following $\pi$ thereafter.
The goal of \emph{policy evaluation} is to estimate $Q^\pi_\gamma$ for a policy $\pi$ of interest, while the goal of \emph{policy optimisation} is to obtain a policy $\pi^*$ with $Q_\gamma^{\pi^*} \geq Q_\gamma^\pi$ component-wise for all other policies $\pi \in \mathscr{P}(\mathcal{A})^\mathcal{X}$ \citep{sutton2018reinforcement,puterman2014markov,bertsekas1996neuro,szepesvari2010algorithms,meyn2022control}. A fundamental operation in this process is \emph{policy improvement}, described below.

\subsection{Generalised policy improvement}

We first recall a core method for policy improvement in RL. 

\textbf{Greedy policy improvement.} The greedy policy improvement map $\mathcal{G} : \mathbb{R}^{\mathcal{X}\times\mathcal{A}} \rightrightarrows \mathscr{P}(\mathcal{A})^\mathcal{X}$ is a set-valued function that maps Q-functions to the corresponding set of \emph{greedy} policies. Mathematically, we have $\pi' \in \mathcal{G}(Q)$ if and only if
\begin{align*}
    \pi'(a|x) > 0 \implies  a \in \argmax_{a' \in \mathcal{A}} Q(x, a') \, .
\end{align*}
We will overload notation to allow us to pass policies directly to $\mathcal{G}$, writing $\mathcal{G}(\pi)$ for $\mathcal{G}(Q^\pi)$. 
A classical result underpinning policy iteration is that if $\pi' \in \mathcal{G}(\pi)$, 
then $Q^{\pi'} \geq Q^\pi$ element-wise, with equality iff $\pi$ is optimal.

\citet{barreto2016successor} propose \emph{generalised policy improvement}, which provides a means of producing a policy that simultaneously improves over a \emph{set} of base policies.

\textbf{Generalised policy improvement.} The generalised policy improvement (GPI) function $\mathcal{G}$ (overloading notation) takes as input a finite set of Q-functions $\{Q_1,\ldots,Q_n\}$, and returns $\mathcal{G}(\{Q_1,\ldots,Q_n\})$, the set of greedy policies with respect to this set, defined by $\pi' \in \mathcal{G}(\{Q_1,\ldots,Q_n\})$ if and only if
\begin{align}\label{eq:gpi}
    \pi'(a|x) >0 \implies a \in \argmax_{a' \in \mathcal{A}} \max_{i=1}^n Q_i(x,a') \, .
\end{align}

\begin{proposition}[\citealt{barreto2016successor}]\label{prop:gpi}
    If $\pi' \in \mathcal{G}(\{Q^{\pi_1}, \ldots Q^{\pi_n}\})$, then $Q^{\pi'} \geq \max_{i=1}^n Q^{\pi_i}$ element-wise, and equality implies that $\pi'$ is optimal.
\end{proposition}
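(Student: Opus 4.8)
The plan is to run the classical monotone-operator argument behind policy iteration, but applied to the pointwise maximum of the base Q-functions rather than to a single $Q^{\pi_i}$. Write $Q_{\max}(x,a) = \max_{i=1}^n Q^{\pi_i}(x,a)$, and for a policy $\pi$ let $T^\pi$ denote the Bellman evaluation operator, $(T^\pi Q)(x,a) = r(x,a) + \gamma \sum_{x'} P(x'\mid x,a)\sum_{a'}\pi(a'\mid x') Q(x',a')$, which is monotone and a $\gamma$-contraction whose unique fixed point is $Q^\pi$. By the definition of $\mathcal{G}$ in \eqref{eq:gpi}, the policy $\pi'$ places all of its mass on actions attaining $\max_{a'} Q_{\max}(x,a')$.

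The one substantive step is the one-step improvement inequality $T^{\pi'} Q_{\max} \geq Q_{\max}$, element-wise. To see it, fix a pair $(x,a)$ and an index $j \in \{1,\dots,n\}$. Since $\pi'$ is greedy with respect to $Q_{\max}$, the continuation term in $(T^{\pi'} Q_{\max})(x,a)$ equals $\sum_{x'} P(x'\mid x,a)\max_{a'} Q_{\max}(x',a') \ge \sum_{x'} P(x'\mid x,a)\sum_{a'}\pi_j(a'\mid x') Q_{\max}(x',a') \ge \sum_{x'} P(x'\mid x,a)\sum_{a'}\pi_j(a'\mid x') Q^{\pi_j}(x',a')$, where the last inequality uses $Q_{\max} \ge Q^{\pi_j}$. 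Adding $r(x,a)$ gives $(T^{\pi'}Q_{\max})(x,a) \ge (T^{\pi_j} Q^{\pi_j})(x,a) = Q^{\pi_j}(x,a)$, and taking the maximum over $j$ yields $(T^{\pi'}Q_{\max})(x,a) \ge Q_{\max}(x,a)$.

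To finish the inequality, iterate: monotonicity of $T^{\pi'}$ together with the bound just proved gives $(T^{\pi'})^{k+1}Q_{\max} \ge (T^{\pi'})^k Q_{\max} \ge \cdots \ge Q_{\max}$ for every $k$, and since $(T^{\pi'})^k Q_{\max} \to Q^{\pi'}$ by the contraction property, we conclude $Q^{\pi'} \ge Q_{\max} = \max_i Q^{\pi_i}$. For the equality claim, suppose $Q^{\pi'} = Q_{\max}$ everywhere. Then $\pi'$ is greedy with respect to $Q_{\max} = Q^{\pi'}$, i.e.\ $\pi' \in \mathcal{G}(\pi')$; the classical greedy-improvement result quoted above, applied with base policy $\pi'$ and improved policy $\pi'$ itself, then forces $\pi'$ to be optimal, since the improved and base Q-functions coincide trivially and the ``equality implies optimal'' half of that result applies.

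I do not expect a genuine obstacle here; the only place requiring care is the one-step inequality, where one must insert the intermediate quantity $\sum_{a'}\pi_j(a'\mid x')Q_{\max}(x',a')$ — the expectation of $Q_{\max}$, not of $Q^{\pi_j}$ — so that the greedy property of $\pi'$ can legitimately be invoked before passing down to $Q^{\pi_j}$. It is also worth noting that the contraction and $\argmax$ steps rely on the standing finiteness assumption on $\mathcal{X}$ (and on $\mathcal{A}$ being such that the relevant maxima are attained).
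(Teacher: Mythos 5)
Your proof is correct and uses the standard argument: establish the one-step inequality $Q_{\max} \le T^{\pi'} Q_{\max}$ via the greedy property of $\pi'$, then iterate using monotonicity and the contraction of $T^{\pi'}$ to its fixed point $Q^{\pi'}$, with the equality case reduced to the classical ``greedy with respect to itself implies optimal'' fact. The paper cites this result from \citet{barreto2016successor} rather than reproving it, but the identical argument (in generalised form) appears in its proof of Theorem~\ref{prop:gnmpi}, so your approach matches the paper's.
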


\subsection{Discounted visitation distributions and geometric horizon models}\label{sec:ghm}

We begin by recalling a key concept in MDPs.

\begin{definition}\label{def:state-visitation}
    The collection of \emph{discounted future state-visitation distributions}
    \footnote{We refer specifically to \emph{future} state-visitation distributions to emphasise that the initial state $x_0$ does not contribute to the distribution.}
    $\mu^\pi_\gamma$ for a policy $\pi$ and discount factor $\gamma$ are indexed by initial state-action pairs $(x_0, a_0) \in \mathcal{X} \times \mathcal{A}$, and are defined by
    \begin{align*}
        \mu^\pi_\gamma(x|x_0, a_0) = (1-\gamma) \sum_{k=0}^\infty \gamma^k \mathbb{P}^\pi_{x_0, a_0}(X_{k+1} = x ) \, ,
    \end{align*}
\end{definition}

A useful interpretation of these distributions is the following.

\begin{restatable}{proposition}{propGeometricInterpretation}\label{prop:geometric-interpretation}
    If $T \sim \text{Geometric}(1-\gamma)$, i.e.
    \begin{align*}
        \mathbb{P}(T = k) = \gamma^{k-1}(1 - \gamma)  \ \ \text{ for } k=1,2,\ldots \, ,
    \end{align*}
    and is independent of the random trajectory $(X_t, A_t, R_t)_{t \geq 0}$ generated by $\pi$ beginning at state-action pair $(x, a)$,
    then the random state $X_T$ is distributed according to $\mu^\pi_\gamma(\cdot|x, a)$.
\end{restatable}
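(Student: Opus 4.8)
The plan is to compute the law of the random state $X_T$ directly by conditioning on the value of the geometric random variable $T$ and using its independence from the trajectory. Specifically, for a fixed target state $x \in \mathcal{X}$, I would write
\begin{align*}
    \mathbb{P}(X_T = x) = \sum_{k=1}^\infty \mathbb{P}(T = k)\, \mathbb{P}^\pi_{x_0, a_0}(X_k = x) = \sum_{k=1}^\infty \gamma^{k-1}(1-\gamma)\, \mathbb{P}^\pi_{x_0, a_0}(X_k = x) \, ,
\end{align*}
where the first equality uses independence of $T$ and $(X_t)_{t \geq 0}$, and the second substitutes the given geometric mass function.

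Next I would reindex the sum by setting $j = k - 1$, so that it becomes $(1-\gamma) \sum_{j=0}^\infty \gamma^{j} \mathbb{P}^\pi_{x_0, a_0}(X_{j+1} = x)$, which is exactly the definition of $\mu^\pi_\gamma(x \mid x_0, a_0)$ from Definition~\ref{def:state-visitation}. Since this identity holds for every $x \in \mathcal{X}$, and both sides are probability distributions on the finite set $\mathcal{X}$ (the left side trivially, the right side by the cited definition), the two distributions coincide, which is the claim.

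The only points requiring any care are bookkeeping rather than substance: matching the index conventions (the geometric variable is supported on $\{1, 2, \ldots\}$ and the visitation distribution sums $X_{k+1}$ over $k \geq 0$, so the "$+1$" shift lines up the "future" interpretation — note that $T \geq 1$ means $x_0$ itself never contributes, consistent with the footnote in Definition~\ref{def:state-visitation}), and justifying the interchange of the expectation over $T$ with the sum, which is immediate since all terms are nonnegative (Tonelli). I do not anticipate a genuine obstacle here; the main thing to get right is simply stating the conditioning step cleanly so that the independence assumption is visibly used.
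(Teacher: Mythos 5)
Your argument is correct and is essentially the same as the paper's: both condition on $T$, use independence to replace $\mathbb{P}^\pi_{x,a}(X_k = x \mid T = k)$ by $\mathbb{P}^\pi_{x,a}(X_k = x)$, substitute the geometric mass function, and recognise the resulting series as the definition of $\mu^\pi_\gamma(\cdot \mid x, a)$. Your added remarks on the index shift and the (trivial, by nonnegativity) interchange of sums are fine bookkeeping that the paper leaves implicit.
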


This can also be used as a means of \emph{defining} GHMs over more general state spaces $\mathcal{X}$. \citet{janner2020gamma} introduce $\gamma$-models as generative models of these distributions (in this paper, we will call these objects \emph{geometric horizon models} (GHMs)), and propose to use these models for policy evaluation. The approach is based on well-known identities such as the following \citep{toussaint2005,toussaint2006probabilistic}.

\begin{restatable}{proposition}{propBasicEval}\label{prop:basic-eval}
    For any policy $\pi \in \mathscr{P}(\mathcal{A})^\mathcal{X}$, we have
    \begin{align}\label{eq:gamma-model-eval-1}
        Q^\pi_\gamma(x, a) = r(x, a) + \frac{\gamma}{1-\gamma} \mathbb{E}_{X' \sim \mu^\pi_\gamma(\cdot|x, a)}[r^\pi(X')] \, ,
    \end{align}
    where $r^\pi(x) = \sum_{a \in \mathcal{A}}r(x, a) \pi(a|x)$.
\end{restatable}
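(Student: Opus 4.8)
The plan is to expand the definition of $Q^\pi_\gamma(x,a)$ as a discounted sum of expected rewards, peel off the contribution of the initial (fixed) action, and recognise the remaining tail as a rescaled expectation under $\mu^\pi_\gamma(\cdot\mid x,a)$.

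First I would write $Q^\pi_\gamma(x,a) = \mathbb{E}^\pi_{x,a}\bigl[\sum_{t\ge 0}\gamma^t R_t\bigr]$ and split off the $t=0$ term. Since $A_0 = a$ is fixed, $\mathbb{E}^\pi_{x,a}[R_0] = r(x,a)$. For $t\ge 1$, conditioning on $(X_t, A_t)$ gives $\mathbb{E}^\pi_{x,a}[R_t \mid X_t, A_t] = r(X_t, A_t)$, and since $A_t \sim \pi(\cdot\mid X_t)$ for $t\ge 1$, the tower property yields $\mathbb{E}^\pi_{x,a}[R_t] = \mathbb{E}^\pi_{x,a}[r^\pi(X_t)]$, with $r^\pi$ as in the statement. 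Reindexing, $\sum_{t\ge 1}\gamma^t\,\mathbb{E}^\pi_{x,a}[r^\pi(X_t)] = \sum_{k\ge 0}\gamma^{k+1}\,\mathbb{E}^\pi_{x,a}[r^\pi(X_{k+1})] = \tfrac{\gamma}{1-\gamma}\bigl[(1-\gamma)\sum_{k\ge 0}\gamma^k\,\mathbb{E}^\pi_{x,a}[r^\pi(X_{k+1})]\bigr]$, and the bracketed quantity is exactly $\mathbb{E}_{X'\sim\mu^\pi_\gamma(\cdot\mid x,a)}[r^\pi(X')]$ by Definition~\ref{def:state-visitation}. (Equivalently, one can invoke Proposition~\ref{prop:geometric-interpretation}: compute $\mathbb{E}[r^\pi(X_T)]$ for $T\sim\mathrm{Geometric}(1-\gamma)$ directly and multiply by $\tfrac{\gamma}{1-\gamma}$.) Adding back $r(x,a)$ gives the claim.

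The only step needing care is the interchange of the expectation with the infinite sums — both when splitting the return and when pushing the expectation through $\sum_k \gamma^k \mathbb{P}^\pi_{x,a}(X_{k+1}=\cdot)$. This is justified by Fubini/Tonelli, using $\gamma\in[0,1)$ together with the integrability of the one-step rewards (the codomain $\mathscr{P}_1(\mathbb{R})$ of $\mathcal{R}$), which makes the discounted return absolutely convergent in expectation. Everything else is bookkeeping, so I expect no genuine obstacle.
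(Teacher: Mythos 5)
Your proposal is correct and follows essentially the same route as the paper's proof: split off the $t=0$ reward, use the tower property to replace $\mathbb{E}^\pi_{x,a}[R_t]$ by $\mathbb{E}^\pi_{x,a}[r^\pi(X_t)]$ for $t\ge 1$, reindex, and identify the resulting sum with $\mathbb{E}_{X'\sim\mu^\pi_\gamma(\cdot|x,a)}[r^\pi(X')]$, with the interchange of summation justified by absolute convergence (the paper uses boundedness of $r^\pi$ from finiteness of $\mathcal{X}$, which matches your Fubini/Tonelli argument). No gaps.
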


This result then naturally suggests a Monte Carlo estimator that can be used for policy evaluation, given a generative model of the distribution $\mu^\pi_\gamma(\cdot|x, a)$, and the reward function $r$. Specifically, if $X'_1,\ldots,X'_n \overset{\text{i.i.d.}}{\sim} \mu^\pi_\gamma(\cdot|x, a)$, then
\begin{align}\label{eq:gamma-model-simple-estimator}
    r(x, a) + \frac{1}{n} \sum_{i=1}^n \frac{\gamma}{1 - \gamma} r^\pi(X'_i)
\end{align}
is an unbiased estimator for $Q^\pi_\gamma(x, a)$.

Note that this expression requires access to the reward function $r$. This function is known in many applications---often in robotics, for example---and when this is not the case it can be learned as a supervised learning problem. Throughout the paper we will assume that $r$ is either given or has been learned.
Note that \citet{janner2020gamma} implicitly use a reward function that depends solely on the \emph{destination state} $x'$ of the transition, leading to slightly different, less general expressions than those above.

\subsection{Composing geometric horizon models for evaluation of Markov policies}\label{sec:composing-markov}\label{sec:ghm-eval}\label{sec:ghm-markov}

As \citet{janner2020gamma} note, a potential disadvantage of using the identity in Equation~\eqref{eq:gamma-model-eval-1} as the basis for policy evaluation is that it requires learning the object $\mu^\pi_\gamma$. When $\gamma \approx 1$, this distribution corresponds to predictions over long time-scales, and is therefore often more difficult to learn than more local predictions. A central observation of \citet{janner2020gamma} is that expressions such as those in  Equation~\eqref{eq:gamma-model-eval-1} can be re-expressed using a geometric horizon model corresponding to a smaller discount factor, $\beta < \gamma$, and composing this model with itself.

\begin{restatable}{proposition}{propMarkovEval}(\citet{janner2020gamma})\label{prop:compose}\label{prop:markov-eval}
    For any policy $\pi \in \mathscr{P}(\mathcal{A})^\mathcal{X}$, $n \geq 1$, and $0 \leq \beta < \gamma$ an unbiased estimator of $Q^\pi_\gamma(x, a)$ is given by
        \begin{align}
            &r(x, a) + \frac{\gamma}{1-\gamma} \times \label{eq:alt-estimator} \\ &\Bigg\lbrack \! \sum_{m=1}^{n-1} \!\frac{1-\gamma}{1 - \beta}\left(\frac{\gamma - \beta}{1-\beta}\right)^{\!m-1}\!\!\!\!\!\!\!\! r^\pi(X^{(m)})
             \!+\! \left(\frac{\gamma - \beta}{1-\beta}\right)^{\!n-1} \!\!\!\!\! r^\pi(X') \Bigg\rbrack \, ,  \nonumber
        \end{align}
        where $X^{(m)} \sim \mu^\pi_\beta(\cdot|X^{(m\indexminus1)}, A^{(m\indexminus1)})$, $A^{(m)} \sim \pi(\cdot|X^{(m)})$, $(X^{(0)}, A^{(0)})=(x,a)$, and $X' \sim \mu^\pi_\gamma(\cdot|X^{(n\indexminus1)}, A^{(n\indexminus1)})$.
\end{restatable}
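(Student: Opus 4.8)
The plan is to reduce the claim to Proposition~\ref{prop:basic-eval} via a one-step \emph{composition} recursion relating $\mu^\pi_\gamma$ to $\mu^\pi_\beta$, which is then unrolled $n-1$ times. Write $v^\pi(x,a) := \mathbb{E}_{X' \sim \mu^\pi_\gamma(\cdot|x,a)}[r^\pi(X')]$, so that Proposition~\ref{prop:basic-eval} reads $Q^\pi_\gamma(x,a) = r(x,a) + \tfrac{\gamma}{1-\gamma} v^\pi(x,a)$. It therefore suffices to prove that $v^\pi(x,a)$ equals the expectation of the bracketed quantity in Equation~\eqref{eq:alt-estimator}.

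First I would establish the identity
\begin{align*}
    \mu^\pi_\gamma(\cdot|x,a) = \tfrac{1-\gamma}{1-\beta}\, \mu^\pi_\beta(\cdot|x,a) + \tfrac{\gamma-\beta}{1-\beta}\, \mathbb{E}_{X \sim \mu^\pi_\beta(\cdot|x,a),\, A \sim \pi(\cdot|X)}\big[\mu^\pi_\gamma(\cdot|X,A)\big] \, .
\end{align*}
The weights $\tfrac{1-\gamma}{1-\beta}$ and $\tfrac{\gamma-\beta}{1-\beta}$ are nonnegative (using $\beta \le \gamma < 1$) and sum to $1$, so this is a genuine mixture. By Proposition~\ref{prop:geometric-interpretation} the identity is equivalent to a statement purely about horizons: if $T \sim \mathrm{Geometric}(1-\gamma)$, $S \sim \mathrm{Geometric}(1-\beta)$, $T' \sim \mathrm{Geometric}(1-\gamma)$ are independent, then $T$ has the same law as the mixture that is $S$ with probability $\tfrac{1-\gamma}{1-\beta}$ and $S + T'$ with probability $\tfrac{\gamma-\beta}{1-\beta}$. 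I would verify this by matching probability mass functions --- comparing $\gamma^{k-1}(1-\gamma)$ with $\tfrac{1-\gamma}{1-\beta}\beta^{k-1}(1-\beta) + \tfrac{\gamma-\beta}{1-\beta}\sum_{j=1}^{k-1} \beta^{j-1}(1-\beta)\,\gamma^{k-1-j}(1-\gamma)$ and collapsing the finite geometric sum (the hypothesis $\beta < \gamma$ gives the usual closed form). The identity on distributions then follows from the Markov property: conditioning on the state $X_S$ reached after a $\mathrm{Geometric}(1-\beta)$ prefix of the $\pi$-trajectory from $(x,a)$, the continuation is a fresh $\pi$-trajectory from $(X_S, A_S)$ with $A_S \sim \pi(\cdot|X_S)$, so running it for a further independent $T' \sim \mathrm{Geometric}(1-\gamma)$ steps lands in $\mu^\pi_\gamma(\cdot|X_S, A_S)$, again by Proposition~\ref{prop:geometric-interpretation}.

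Taking $\mathbb{E}[r^\pi(\cdot)]$ on both sides gives the scalar recursion $v^\pi(x,a) = \tfrac{1-\gamma}{1-\beta}\,\mathbb{E}_{X^{(1)} \sim \mu^\pi_\beta(\cdot|x,a)}[r^\pi(X^{(1)})] + \tfrac{\gamma-\beta}{1-\beta}\,\mathbb{E}[v^\pi(X^{(1)}, A^{(1)})]$ with $A^{(1)} \sim \pi(\cdot|X^{(1)})$. Applying this in turn to $v^\pi(X^{(1)}, A^{(1)})$, then $v^\pi(X^{(2)}, A^{(2)})$, and so on for $n-1$ iterations, the $m$-th application contributes $\tfrac{1-\gamma}{1-\beta}\big(\tfrac{\gamma-\beta}{1-\beta}\big)^{m-1} \mathbb{E}[r^\pi(X^{(m)})]$ and leaves the residual $\big(\tfrac{\gamma-\beta}{1-\beta}\big)^{n-1}\mathbb{E}[v^\pi(X^{(n-1)}, A^{(n-1)})]$, where $(X^{(m)}, A^{(m)})$ is exactly the chain in the statement. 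Since $\mathbb{E}[v^\pi(X^{(n-1)}, A^{(n-1)})] = \mathbb{E}[r^\pi(X')]$ for $X' \sim \mu^\pi_\gamma(\cdot|X^{(n-1)}, A^{(n-1)})$ by definition of $v^\pi$, the right-hand side is precisely the bracket in Equation~\eqref{eq:alt-estimator}; multiplying by $\tfrac{\gamma}{1-\gamma}$, adding $r(x,a)$, and invoking Proposition~\ref{prop:basic-eval} establishes unbiasedness. The case $n=1$ is the empty-sum base case, where the estimator collapses to Equation~\eqref{eq:gamma-model-simple-estimator}.

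The main obstacle is the first step: identifying the correct mixture decomposition of the geometric horizon (in particular that the splitting probability is $\tfrac{1-\gamma}{1-\beta}$) and justifying the distributional identity rigorously --- care is needed that the prefix-length geometrics are independent of the trajectory and that the action indices line up, i.e.\ that a state drawn from $\mu^\pi_\beta(\cdot|x,a)$ is followed by an action $A \sim \pi(\cdot|X)$ before the next $\mu^\pi_\beta$- or $\mu^\pi_\gamma$-step. Once this identity is in place, the remainder is a routine induction on $n$.
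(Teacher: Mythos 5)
Your proposal is correct. The central identity you isolate --- that $\mathrm{Geometric}(1-\gamma)$ is a $\bigl(\tfrac{1-\gamma}{1-\beta},\tfrac{\gamma-\beta}{1-\beta}\bigr)$-mixture of a single $\mathrm{Geometric}(1-\beta)$ prefix and that prefix plus an independent fresh $\mathrm{Geometric}(1-\gamma)$ tail --- is exactly the $n=2$ instance of the paper's Lemma~\ref{lem:geometric-fixed-sum}, so the underlying decomposition of the horizon is the same. The difference is in how it is deployed: the paper proves the full $n$-term decomposition in one shot, first establishing via probability generating functions that a $\mathrm{Geometric}\bigl(\tfrac{1-\gamma}{1-\beta}\bigr)$-indexed random sum of i.i.d.\ $\mathrm{Geometric}(1-\beta)$ variables is $\mathrm{Geometric}(1-\gamma)$ (Lemma~\ref{lem:geometric-random-sum}), truncating at $n$ via memorylessness, proving the composition Proposition~\ref{prop:compose-distribution} by induction, and then conditioning on the switch count $N'$ to read off all $n$ terms of the bracket at once. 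You instead prove only the two-way mixture by an elementary PMF/finite-geometric-sum calculation and then unroll the resulting scalar recursion $v^\pi = \tfrac{1-\gamma}{1-\beta}\mathbb{E}[r^\pi(X^{(1)})] + \tfrac{\gamma-\beta}{1-\beta}\mathbb{E}[v^\pi(X^{(1)},A^{(1)})]$ for $n-1$ steps. Your route avoids generating functions entirely and makes the Bellman-like recursive structure of the estimator explicit (which is the structure the paper later exploits for GSPs in Theorem~\ref{thm:prop-smp-eval}); the paper's route yields the stronger standalone distributional statement $X^{(n)} \overset{\mathcal{D}}{=} X_{\sum_i T_i}$ about the composed chain, which it reuses elsewhere. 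Both arguments hinge on the same point you correctly flag as the delicate one: the strong-Markov-at-an-independent-random-time step that lets a $\mu^\pi_\beta$-sampled state, followed by $A \sim \pi(\cdot|X)$, serve as a fresh initial condition for the next model application.
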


According to Proposition~\ref{prop:compose}, we can estimate $Q^\pi_\gamma(x, a)$ by sampling the collection of random variables $(X^{(0)}, A^{(0)}, X^{(1)}, \ldots, X^{(n-1)}, A^{(n-1)}, X')$ in the proposition, summarised below:
\begin{equation*}
    \begin{tikzcd}[column sep=small, row sep=scriptsize]
        X^{(0)} \arrow[swap]{r}{\mu^\pi_\beta}  &
        X^{(1)} \arrow[swap]{r}{\mu^\pi_\beta} \arrow[swap]{d}{\pi} &
        X^{(2)} \arrow[swap]{r}{\mu^\pi_\beta} \arrow[swap]{d}{\pi} & 
        \cdots  \arrow[swap]{r}{\mu^\pi_\beta} &
        X^{(n-1)} \arrow[swap]{r}{\mu^\pi_\gamma \ \ } \arrow[swap]{d}{\pi} &
        X' \\%
        A^{(0)} \arrow{ru} &
        A^{(1)} \arrow{ru} &
        A^{(2)} \arrow{ru} &
        \phantom{A^{(2)}} &
        A^{(n-1)} \arrow{ru}
    \end{tikzcd}
\end{equation*}
and then constructing the estimator in Equation~\eqref{eq:alt-estimator}, which the proposition guarantees to be unbiased for $Q^\pi_\gamma(x, a)$; independent estimators can be averaged in the usual manner to reduce variance.

The value of $\beta$ impacts both the mechanics of the process above and the learning of the GHM $\mu^\pi_\beta$ itself. One extreme, $\beta = \gamma$, reverts to the single-sample estimator in Equation~\eqref{eq:gamma-model-simple-estimator}. 
The other extreme, $\beta = 0$, corresponds to estimating the $Q$-function using a single-step transition model
In the first case, predictions are made over potentially long horizons, which alleviates the risk of compounding errors while estimating $Q^\pi_\gamma$. On the other hand, learning the GHM itself becomes more difficult---if we use bootstrapping to do so, as we will discuss shortly, errors might compound when learning $\mu^\pi_\beta$. When $\beta=0$ we observe the opposite trend. In practice, we expect an intermediate value of $\beta$ to offer superior performance to the extremes of $0$ and $\gamma$, since this will trade off errors incurred during the learning of the GHM and the estimation of the $Q$-function \citep{janner2020gamma}. The parameter $n$ offers a trade-off between requiring more compositions of $\mu^\pi_\beta$, and placing a higher weight on samples from the higher-discount, harder-to-train GHM $\mu^\pi_\gamma$.

\section{Composing models for non-Markov policy evaluation}\label{sec:non-markov-eval}

Our first contribution is to extend the estimator appearing in Equation~\eqref{eq:alt-estimator} by modifying the distribution of the random variables $(X^{(0)},A^{(0)}, \ldots, X^{(n-1)}, A^{(n-1)}, X')$ in Proposition~\ref{prop:markov-eval}, composing GHMs for \emph{distinct} policies together.
More precisely, let $(\pi_1,\ldots,\pi_{n})$ be a sequence of policies, $(x, a)$ an initial state action pair, and consider the distribution over state sequences specified by
\begin{equation*}
    \begin{tikzcd}[column sep=small, row sep=scriptsize]
        x_0 \arrow[swap]{r}{\mu^{\pi_1}_\beta}  &
        X^{(1)} \arrow[swap]{r}{\mu^{\pi_2}_\beta} \arrow[swap]{d}{\pi_2} &
        X^{(2)} \arrow[swap]{r}{\mu^{\pi_3}_\beta} \arrow[swap]{d}{\pi_3} & 
        \cdots  \arrow[swap]{r}{\mu^{\pi_{n-1}}_\beta} &
        X^{(n-1)} \arrow[swap]{r}{\mu^{\pi_{n}}_\gamma} \arrow[swap]{d}{\pi_{n}} &
        X' \\%
        a_0 \arrow{ru} &
        A^{(1)} \arrow{ru} &
        A^{(2)} \arrow{ru} &
        \phantom{A^{(2)}} &
        A^{(n-1)} \arrow{ru}
    \end{tikzcd}
\end{equation*}
If we form an expression analogous to Equation~\eqref{eq:alt-estimator}:
\begin{align}\label{eq:multistep-switching-estimator}
    & r(x) + \frac{\gamma}{1-\gamma} \times  \\ &\Bigg\lbrack  \sum_{m=1}^{n-1} \frac{1-\gamma}{1 - \beta}\left(\frac{\gamma - \beta}{1-\beta}\right)^{m-1}\!\!\!\!\!\!\! \bar{r}(X^{(m)})
             + \left(\frac{\gamma - \beta}{1-\beta}\right)^{n-1} \!\! \bar{r}(X') \Bigg\rbrack \nonumber
\end{align}
for some suitable reward function $\bar{r}$, then following the intuition above, we should be able to interpret Expression~\eqref{eq:multistep-switching-estimator} as unbiasedly estimating the value of a (non-Markov) policy that begins each trajectory by following $\pi_1$, before switching to each of $\pi_2, \ldots, \pi_{n\indexminus1}$, and eventually following $\pi_{n}$ for the remainder of the episode. We first formalise this notion of behaviour, and then show that this intuition is correct.

\begin{definition}\label{def:smp}
    Given a sequence $(\pi_1,\ldots,\pi_n)$ of (stationary Markov) policies and a switching probability  $\switchprob \in (0, 1]$,
    the corresponding \emph{\gsp} (\GSP) $\nu$ is a non-Markov policy defined as follows. 
        At the beginning of the episode, the Markov policy $\pi_1$ is followed for  $T_1 \sim \text{Geometric}(\switchprob)$ steps, at which point a switch is made to the Markov policy $\pi_2$. Once a switch from $\pi_i$ to $\pi_{i+1}$ is made, $\pi_{i+1}$ is followed for $T_{i+1} \sim \text{Geometric}(\switchprob)$ steps, at which point the next switching event occurs. Once $\pi_n$ has been selected, it is followed for the remainder of the episode. We write $\pi_1 \overset{\switchprob}{\rightarrow} \cdots \overset{\switchprob}{\rightarrow}\pi_n$ to concisely refer to the \GSP $\nu$. We define  $Q^\nu_\gamma : \mathcal{X} \times \mathcal{A} \rightarrow \mathbb{R}$ for a \GSP $\nu$ by
    \begin{align*}
        Q^\nu_\gamma(x, a) = \mathbb{E}^\nu_{x, a}\Big\lbrack\sum_{t=0}^\infty \gamma^t R_t \Big\rbrack \, ;
    \end{align*}
    precisely, the expectation on the right-hand side is over trajectories beginning at $x$, with actions generated by $\nu$, with the first action overridden to be $a$.
\end{definition}

We now show that the value of \GSPs can be expressed as expectations of expressions such as that in Equation~\eqref{eq:multistep-switching-estimator}.

\begin{mdframed}[style=box]
\begin{restatable}{theorem}{propSMPEval}\label{thm:prop-smp-eval}
    Consider an MDP with reward function $r : \mathcal{X} \rightarrow \mathbb{R}$ and let $\nu = \pi_1 \overset{\switchprob}{\rightarrow} \cdots \overset{\switchprob}{\rightarrow} \pi_n$. With $\beta =  \gamma(1\!-\!\switchprob)$, the following is unbiased for $Q^\nu_\gamma(x, a)$:
    \begin{align}
    & r(x) + \frac{\gamma}{1-\gamma} \times \label{eq:smp-estimator} \\
    & \Bigg\lbrack \! \sum_{m=1}^{n-1} \frac{1-\gamma}{1 - \beta}\!\left(\frac{\gamma - \beta}{1-\beta}\right)^{\!\!m-1}\!\!\!\!\!\!\!\!\! r(X^{(m)})
     \!+ \!\left(\frac{\gamma - \beta}{1-\beta}\right)^{\!\!n-1} \!\!\!\!\!\!r(X') \Bigg\rbrack \, ,  \nonumber
    \end{align}
    where $(X^{(0)}, A^{(0)})=(x, a)$, $X^{(m)} \sim \mu^{\pi_m}_{\beta}(\cdot|X^{(m\indexminus 1)}, A^{(m \indexminus 1)})$, $A^{(m)} \sim \pi_{m+1}(\cdot|X^{(m)})$, $X' \sim \mu^{\pi_{n}}_\gamma(\cdot|X^{(n-1)}, A^{(n-1)})$.
\end{restatable}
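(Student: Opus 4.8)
The plan is to induct on the number of base policies $n$. For the base case $n=1$ the \GSP $\nu$ is simply the Markov policy $\pi_1$, the sum in Equation~\eqref{eq:smp-estimator} is empty, and the estimator collapses to $r(x) + \frac{\gamma}{1-\gamma}r(X')$ with $X' \sim \mu^{\pi_1}_\gamma(\cdot\mid x,a)$. Since the reward depends only on the state, $r^{\pi_1} = r$, so this is exactly the single-sample estimator of Proposition~\ref{prop:basic-eval} (equivalently, the $n=1$ instance of Proposition~\ref{prop:markov-eval}) and is therefore unbiased for $Q^{\pi_1}_\gamma(x,a) = Q^\nu_\gamma(x,a)$.

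For the inductive step, write $\nu = \pi_1 \overset{\switchprob}{\rightarrow} \nu'$ with $\nu' = \pi_2 \overset{\switchprob}{\rightarrow} \cdots \overset{\switchprob}{\rightarrow} \pi_n$, and let $T_1 \sim \text{Geometric}(\switchprob)$ be the (trajectory-independent) time of the first switch. By the definition of a \GSP, conditional on $T_1 = k$ the trajectory follows $\pi_1$ up to time $k$ and then behaves exactly like $\nu'$ started afresh from $X_k$; hence, by the Markov property and the independence of $T_1$ from the trajectory,
\begin{align*}
Q^\nu_\gamma(x,a) = \mathbb{E}^{\pi_1}_{x,a}\Big[\textstyle\sum_{t=0}^{T_1-1} \gamma^t r(X_t)\Big] + \mathbb{E}^{\pi_1}_{x,a}\big[\gamma^{T_1} Q^{\nu'}_\gamma(X_{T_1}, A_{T_1})\big] ,
\end{align*}
with $A_{T_1} \sim \pi_2(\cdot\mid X_{T_1})$. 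Using $\mathbb{P}(T_1 = k) = (1-\switchprob)^{k-1}\switchprob$, the definition of $\mu^{\pi_1}_\beta$ (Definition~\ref{def:state-visitation}), and $\beta = \gamma(1-\switchprob)$ --- so that $\gamma - \beta = \switchprob\gamma$ --- each term rewrites via a single draw $X^{(1)} \sim \mu^{\pi_1}_\beta(\cdot\mid x,a)$:
\begin{align*}
Q^\nu_\gamma(x,a) = r(x) + \tfrac{\beta}{1-\beta}\,\mathbb{E}\big[r(X^{(1)})\big] + \tfrac{\gamma-\beta}{1-\beta}\,\mathbb{E}\big[Q^{\nu'}_\gamma(X^{(1)}, A^{(1)})\big] ,
\end{align*}
with $A^{(1)} \sim \pi_2(\cdot\mid X^{(1)})$.

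It then remains to show the estimator $\hat{Q}_n$ of Equation~\eqref{eq:smp-estimator} satisfies the same recursion in expectation. The key observation is that, conditional on $(X^{(1)}, A^{(1)})$, the tail $(X^{(1)}, X^{(2)}, \ldots, X^{(n-1)}, X')$ of the length-$n$ chain has exactly the law of the chain that defines the $(n-1)$-policy estimator for $\nu'$ started from $(X^{(1)}, A^{(1)})$ (relabel $\pi_{i+1}$ as the $i$-th base policy; every interior composition still uses the base discount $\beta$ and only the last uses $\gamma$, as prescribed). Splitting the $m=1$ term off the sum in Equation~\eqref{eq:smp-estimator} and pulling a factor $\tfrac{\gamma-\beta}{1-\beta}$ out of the remainder --- using $\tfrac{1-\gamma}{1-\beta} + \tfrac{\gamma-\beta}{1-\beta} = 1$ --- gives the algebraic identity
\begin{align*}
\hat{Q}_n = r(x) + \tfrac{\beta}{1-\beta}\, r(X^{(1)}) + \tfrac{\gamma-\beta}{1-\beta}\,\hat{Q}_{n-1}\big(X^{(1)}, A^{(1)}; \pi_2, \ldots, \pi_n\big) .
\end{align*}
Taking expectations, invoking the inductive hypothesis $\mathbb{E}[\hat{Q}_{n-1} \mid X^{(1)}, A^{(1)}] = Q^{\nu'}_\gamma(X^{(1)}, A^{(1)})$, and comparing with the recursion for $Q^\nu_\gamma$ above closes the induction.

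The probabilistic step --- conditioning at the switch time --- is routine, since $T_1$ is independent of the trajectory and each switch resets the Geometric clock. The main obstacle is the bookkeeping behind the key observation: one must check carefully that shifting the time index by one aligns the GHM discount factors ($\beta$ on every intermediate composition, $\gamma$ only on the final one) and the policy labels, so the tail of the length-$n$ chain really is a length-$(n-1)$ chain for $\nu'$. Once that is in place, only the elementary algebra verifying that the geometric-series coefficients from the first switch coincide with those in Equation~\eqref{eq:smp-estimator} under $\beta = \gamma(1-\switchprob)$ remains.
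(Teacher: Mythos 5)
Your proof is correct and follows essentially the same route as the paper's: an induction on $n$ that conditions at the first geometric switch time $T_1$ and exploits memorylessness together with the identity $\gamma-\beta=\switchprob\gamma$ to express the pre- and post-switch contributions through a single draw from $\mu^{\pi_1}_\beta$. The only organisational difference is that you match a Bellman-style recursion satisfied by both $Q^\nu_\gamma$ and the estimator $\hat{Q}_n$, whereas the paper decomposes $\mathbb{E}^\nu_{x,a}[r(X_T)]$ directly on the events $\{T\le T_1\}$ and $\{T>T_1\}$ and assembles the final formula; the underlying computations coincide.
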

\end{mdframed}

\begin{figure*}[t]
    \centering
    \includegraphics[width=.93\textwidth]{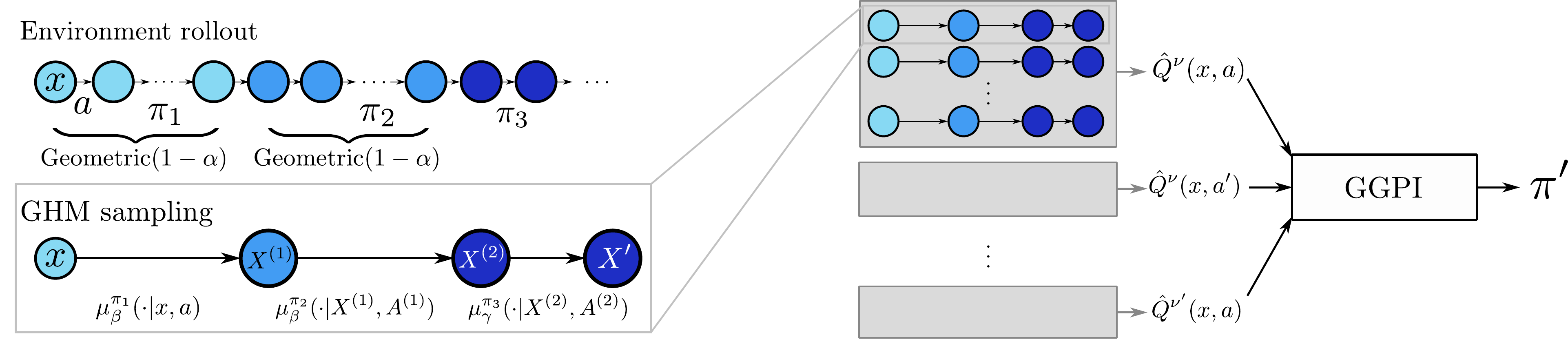}
    \caption{
    \textbf{Left:} A rollout of an example GSP $\nu = \pi_1 \overset{\alpha}{\rightarrow} \pi_2 \overset{\alpha}{\rightarrow} \pi_3$ in the environment, and the GHM sampling procedure that can be used to unbiasedly estimate the value of this policy via Equation~\eqref{eq:smp-estimator}.
    \textbf{Right:} The \ggpi framework. Using the GHM sampling procedure, the action-values of $\nu$ and other GSPs are estimated, and fed into the GPI routine to obtain an improved policy $\pi'$.}
    \label{fig:illustration}
\end{figure*}

We state the result in the case where the reward depends only on state for conciseness here; the slightly more complex formula that incorporates action dependence is given in Appendix~\ref{sec:appendix:extensions}.
The key insight is therefore that we can get an unbiased estimate of the Q-function $Q^\nu_\gamma$ associated with a \gsp $\nu = \pi_1 \overset{\switchprob}{\rightarrow} \cdots \overset{\switchprob}{\rightarrow} \pi_n$ just using the models $\mu^{\pi_i}_{\beta}$ ($i=1,\ldots,n-1$) and $\mu^{\pi_n}_\gamma$ for the base policies. In particular, if we learn these models to evaluate the base policies, we can evaluate all \GSPs arising from these base policies without any additional learning.

\section{Generalised policy improvement with \gsps}\label{sec:nmgpi}

The ability to evaluate a large number of \GSPs without additional learning opens up the possibility of using GPI to improve upon all these policies at once.
Having established how to evaluate \GSPs using GHMs for Markov base policies, the main contribution of this section is to extend GPI to allow for the inclusion of \GSPs into the improvement set. Algorithmically, this is straightforward; the same definition in Equation~\eqref{eq:gpi} can be immediately applied to the Q-functions of \gsps. Note that when applying GPI to the Q-functions of non-Markov \GSPs, the returned greedy  policies are still Markov; this desirable property allows us to embed the proposed approach into the usual RL loop for policy iteration, as discussed below.

What is not immediately clear is whether an improvement guarantee analogous to Proposition~\ref{prop:gpi} still applies when using the Q-functions of \gsps. It turns out, under certain conditions, we can recover such a result. To do so, we need a certain notion of `closedness' amongst the policies to be improved upon.

\begin{definition}\label{def:suffix-closed}
    A collection $\gspcollection$ of \GSPs is \emph{suffix-closed} if whenever $n > 1$ and $\pi_1 \overset{\switchprob}{\rightarrow} \cdots \overset{\switchprob}{\rightarrow}\pi_n$ lies in $\gspcollection$, the suffix policy $\pi_2 \overset{\switchprob}{\rightarrow} \cdots \overset{\switchprob}{\rightarrow}\pi_n$ also lies in $\gspcollection$.
\end{definition}

\begin{mdframed}[style=box]
\begin{restatable}{theorem}{propnmgpi}\label{prop:gnmpi}\label{prop:open-loop-improve}
    Consider a suffix-closed collection of \GSPs $\gspcollection$. Then if $\pi' \in \mathcal{G}(\gspcollection)$, we have
    \begin{align*}
        Q^{\pi'}_\gamma(x,a) \geq \max_{\nu \in \gspcollection} Q^{\nu}_\gamma(x, a) \, , \  \text{ for all } (x, a) \in \mathcal{X} \times \mathcal{A} \, .
    \end{align*}
    Further, if equality holds for all state-action pairs, then $\pi'$ is optimal. 
\end{restatable}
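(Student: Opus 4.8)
The plan is to follow the classical GPI argument of \citet{barreto2016successor} (cf.\ Proposition~\ref{prop:gpi}), supplemented by one new ingredient tailored to \GSPs: a one-step Bellman-type recursion expressing the value of a \GSP in terms of the value of its suffix. Concretely, for $\nu = \pi_1 \overset{\switchprob}{\rightarrow}\cdots\overset{\switchprob}{\rightarrow}\pi_n$ with suffix $\nu' = \pi_2 \overset{\switchprob}{\rightarrow}\cdots\overset{\switchprob}{\rightarrow}\pi_n$ (with the convention $\nu' := \nu$ when $n=1$), and writing $V^\sigma_\gamma(x)$ for the state-value of a \GSP $\sigma$ (the average of $Q^\sigma_\gamma(x,\cdot)$ over the first action drawn from the first base policy of $\sigma$), I would first establish
\begin{align*}
Q^\nu_\gamma(x,a) = r(x,a) + \gamma\, \mathbb{E}_{X_1 \sim P(\cdot|x,a)}\big[ \switchprob\, V^{\nu'}_\gamma(X_1) + (1-\switchprob)\, V^\nu_\gamma(X_1) \big] \, .
\end{align*}
This comes from conditioning on the first switching time $T_1 \sim \text{Geometric}(\switchprob)$ in Definition~\ref{def:smp}: after collecting $r(x,a)$ and transitioning to $X_1$, on $\{T_1 = 1\}$ the agent proceeds from $X_1$ exactly as a fresh instance of $\nu'$, while on $\{T_1 \ge 2\}$ memorylessness of the geometric distribution means it proceeds from $X_1$ exactly as a fresh instance of $\nu$; the residual expected discounted returns are then $V^{\nu'}_\gamma(X_1)$ and $V^{\nu}_\gamma(X_1)$ respectively. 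I expect the bookkeeping around the overridden first action and the geometric-support convention to be the fiddliest part of the whole argument, though it is ultimately routine.

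Next I would carry out the one-step improvement step. Put $Q_{\max} := \max_{\nu \in \gspcollection} Q^\nu_\gamma$ (a finite pointwise maximum), and let $T^{\pi'}$ denote the Bellman evaluation operator of the Markov policy $\pi'$. For any $(x,a)$ and any $\nu \in \gspcollection$, using that $\pi'$ is greedy with respect to $Q_{\max}$ and then the recursion above,
\begin{align*}
T^{\pi'} Q_{\max}(x,a) &= r(x,a) + \gamma\, \mathbb{E}_{X_1}\big[ \max_{a'} Q_{\max}(X_1, a') \big] \\
&\ge r(x,a) + \gamma\, \mathbb{E}_{X_1}\big[ \switchprob\, V^{\nu'}_\gamma(X_1) + (1-\switchprob)\, V^\nu_\gamma(X_1) \big] = Q^\nu_\gamma(x,a) \, ,
\end{align*}
where the inequality holds because $\max_{a'} Q_{\max}(X_1,a') \ge \max_{a'} Q^\sigma_\gamma(X_1,a') \ge V^\sigma_\gamma(X_1)$ for $\sigma \in \{\nu,\nu'\}$. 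This is exactly where suffix-closedness is used: it guarantees $\nu' \in \gspcollection$, so that $Q_{\max} \ge Q^{\nu'}_\gamma$ pointwise and the bound applies with $\sigma = \nu'$ as well as $\sigma = \nu$. Taking the maximum over $\nu \in \gspcollection$ gives $T^{\pi'} Q_{\max} \ge Q_{\max}$ pointwise.

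To conclude, monotonicity of $T^{\pi'}$ gives $(T^{\pi'})^k Q_{\max} \ge Q_{\max}$ for every $k \ge 1$, and since $T^{\pi'}$ is a $\gamma$-contraction with fixed point $Q^{\pi'}_\gamma$, letting $k \to \infty$ yields $Q^{\pi'}_\gamma \ge Q_{\max} = \max_{\nu \in \gspcollection} Q^\nu_\gamma$, which is the first claim. For the equality case, assume $Q^{\pi'}_\gamma = Q_{\max}$ everywhere; then, again using that $\pi'$ is greedy with respect to $Q_{\max} = Q^{\pi'}_\gamma$,
\begin{align*}
Q^{\pi'}_\gamma(x,a) = r(x,a) + \gamma\, \mathbb{E}_{X_1}\Big[ \textstyle\sum_{a'}\pi'(a'|X_1) Q^{\pi'}_\gamma(X_1,a') \Big] = r(x,a) + \gamma\, \mathbb{E}_{X_1}\big[ \max_{a'} Q^{\pi'}_\gamma(X_1,a') \big] \, ,
\end{align*}
so $Q^{\pi'}_\gamma$ solves the Bellman optimality equation; by uniqueness of its fixed point $Q^{\pi'}_\gamma = Q^\ast_\gamma$, and hence $\pi'$ is optimal. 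Overall this is a compact extension of the standard GPI proof, with the suffix recursion and the suffix-closedness hypothesis supplying precisely what is needed to accommodate the non-Markov switching policies (and recovering Proposition~\ref{prop:gpi} when $\gspcollection$ consists only of Markov base policies).
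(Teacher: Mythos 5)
Your proof is correct and follows essentially the same route as the paper's: the same suffix Bellman recursion for \GSPs (with suffix-closedness guaranteeing $Q_{\max}\ge Q^{\nu'}_\gamma$), the same bound $Q_{\max}\le T^{\pi'}Q_{\max}$ from greediness, the same monotonicity-plus-contraction limit, and the same Bellman-optimality argument for the equality case. The only cosmetic difference is that you fold the Markov case into the recursion via the convention $\nu'=\nu$, whereas the paper treats it separately.
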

\end{mdframed}

We refer to the procedure of computing $\pi' \in \mathcal{G}(\Pi)$ for a set of \GSPs $\Pi$ as \emph{geometric generalised policy improvement} (\ggpi). A rigorous proof of Theorem~\ref{prop:open-loop-improve} is given in Appendix~\ref{sec:appendix:proofs}, but for some intuition for the suffix-closed condition, consider the two possibilities after a single step of executing $\nu = \pi_1 \overset{\switchprob}{\rightarrow} \cdots \overset{\switchprob}{\rightarrow} \pi_n$: either the first switch has not occurred (in which case it is as though we execute $\nu$ from scratch from the next time step), or the switch has occurred, in which case it is as though we execute the suffix policy $\nu' = \pi_2 \overset{\switchprob}{\rightarrow} \cdots \overset{\switchprob}{\rightarrow} \pi_n$ from the next time step. In fact, this observation yields a Bellman equation
\begin{align*}
    & Q^\nu_\gamma(x, a) = r(x, a) + \\
    & \quad\gamma \mathbb{E}_{\substack{X' \sim P(\cdot|x, a) \\ A_1 \sim \pi_1(\cdot|X') \\ A_2 \sim \pi_2(\cdot|X')}}[(1-\alpha)Q^\nu_\gamma(X', A_1) + \alpha Q^{\nu'}_\gamma(X', A_2)] \, .
\end{align*}
Thus, the suffix-closedness condition is a way of ensuring we can reason about both of these possibilities within the \ggpi process.
Perhaps surprisingly, the suffix-closedness condition in Theorem~\ref{prop:gnmpi} really is necessary; some care needs to be taken when applying the ideas associated with GPI to non-Markov policies. A counterexample when the closure condition is removed is provided in Appendix~\ref{sec:gnmpi-counterexamples}, along with several other examples.

In summary, GHM policy evaluation and \ggpi allow us to derive Markov policies that improve over a wide range of \GSPs, while only requiring learnt GHMs for the base Markov policies under consideration; see Figure~\ref{fig:illustration}.

\section{Applications: transfer and policy iteration}\label{sec:applications_description}

We now detail two central applications of GHM evaluation and \ggpi to reinforcement learning.

\subsection{Transfer and zero-shot learning}\label{sec:transfer}

In the transfer setting, we have a collection of known policies $\pi_1,\ldots,\pi_k$, and a reward function $r$ for which we wish to find a good policy. The policies $\pi_{1:k}$ may have been obtained in a variety of ways: learnt by maximising other reward signals, exploration objectives, from imitation learning, etc. The reward function $r$ is assumed to either be known (as is common in many robotics applications, for example), or learnt from data.

One simple approach to implementing GPI is to learn GHMs $(\mu_\gamma^{\pi_i})_{i=1}^k$, and use these in combination with the given reward function $r$ to estimate $Q^{\pi_1},\ldots,Q^{\pi_k}$, and perform generalised policy improvement over these Q-functions, as justified by Proposition~\ref{prop:gpi}.

With the concepts introduced above, we can improve on this by additionally learning GHMs $(\mu_\beta^{\pi_i})_{i=1}^k$, composing these to evaluate a collection of \GSPs, and then using the \ggpi procedure to improve over all such switching policies.
A pseudocode summary of the approach is provided in Appendix~\ref{sec:appendix:transfer-alg}.
Given a base set $\Pi = \{\pi_1,\ldots,\pi_k\}$ of Markov policies and a switching probability, we can define a variety of different sets of GSPs. A natural choice to consider, which we adopt in the experiments, is \emph{the set of depth-$m$ compositions}, $\Pi_m = \{ \pi^{(1)} \overset{\alpha}{\rightarrow} \cdots \overset{\alpha}{\rightarrow} \pi^{(m)} : \pi^{(1)}, \ldots, \pi^{(m)} \in \Pi \}$, consisting of all \GSPs that switch between exactly $m$ (not necessarily distinct) base policies.
We refer to \ggpi on $\Pi_m$ as \emph{depth-$m$ \ggpi}. The following result shows that \ggpi over $\Pi_m$ guarantees an improvement, thanks to Theorem~\ref{prop:open-loop-improve}.

\begin{restatable}{proposition}{propPiMClosed}\label{prop:pimclosed}
    $\Pi_m$ is suffix-closed.
\end{restatable}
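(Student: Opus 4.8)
The plan is a direct verification from the definitions. Fix a \GSP $\nu \in \Pi_m$ that is built from $n > 1$ base policies; since by construction every element of $\Pi_m$ is a composition of exactly $m$ base policies, we must have $n = m$, and in particular $m > 1$. Write $\nu = \pi^{(1)} \overset{\alpha}{\rightarrow} \cdots \overset{\alpha}{\rightarrow} \pi^{(m)}$ with each $\pi^{(i)} \in \Pi$; its suffix is $\nu' = \pi^{(2)} \overset{\alpha}{\rightarrow} \cdots \overset{\alpha}{\rightarrow} \pi^{(m)}$, a composition of $m-1$ base policies. The apparent issue is that $\nu'$ has depth $m-1$, not $m$, so it is not literally one of the tuples listed in the definition of $\Pi_m$; the crux is therefore to exhibit a depth-$m$ composition of base policies that induces the same policy as $\nu'$.

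To that end I would record the elementary observation that appending a duplicate of the \emph{final} base policy to any \GSP is a no-op: by Definition~\ref{def:smp}, once the last policy in the list has been selected it is followed for the remainder of the episode, so inserting an extra ``switch'' from $\pi^{(m)}$ to $\pi^{(m)}$ does not alter the distribution over trajectories from any initial state--action pair, and hence does not alter the associated Q-function. Consequently $\nu'$ coincides, as a non-Markov policy (in particular in $Q_\gamma^{\nu'}$), with $\tilde\nu' = \pi^{(2)} \overset{\alpha}{\rightarrow} \cdots \overset{\alpha}{\rightarrow} \pi^{(m)} \overset{\alpha}{\rightarrow} \pi^{(m)}$, which is a composition of exactly $m$ base policies and therefore lies in $\Pi_m$. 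Since suffix-closedness --- and its downstream use in Theorem~\ref{prop:open-loop-improve} --- concerns the policies (equivalently, their Q-functions) rather than a particular tuple representation, this shows the suffix of $\nu$ lies in $\Pi_m$, as required. The case $m = 1$ is vacuous, since suffix-closedness imposes a condition only when $n > 1$.

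I would flag one cautionary point that makes the argument slightly more delicate than it first looks: padding is a no-op only at the \emph{tail}. Inserting a duplicate of an intermediate policy $\pi^{(i)}$ with $i < m$ genuinely changes the policy, because the sojourn time before the next switch becomes a sum of two independent geometrics rather than one; so one cannot simply pad at the front to lift a depth-$(m-1)$ composition to depth $m$ while preserving behaviour. The proof thus hinges precisely on the asymmetry that the final policy of a \GSP is ``absorbing'', which is exactly what allows the depth-$(m-1)$ suffix to be re-expressed at depth $m$. Beyond this observation I do not anticipate any obstacle; the remainder is a one-line check.
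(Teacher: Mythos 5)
Your proof is correct and uses exactly the same key idea as the paper's: the depth-$(m-1)$ suffix is re-expressed as a depth-$m$ composition by appending a duplicate of the final base policy, which is a no-op because the last policy of a \GSP is followed forever once selected. Your additional remark that padding only works at the tail (not at intermediate positions, where it would turn one geometric sojourn into a sum of two) is a correct and worthwhile clarification, but the argument itself matches the paper's.
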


\begin{example}
    Figure~\ref{fig:four-rooms} illustrates an example experiment in the four-rooms environment \citep{sutton1999between}, with a single positive reward at the top-right-most cell, and $\gamma = 0.9$. We consider four base policies $\pi_\text{L}, \pi_\text{D}, \pi_\text{R}, \pi_\text{U}$ that always take the action left/down/right/up in each cell. GHMs are calculated for these policies with discounts $\gamma$ and $\beta = 0.8$. By using \ggpi over \GSPs that make switches between these basic policies, the optimal policy can be recovered in almost all states of the environment, without any additional learning. Figure~\ref{fig:four-rooms} illustrates in which states the optimal policy can be computed when using GPI over the four base policies (left), depth-2 \ggpi (centre), and depth-3 \ggpi (right). Depth-3 \ggpi is able to compute the optimal action in the vast majority of environment states.
\end{example}

\begin{figure}[ht]
    \centering
    \null
    \hfill
    \includegraphics[keepaspectratio,width=.15\textwidth]{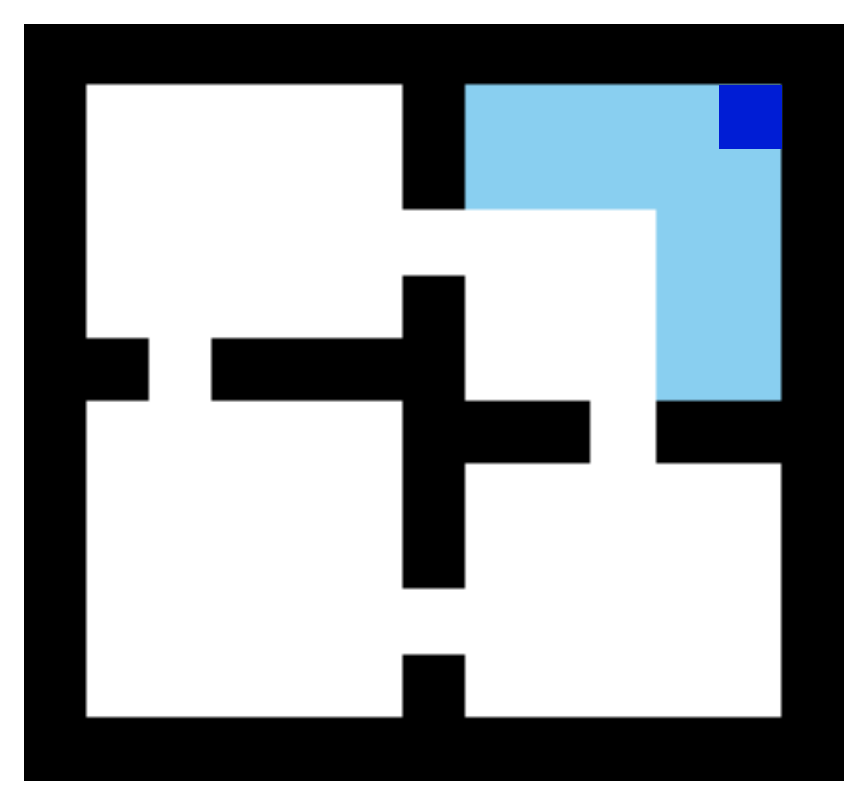}
    \hfill
    \includegraphics[keepaspectratio,width=.15\textwidth]{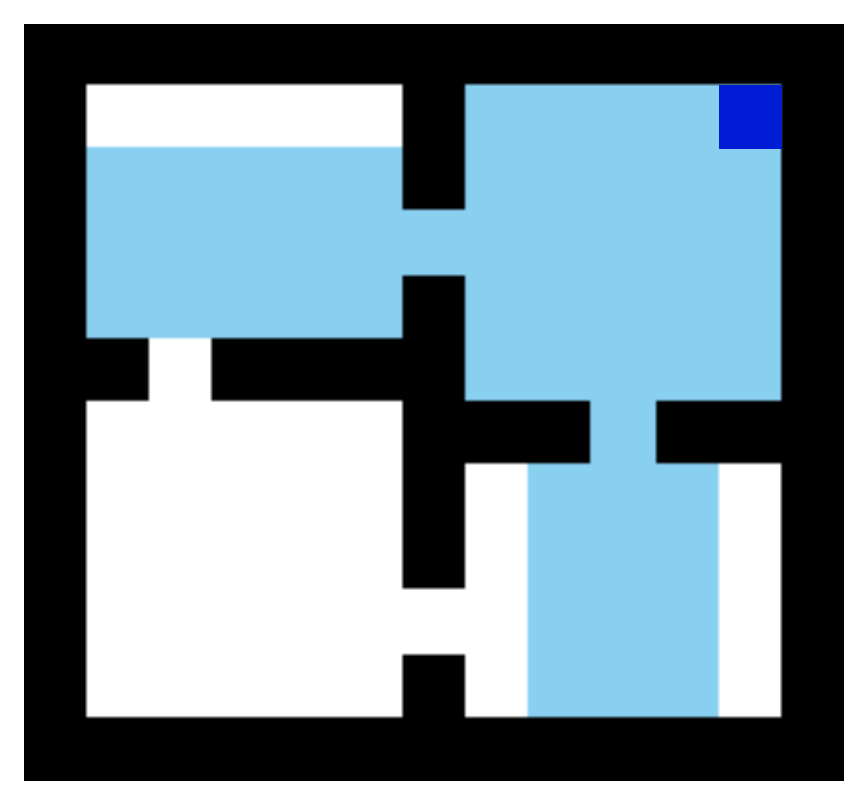}
    \hfill
    \includegraphics[keepaspectratio,width=.15\textwidth]{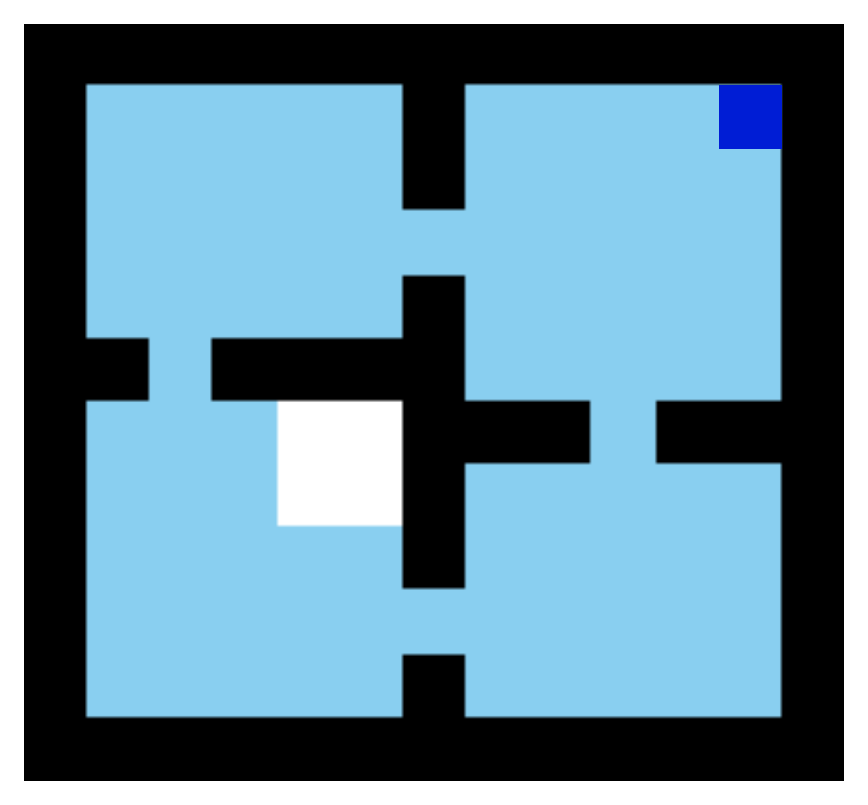}
    \hfill
    \null
    \caption{An illustration of the \ggpi method on the four-rooms environment, with goal state indicated in dark blue.
    The plots illustrate which states (highlighted in light blue) each planning method is able to compute the optimal action for: GPI (left), depth-2 \ggpi (centre), and depth-3 \ggpi (right).}
    \label{fig:four-rooms}
\end{figure}

\subsection{Policy iteration}

Policy iteration is a classical dynamic programming algorithm that computes a sequence of policies $(\pi_k)_{k\geq 0}$ through an iterative process of evaluation and greedy improvement, i.e. $\pi_k \in \mathcal{G}(Q^{\pi_{k\indexminus1}})$, which is guaranteed to reach the optimal policy in a finite number of iterations (for environments with finite state space). A natural question is whether we can use GPI to speed up this iterative process, by leveraging policies from previous iterations to compute even stronger improved policies, e.g. $\pi_{k} \in \mathcal{G}(\pi_{0:k\indexminus1})$. Unfortunately, when using standard GPI the answer to this question is ``no''; since $Q^{\pi_{k-1}} \geq Q^{\pi_l}$ for $l < k-1$, GPI over $\pi_{0:k\indexminus1}$ reduces to standard policy improvement over $\pi_{k-1}$.

However, using \ggpi may enable leveraging policies from older iterations to make larger improvement steps and converge to $\pi^*$ more quickly, for example performing \ggpi over all depth-$m$ compositions over the set of previous policies $\{\pi_0, \ldots \pi_{k-1}\}$.
This has the advantage that any useful behaviour encoded by a prior policy that gets prematurely overwritten by subsequent iterations can still be leveraged to make larger improvement steps.
Appendix~\ref{sec:appendix:policy_iteration_details} contains algorithm pseudocode for applying \ggpi to policy iteration, as well as an illustrative example.

\begin{example}
    In Figure~\ref{fig:four_rooms_policy_iteration} we demonstrate the advantage of using \ggpi for policy iteration in the classic four-rooms environment.
    The number of improvement steps decreases with the \ggpi depth, indicating that the \ggpi improvement step is able to compute stronger improved policies the more past knowledge it is allowed to leverage.
    Note however that although higher depths require fewer iterations, each improvement step is more computationally intensive for higher-depth \ggpi; 
    in this instance, depth-2 \ggpi obtains the optimal trade-off between computational burden and strength of policy improvement, finding the optimal policy with the lowest total number of GHM samples.
    Here we solve the problem for a discount factor of $\gamma=0.95$, switching probability $\alpha=0.1$, compute perfect GHMs obtained using knowledge of the true environment dynamics, and evaluate each \GSP $\nu$ using $1000$ samples from the composed GHM $\mu^{\nu}_\gamma$.
\end{example}

\begin{figure}[ht]
    \centering
    \includegraphics[width=0.47\textwidth]{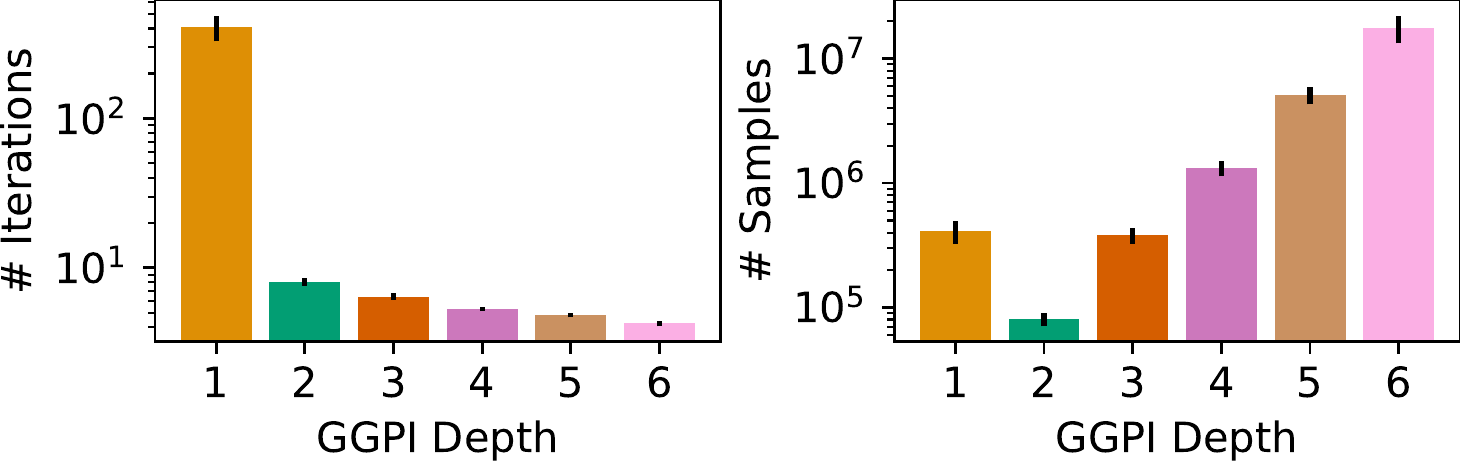}
    \caption{Comparison of \ggpi with various planning depths applied to policy iteration on the standard 4-rooms domain, starting from a random policy.
    \textbf{Left:} Total number of iterations of policy iteration required;
    \textbf{Right:} Total number of GHM samples required, as a proxy to total computation performed.
    Error bars show bootstrapped 95\% confidence intervals over 100 seeds.}
    \label{fig:four_rooms_policy_iteration}
\end{figure}

\section{Learning geometric horizon models}
\label{sec:learning_ghm}

To use geometric horizon models for value estimation in practice, an important question is how to learn such models in the first place. An instructive starting point is to consider supervised learning with samples from $\mu^\pi_\beta$ (obtained by sampling $X_T$ with $T \sim \text{Geometric}(1-\beta)$ by interacting with the environment using $\pi$, for example). The canonical cross-entropy loss can then be used to train a GHM $\mu$, leading to the \emph{cross-entropy Monte Carlo} (CEMC) loss:
\begin{align}\label{eq:cemc}
    \mathbb{E}_{X' \sim \mu^\pi_\beta(\cdot|x, a)}[-\log \mu(X'|x, a)] \, .
\end{align}
As with Monte Carlo learning in value-based RL, this approach is typically difficult to apply with off-policy data, incurring either bias, or potentially high variance updates from off-policy corrections \citep{precup2000eligibility}.
An alternative approach can be motivated by the observation that $\mu^\pi_\beta$ satisfies a Bellman equation involving composed models.

\begin{definition}[Composed geometric horizon models]
    Given two GHMs $\mu_1, \mu_2 \in \mathscr{P}(\mathcal{X})^{\mathcal{X} \times \mathcal{A}}$, and a policy $\pi \in \mathscr{P}(\mathcal{A})^\mathcal{X}$, the \emph{composed model} $\mu_2 \otimes_\pi \mu_1 \in \mathscr{P}(\mathcal{X})^{\mathcal{X} \times \mathcal{A}}$ is the distribution of the random variable $X^{(2)}$, defined by
    \begin{itemize}
        \item $X^{(1)} \sim \mu_1(\cdot|x, a)$,
        \item $A^{(1)}\mid X^{(1)} \sim \pi(\cdot|X^{(1)})$,
        \item $X^{(2)} \mid (X^{(1)}, A^{(1)}) \sim \mu_2(\cdot|X^{(1)}, A^{(1)})$,
    \end{itemize}
    The distributions satisfy the relationship
    \begin{align*}
        (\mu_2 \!\otimes_\pi\! \mu_1)(y|x, a)\! = \!\!
        \sum_{x',a'}\! \mu_1(x'|x, a) \pi(a'|x') \mu_2(y|x', a') \, .
    \end{align*}
\end{definition}

\begin{restatable}{proposition}{propBellman}\label{prop:bellman}
    Defining the Bellman operator $T^\pi_\beta : \mathscr{P}(\mathcal{X})^{\mathcal{X}\times \mathcal{A}} \rightarrow \mathscr{P}(\mathcal{X})^{\mathcal{X}\times \mathcal{A}}$ by
    \begin{align*}
        (T^\pi_\beta \mu)(x'|x, a)\! = \! (1\!-\!\beta) P(x'|x, a)\! +\! \beta ( \mu\! \otimes_\pi\! P)(x'|x, a) \, ,
    \end{align*}
    then $\mu^\pi_\beta$ is the unique solution to $\mu = T^\pi_\beta \mu$.
\end{restatable}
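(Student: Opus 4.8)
The plan is to prove the two assertions separately: first that $\mu^\pi_\beta$ is a fixed point of $T^\pi_\beta$, and then that $T^\pi_\beta$ is a contraction, so that the fixed point is unique.

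For the fixed-point claim I would start from the defining series $\mu^\pi_\beta(x|x_0,a_0) = (1-\beta)\sum_{k \geq 0}\beta^k \mathbb{P}^\pi_{x_0,a_0}(X_{k+1}=x)$ and isolate the $k=0$ term, which is exactly $(1-\beta)P(x|x_0,a_0)$. For the tail $k \geq 1$, condition on the first transition: by the Markov property, $\mathbb{P}^\pi_{x_0,a_0}(X_{k+1}=x) = \sum_{x_1,a_1}P(x_1|x_0,a_0)\pi(a_1|x_1)\mathbb{P}^\pi_{x_1,a_1}(X_k=x)$. Substituting this, pulling out a factor of $\beta$, exchanging the sums (all terms nonnegative), and re-indexing $j=k-1$ turns the inner sum into $(1-\beta)\sum_{j\geq 0}\beta^j\mathbb{P}^\pi_{x_1,a_1}(X_{j+1}=x) = \mu^\pi_\beta(x|x_1,a_1)$. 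Comparing with the definition of $\otimes_\pi$ (taking $\mu_1 = P$ and $\mu_2 = \mu^\pi_\beta$), the tail is precisely $\beta(\mu^\pi_\beta \otimes_\pi P)(x|x_0,a_0)$, so that $\mu^\pi_\beta = T^\pi_\beta \mu^\pi_\beta$.

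For uniqueness I would first note that $T^\pi_\beta$ maps $\mathscr{P}(\mathcal{X})^{\mathcal{X}\times\mathcal{A}}$ into itself: each $(T^\pi_\beta\mu)(\cdot|x,a)$ is a convex combination of the probability distributions $P(\cdot|x,a)$ and $(\mu\otimes_\pi P)(\cdot|x,a)$, the latter being a mixture of the $\mu(\cdot|x',a')$. Then, for any two collections $\mu,\mu'$, the constant term cancels in the difference, leaving
\[
(T^\pi_\beta\mu - T^\pi_\beta\mu')(x'|x,a) = \beta\sum_{x'',a''}P(x''|x,a)\pi(a''|x'')\big(\mu(x'|x'',a'') - \mu'(x'|x'',a'')\big),
\]
which is $\beta$ times a convex combination of entries of $\mu - \mu'$; hence $\|T^\pi_\beta\mu - T^\pi_\beta\mu'\|_\infty \leq \beta\|\mu-\mu'\|_\infty$, where $\|\cdot\|_\infty$ is the supremum over indices $(x,a,x')$. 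Since $\beta < \gamma < 1$ and $\mathscr{P}(\mathcal{X})^{\mathcal{X}\times\mathcal{A}}$ is a closed subset of a finite-dimensional space, hence complete, the Banach fixed point theorem yields a unique fixed point (equivalently, two fixed points $\mu,\mu'$ would satisfy $\|\mu-\mu'\|_\infty \leq \beta\|\mu-\mu'\|_\infty$, forcing $\mu = \mu'$). Combined with the first part, $\mu^\pi_\beta$ is the unique solution of $\mu = T^\pi_\beta\mu$.

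I do not anticipate a serious obstacle; the only steps demanding care are the interchange of the sum over $k$ with the sum over the first transition (handled by nonnegativity of all summands) and the bookkeeping in the re-indexing that reconstructs $\mu^\pi_\beta(x|x_1,a_1)$ from the shifted series. The contraction argument could equally be run in the supremum-total-variation metric on $\mathscr{P}(\mathcal{X})^{\mathcal{X}\times\mathcal{A}}$ rather than the $\ell_\infty$ norm, with an identical estimate.
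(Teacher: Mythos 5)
Your proposal is correct and follows essentially the same route as the paper: the fixed-point identity is obtained by splitting off the first term of the defining series and conditioning on the first transition via the Markov property, and uniqueness follows from the $\beta$-contraction of $T^\pi_\beta$ in the supremum norm over $(x,a,x')$. The only cosmetic difference is your invocation of $\beta < \gamma$, where only $\beta < 1$ is actually needed.
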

This motivates a loss in which the Monte Carlo target in Equation~\eqref{eq:cemc} is replaced by the `bootstrapped' distribution $T^\pi_\beta \mu$, leading to the \emph{cross-entropy temporal-difference} (CETD) loss, briefly mentioned by \citet{janner2020gamma}:
\begin{align}\label{eq:expected-cetd}
    \mathbb{E}_{X' \sim (T^\pi_\beta \bar{\mu})(\cdot|x, a)}[-\log \mu(X'|x, a)] \, ,
\end{align}
where $\bar{\mu}$ denotes a stop-gradient on $\mu$. Intuitively, $(T^\pi_\beta \mu)(\cdot|x, a)$ is the distribution obtained by sampling a next state $\tilde{x}$ from $P(\cdot|x, a)$, independently deciding whether to stop (with probability $1-\beta$) and output this state, or to sample an action $\tilde{a} \sim \pi(\cdot|\tilde{x})$ and instead return a sample from $\mu(\cdot|\tilde{x}, \tilde{a})$. This also describes a method by which sample-based approximations to Equation~\eqref{eq:expected-cetd} can be derived, leading to an algorithm that can be used at scale. 

However, while sample-based minimisation of Equation~\eqref{eq:cemc} can be understood through stochastic gradient descent and convex optimisation theory, it is less clear that following sample-based gradients of the CETD loss in Equation~\eqref{eq:expected-cetd} will lead to $\mu^\pi_\beta$, due to the presence of bootstrapping. Next, we show that, under certain conditions, convergence to $\mu^\pi_\beta$ can be guaranteed, and additionally we show how the CETD loss can be applied at scale.
Note that the prior approach to training GHMs at scale proposed by \citet{janner2020gamma} instead focused on a biased $L^2$ loss between log-densities; we show that CETD typically outperforms this approach in Appendix~\ref{sec:appendix:ghm_training_exps}, and note that it has the further advantage of not requiring access to single-step transition densities.

\subsection{Convergence analysis of CETD}

Consider a finite state space $\mathcal{X}$, and a tabular parametrisation of each distribution $\mu(\cdot|x, a)$ by a vector of logits $\phi(x, a) \in \mathbb{R}^\mathcal{X}$, so that $\mu(\cdot|x, a) = \text{softmax}(\phi(x, a))$. We show that with this parametrisation convergence to $\mu^\pi_\beta$ is obtained following CETD updates under mild conditions.
To describe the precise algorithm we study, 
let $\phi_0 \in \mathbb{R}^{\mathcal{X} \times \mathcal{A}\times \mathcal{X}}$ be the initial values of the logits in the parametrisation described above.
We then consider generating a sequence of logits $(\phi_k)_{k \geq 0}$ and corresponding distributions $(\mu_k)_{k \geq 0}$ by iteratively applying \emph{synchronous} CETD updates; at algorithm time $k$, for each state-action pair $(x, a)$, we observe a transition $(x, a, x')$ and perform the update:
\begin{align}
    &\phi_{k+1}(x, a) = \phi_k(x, a) + \varepsilon_k \Big( \!\underbrace{ (\hat{T}^\pi\mu_k)(x, a)- \mu_k(\cdot|x, a) }_{\text{stochastic cross-entropy gradient}}\! \Big) \, , \nonumber \\
    & \ \ \ \ \text{ with } 
    (\hat{T}^\pi \mu_k)(x, a) = (1-\gamma) e_{x'} + \gamma e_{x''} \label{eq:cetd-updates-for-thm} \, ,
\end{align}
where $x''$ is generated by sampling $a' \sim \pi(\cdot|x')$ and $x'' \sim \mu_k(\cdot|x', a')$, and where $e_{y} \in \mathbb{R}^{\mathcal{X}}$ is the one-hot vector at state $y$. Here, $(\varepsilon_k)_{k =0}^\infty$ is a sequence of step sizes.

\begin{mdframed}[style=box]
\begin{theorem}\label{thm:cetd}
    The CETD algorithm specified by the updates in Equation~\eqref{eq:cetd-updates-for-thm} produces sequences of
    distributions $(\mu_k)_{k \geq 0}$ such that $\mu_k(\cdot|x, a) \rightarrow \mu^\pi_\gamma(\cdot|x, a)$ for all $(x, a)$, as long as 
    $
    \sum_{k\geq 0} \varepsilon_k = \infty \, , \ \ \sum_{k\geq 0} \varepsilon^2_k < \infty
    $.
\end{theorem}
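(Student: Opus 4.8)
The plan is to recognise the updates in Equation~\eqref{eq:cetd-updates-for-thm} as a stochastic approximation recursion whose mean field is the Bellman residual of the composed-model operator, and then invoke the ODE method. Let $\mathcal F_k$ be the natural filtration of the algorithm. Since $(\hat{T}^\pi\mu_k)(x,a)=(1-\gamma)e_{x'}+\gamma e_{x''}$ with $x'\sim P(\cdot\mid x,a)$ and $x''$ drawn by $a'\sim\pi(\cdot\mid x')$, $x''\sim\mu_k(\cdot\mid x',a')$, a direct computation gives $\mathbb E[(\hat{T}^\pi\mu_k)(x,a)\mid\mathcal F_k]=(T^\pi_\gamma\mu_k)(\cdot\mid x,a)$, the composed Bellman operator of Proposition~\ref{prop:bellman} with $\beta=\gamma$; so the increment $(\hat{T}^\pi\mu_k)(x,a)-\mu_k(\cdot\mid x,a)$ is an unbiased sample of the negative gradient of the CETD loss of Equation~\eqref{eq:expected-cetd}, equal in conditional mean to the Bellman residual $(T^\pi_\gamma\mu_k)(\cdot\mid x,a)-\mu_k(\cdot\mid x,a)$, plus a martingale-difference noise that is bounded (all terms are probability vectors, so the noise has norm at most $2$). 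It is cleaner to track the distributions $\mu_k=\mathrm{softmax}(\phi_k)$ than the logits: a second-order Taylor expansion of $\mathrm{softmax}$ gives, blockwise, $\mu_{k+1}(\cdot\mid x,a)=\mu_k(\cdot\mid x,a)+\varepsilon_k\,J(\mu_k(\cdot\mid x,a))\big((\hat{T}^\pi\mu_k)(x,a)-\mu_k(\cdot\mid x,a)\big)+O(\varepsilon_k^2)$, where $J(p)=\mathrm{diag}(p)-pp^\top$ is the softmax Jacobian, a polynomial in $p$ and hence globally Lipschitz on the simplex. Taking conditional expectations, $(\mu_k)$ is a stochastic approximation scheme confined to the compact product of probability simplices (no projection is ever needed, since softmax outputs are automatically valid distributions), with Lipschitz bounded mean field $\bar g(\mu)(x,a)=J(\mu(\cdot\mid x,a))\big((T^\pi_\gamma\mu)(\cdot\mid x,a)-\mu(\cdot\mid x,a)\big)$, bounded martingale noise, and a summable $O(\varepsilon_k^2)$ curvature remainder. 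The Robbins--Monro conditions enter exactly here: $\sum_k\varepsilon_k^2<\infty$ renders the noise and the curvature remainder asymptotically negligible, and $\sum_k\varepsilon_k=\infty$ keeps the scheme tracking the flow over unbounded time.

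By the standard ODE method for stochastic approximation with martingale noise, $(\mu_k)$ converges almost surely to a connected subset of the internally chain-recurrent set of the limiting ODE $\dot\mu(\cdot\mid x,a)=\bar g(\mu)(x,a)$. Since $J(p)$ is positive semidefinite and positive definite on the tangent space of the simplex whenever $p$ has full support, the equilibria of this ODE are the fixed point $\mu^\pi_\gamma$ of $T^\pi_\gamma$ (unique, by Proposition~\ref{prop:bellman}) together with possible boundary points at which some blocks have collapsed onto a strict subset of their reachable states; a calculation shows these boundary equilibria are unstable (the destabilising direction being the one that re-expands the support toward $\mu^\pi_\gamma$), and they are ruled out as limits of the noisy iteration by a standard avoidance-of-traps argument, using that the martingale noise has a component along every such direction.

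For stability at $\mu^\pi_\gamma$ the natural Lyapunov function is $\Psi(\mu)=\sum_{x,a}\mathrm{KL}\big(\mu^\pi_\gamma(\cdot\mid x,a)\,\big\|\,\mu(\cdot\mid x,a)\big)$, whose gradient in logit space is the density difference $\mu(\cdot\mid x,a)-\mu^\pi_\gamma(\cdot\mid x,a)$ (the softmax cross-entropy gradient). The chain rule then gives $\dot\Psi=\sum_{x,a}\big\langle\mu(\cdot\mid x,a)-\mu^\pi_\gamma(\cdot\mid x,a),\,(T^\pi_\gamma\mu)(\cdot\mid x,a)-\mu(\cdot\mid x,a)\big\rangle$. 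Writing $\delta=\mu-\mu^\pi_\gamma$ and using that $T^\pi_\gamma$ is affine with $\mu^\pi_\gamma$ as fixed point, whose linear part acts on the stacked vector by the row-stochastic kernel $K((x,a),(x',a'))=P(x'\mid x,a)\pi(a'\mid x')$ on the $(x,a)$-index and by the identity on the destination index, one obtains $\dot\Psi=\langle\delta,(\gamma L-I)\delta\rangle$ with $L=K\otimes I$. As $L$ is a tensor of a stochastic matrix with the identity, $\gamma L-I$ has spectrum with real parts at most $\gamma-1<0$ and is thus Hurwitz; this already yields local asymptotic stability of the ODE at $\mu^\pi_\gamma$, where the preconditioner is non-degenerate.

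I expect the main obstacle to be promoting this to \emph{global} convergence of the ODE, where two coupled difficulties arise. First, the separable Lyapunov $\Psi$ need not be globally decreasing: $\dot\Psi<0$ everywhere would require $\gamma L$ to be dissipative in the Euclidean inner product, whereas $T^\pi_\gamma$ is only guaranteed to contract in the metric $\max_{x,a}\|\cdot\|_1$ (equivalently, by joint convexity of relative entropy, a $\gamma$-contraction in $\max_{x,a}\mathrm{KL}(\cdot\,\|\,\cdot)$), and a non-symmetric stochastic kernel can have Euclidean operator norm exceeding $1$. One must therefore either replace $\Psi$ by a non-separable quadratic form obtained from the Lyapunov equation for the Hurwitz matrix $\gamma L-I$ (exploiting the kernel structure of $L$), or use a non-smooth Lyapunov such as $\max_{x,a}\|\mu(\cdot\mid x,a)-\mu^\pi_\gamma(\cdot\mid x,a)\|_1$ controlled through its upper Dini derivative, bridging the $\mathrm{KL}$/softmax geometry and the $\ell^1$/contraction geometry with Pinsker-type inequalities; the clean base case is that for a \emph{fixed} target $q$ the flow satisfies $\tfrac{d}{dt}\mathrm{KL}(q\,\|\,\mathrm{softmax}(\phi))=-\|q-\mathrm{softmax}(\phi)\|_2^2$, so the contraction of $T^\pi_\gamma$ should control how much the moving target $q=T^\pi_\gamma\mu$ drifts relative to this decrease. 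Second, whichever Lyapunov is used must survive the softmax-Jacobian preconditioner $J(\mu(\cdot\mid x,a))$, the residual nonlinearity of the ODE, which degenerates on the boundary of the simplex; in particular the case $\mu^\pi_\gamma(y\mid x,a)=0$ for some unreachable $y$ (so the corresponding logit drifts to $-\infty$) is handled by propagating $\mu_k(y\mid x,a)\to 0$ along the reachability structure of the MDP, which is a further reason to argue in the $\mu$-coordinates on the compact simplex product rather than with the unbounded logits.
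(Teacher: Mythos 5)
There is a genuine gap, and it is exactly the one you flag yourself: the proposal sets up the stochastic-approximation framing correctly (the conditional mean of the increment is the Bellman residual $T^\pi\mu_k - \mu_k$, the logit-space gradient of the KL to a fixed target is the density difference, the remainder is $O(\varepsilon_k^2)$), but the central step --- showing that $\sum_{x,a}\langle \mu(\cdot|x,a)-\mu^\pi_\gamma(\cdot|x,a),\, (T^\pi\mu)(\cdot|x,a)-\mu(\cdot|x,a)\rangle$ is negative away from the fixed point --- is left open. As you correctly observe, the unweighted Euclidean pairing need not be dissipative for a non-symmetric stochastic kernel, and neither of your two proposed repairs (a non-separable quadratic form from the Lyapunov equation, or a non-smooth $\max$-$\ell^1$ Lyapunov with Dini derivatives) is carried out. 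The missing idea in the paper's proof is simpler than either: weight the KL Lyapunov function by a \emph{stationary distribution} $\xi$ of the state-action chain induced by $\pi$, i.e.\ $L(\phi)=\sum_{x,a}\xi(x,a)\,\mathrm{KL}(\mu^\pi_\gamma(\cdot|x,a)\,\|\,\mathrm{softmax}(\phi)(\cdot|x,a))$. Jensen's inequality plus stationarity of $\xi$ make $P^\pi$ a nonexpansion, hence $T^\pi$ a $\gamma$-contraction, in the weighted norm $\|\mu\|_\xi^2=\sum_{x,a,x'}\xi(x,a)\mu(x'|x,a)^2$, and expanding $\|\mu^\pi_\gamma - T^\pi\mu_k\|_\xi^2\le\gamma^2\|\mu^\pi_\gamma-\mu_k\|_\xi^2$ yields precisely the dissipativity inequality $\langle \mu_k-\mu^\pi_\gamma,\,T^\pi\mu_k-\mu_k\rangle_\xi\le -\tfrac{1-\gamma^2}{2}\|\mu^\pi_\gamma-\mu_k\|_\xi^2$ that your argument needs. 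With that in hand the paper concludes by a discrete Robbins--Siegmund supermartingale argument on $L(\phi_k)$ directly, with no need for the ODE method, chain-recurrence, or an avoidance-of-traps argument (whose noise-excitation hypotheses you assert but do not verify for the boundary equilibria of the preconditioned mean field).

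A second, smaller gap remains even after the weighting trick: $\xi$ may not have full support (transient state-action pairs), so $L(\phi_k)\to 0$ only gives convergence on the recurrent classes. The paper closes this with a backward induction over the directed acyclic graph of communicating classes, using the Perron--Frobenius left eigenvector of the sub-stochastic submatrix restricted to each transient class to build a surrogate weighting $\xi_C$ under which $T^\pi$ is an ``almost-contraction'' up to an error term on descendant classes that is already summable by the inductive hypothesis. Your proposal touches the related but distinct issue of $\mu^\pi_\gamma$ lying on the boundary of the simplex, which the logit parametrisation and the $\mu$-coordinate compactness handle, but it does not address the support of the weighting measure, which is where the actual difficulty sits. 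In short: the skeleton is sound and your diagnosis of the obstacle is accurate, but the proof is not complete without (i) the stationary-distribution weighting that converts the contraction of $T^\pi$ into a usable inner-product inequality and (ii) the communicating-class induction for the transient part of the chain.
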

\end{mdframed}

The proof, relying on a discrete Lyapunov argument based on the Robbins-Siegmund theorem \citep{robbins1971convergence}, is in Appendix~\ref{sec:appendix:td-mle-proof} with several illustrative examples.

\subsection{Learning VAE-GHMs with CETD updates}\label{sec:vae-ghm}

We propose a novel scalable means of learning GHMs $\mu^\pi_\beta$ with VAEs \citep{kingma2014auto,rezende2014stochastic} based on the CETD loss, and emphasise that these methods also apply equally well to learning GHM densities for MDPs with continuous state spaces, as is often of interest in deep RL. Specifically, we use a conditional VAE architecture $\mu_\theta(\cdot|x, a, z)$ \citep{sohn2015learning} with latent variable $z$, and approximate posterior $q_\psi(z|x, a, X')$. The CETD loss is then the negative log-marginal likelihood of the training state under the VAE model, leading to the following evidence lower-bound (ELBO) on the negative CETD loss:
\begin{align*}
    \mathbb{E}_{X' \sim (T^\pi_\beta \bar{\mu}_\theta)(\cdot|x, a)}\left\lbrack \mathbb{E}_{z \sim q_\psi(\cdot|x, a, X')}\big\lbrack\log\big(\tfrac{\mu_\theta(X'|x, a, z)}{q_\psi(z|x, a, X')}\big)\big\rbrack  \right\rbrack
\end{align*}
which is then jointly optimised over $\theta$ and $\psi$ via stochastic gradient descent with the reparametrisation trick~\citep{kingma2014auto}. Using VAEs offers several advantages, such as allowing low latent dimensionality in non-stochastic environments, and connection to the theoretically-justified CETD loss; see Section~\ref{sec:ghm_training_exp} for further commentary.

\section{Deep reinforcement learning experiments}
\label{sec:large_scale_ant_exps}

To understand how \GSP evaluation using GHMs and \ggpi perform at scale, we test them on a deep RL transfer task. Full details and further results are given in Appendix~\ref{sec:appendix:experiment_details}.

\textbf{Environment details.}
We consider a
continuous control task inspired from the moving-target arena in \citet{barreto2019optionkeyboard}, which we call \emph{sparse-reward ant}. The agent is a quadrupedal ``ant'', and the environment observation is a 35-dimensional representation of the agent state, including position, velocity, and joint angles. The agent interacts with the environment via an 8-dimensional action space controlling the torque applied to its various joints.
At the beginning of each episode, the agent's is initialised at rest at a location sampled from a uniform distribution over a square centred at the origin, and a target location is sampled from a smaller region surrounding the agent's initialisation (see Appendix~\ref{sec:appendix:experiment_details_ant} for details).
The reward is $1$ for transitions that terminate in a region around the target and $0$ elsewhere.

\textbf{Experiment setup.}
Similar to \citet{barreto2019optionkeyboard}, we first pretrain four base policies $\Pi = \{\pi_{\text{up}}, \pi_{\text{down}}, \pi_{\text{left}}, \pi_{\text{right}}\}$ that aim to move along each of the 4 directions. The policies are stochastic and implemented as a 2-layer MLP outputting the mean/variance of a Gaussian torque to be applied at each of the ant's 8 joints. These policies are pretrained using \citeauthor{abdolmaleki2018maximum}'s (\citeyear{abdolmaleki2018maximum}) MPO with the reward calculated based on the component of the ant's velocity in the desired direction.

Next, we train GHMs for the base policies using the approach described in Section~\ref{sec:vae-ghm}. The model is implemented as a standard conditional $\beta$-VAE \citep{higgins2016beta,sohn2015learning} with a single latent dimension, as this is sufficient to model the probabilistic horizon in the deterministic environment. We train these GHMs from transitions using the CETD bound described in Section~\ref{sec:vae-ghm}, for GHM $\beta$ values of $0.8$ and $0.9$, and consider the task with discount factor $\gamma=0.9$. This corresponds to performing \ggpi for \GSPs for a switching probability of $\alpha = \nicefrac{1}{9}$.
Further details, observations, and recommendations are provided in Appendix~\ref{sec:appendix:experiment_details}; for example, we found off-policy training of GHMs important to obtain sufficient state coverage.

Once the GHMs have been learned, the agent is evaluated on new episodes without additional learning.
In each test episode, the agent must plan to optimise for a new revealed reward function $r$ associated with the randomly-generated target region, leveraging the learnt GHMs for the set of base policies $\Pi$ above. We consider two approaches: (i) GPI \citep{barreto2016successor} on $\Pi$ using GHM evaluation \citep{janner2020gamma}, a natural baseline for this task (equivalent to depth-1 \ggpi), and (ii) depth-2 \ggpi (Section~\ref{sec:transfer}).

\begin{figure}[h]
    \centering
    \includegraphics[keepaspectratio,width=.48\textwidth]{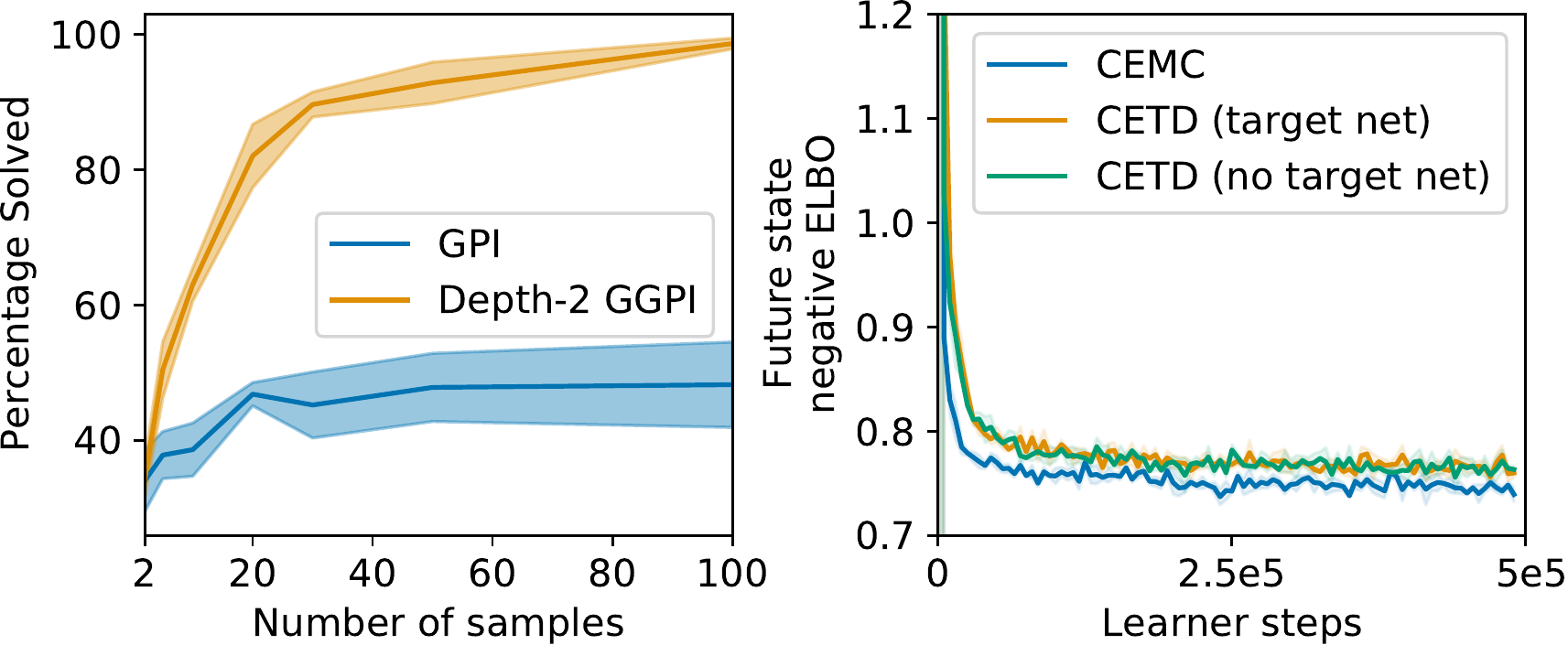}
    \caption{\textbf{Left:} Comparing \ggpi at different depths $m$ in terms of total episodes succeeded, for various sampling budgets. Agents are evaluated across 5 random seeds for GHM training and 100 test episodes with bootstrapped 90\% confidence intervals.
    \textbf{Right:} Comparison of GHMs trained using various losses measured in terms of negative ELBO (lower is better)
    of samples from the true future state visitation distribution obtained by sampling states from on-policy trajectories, averaged over 5 random seeds.
    }
    \label{fig:ant_results_sampling_and_vaes}
\end{figure}

\textbf{Results.}
Figure~\ref{fig:ant_results_sampling_and_vaes} (left) shows the proportion of test episodes successfully solved by GPI and by depth-2 \ggpi, varying the sample budget $n_{\text{samples}}$ used to estimate each Q-value.
We see that depth-2 \ggpi outperforms GPI for  $n_{\text{samples}} \ge 2$, eventually reaching a success rate close to $100\%$.
In Table~\ref{tab:ant_results_depth_1_2} we take a finer-grained look at the results when using 100 samples. We see that standard GPI manages to solve the task roughly 50\% of the time, while depth-2 \ggpi (where the agent is able to model changing directions) not only solves almost all the remaining goal locations but also almost always solves the task faster than standard GPI when both are capable of reaching the goal.
Agent behaviour is visualised in Figure~\ref{fig:combined_viz} and Appendix~\ref{sec:agent-visualisations}.

\begin{table}[h]
    \centering
    \caption{Results of comparing agent performance for GPI and depth-2 \ggpi on the \emph{sparse-reward ant} environment. Agents are evaluated across 5 random seeds for GHM training and 100 random environment and target initialisations.}
    \label{tab:ant_results_depth_1_2}
    \begin{tabular}{l|l}
        \toprule
        Case                           & Frequency \\
        \midrule
        Depth-2 \ggpi succeeds, GPI fails        & $50.8 \pm 5.7$        \\ 
        Both succeed, depth-2 \ggpi is faster & $45.0 \pm 7.0$        \\ 
        Both fail                      & $1.0 \pm 0.9$         \\ 
        Both succeed but GPI is faster & $2.8 \pm 1.2$         \\ 
        GPI succeeds, depth-2 \ggpi fails        & $0.4 \pm 0.8$ \\
        \bottomrule
    \end{tabular}
\end{table}

\begin{figure}[h]
    \centering
    \includegraphics[width=.43\textwidth]{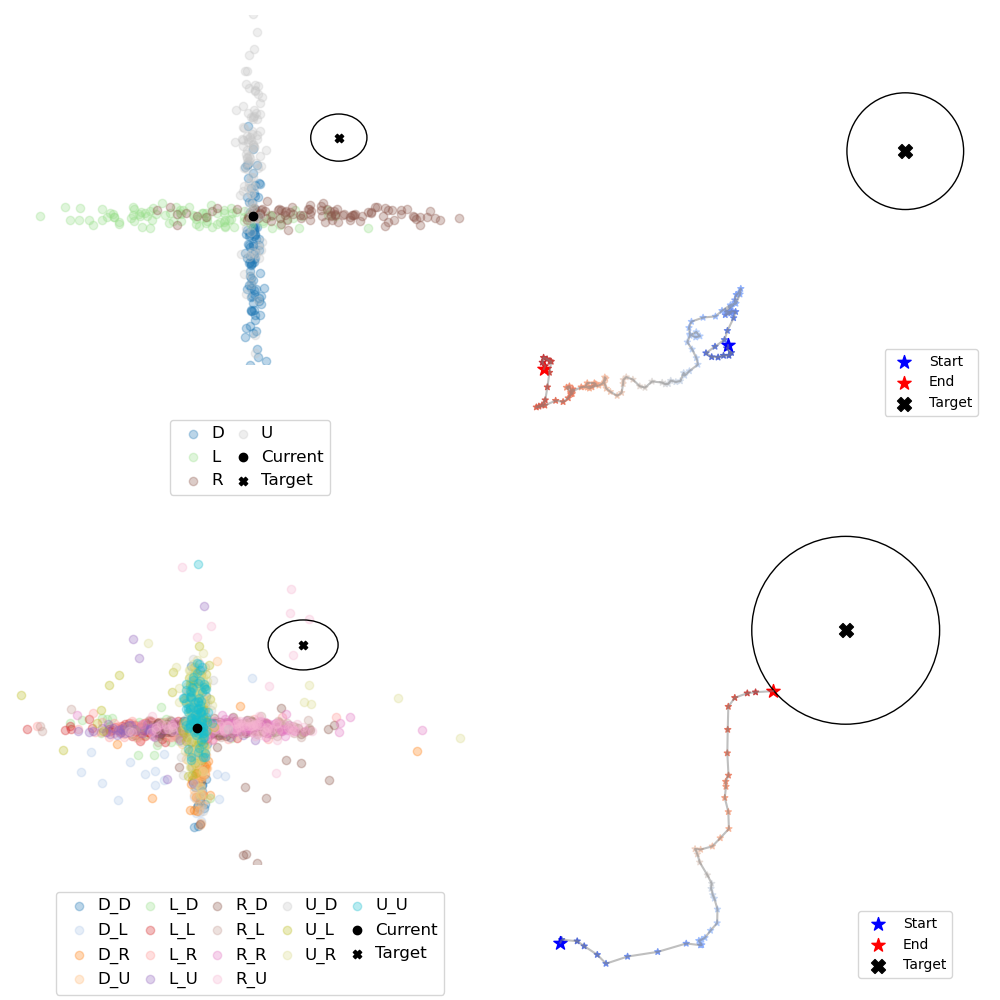}
    \caption{Visualisation of an agent performing GPI (top) and depth-2 \ggpi (bottom) for the same test episode. GHM samples for use in \ggpi at the first episode time step are shown on the left, while the entire episode trajectory coloured on a gradient from blue to red with time is shown on the right. We show the ant centre of mass, target, and reward signal boundary.
    }
    \label{fig:combined_viz}
\end{figure}

\textbf{GHM training experiments.}
\label{sec:ghm_training_exp}
GHM training is an important component of the deep RL results above. We found combining VAEs with the CETD loss to work particularly well; Figure~\ref{fig:ant_results_sampling_and_vaes} (right) shows negative ELBO loss curves for the VAE-GHM of the policy $\pi_{\text{right}}$ on the sparse-reward ant domain, with $\beta=0.8$.
The loss is similar to that of a VAE-GHM trained via supervised learning with the
CEMC loss from Equation~\eqref{eq:cemc}, and we found target networks unnecessary for stable training. 
We show in Appendix~\ref{sec:appendix:ghm_training_exps} that this combination of VAEs with the CETD loss is remarkably stable compared to normalising flows using either CETD or the log-$L^2$ loss previously considered. Appendix~\ref{sec:appendix:ghm_vs_one_step} compares GHMs with multi-step compositions of VAEs modelling single-step transitions, showing that the latter incur large errors and thus are unsuitable for long-horizon planning.
Finally, in Appendix~\ref{sec:appendix:ghm_training_budget} we examine the performance of \ggpi when using GHMs trained with varying sampling budgets, showing that even GHMs trained for only a few thousand steps whose loss has not yet converged are still useful and result in a strong improved policy.

\section{Related work}

This paper relates to a number of different areas in reinforcement learning; we describe the most closely related works below, with additional discussion in Appendix~\ref{sec:appendix:background}.

In addition to the work of \citet{janner2020gamma} described above, there are several recent contributions studying the task of large-scale learning of discounted visitation distributions and related objects. \citet{blier2021learning} propose several TD-based methods for learning parametric representations, including an approach based on low-rank approximations. Building on this work, \citet{touati2021learning} propose a compact representation of an MDP that in principle allows for the optimal policy associated with any reward function to be computed without planning, in practice relying on a low-dimensional approximation of the visitation distributions. \citet{eysenbach2021c} propose a classification-based approach based on contrastive learning; these works also note a close connection with the domain of goal-conditioned RL \citep{kaelbling1993learning,schaul2015universal,andrychowicz2017hindsight,pong2018temporal}. These ideas go back to the \emph{successor representation} (SR), introduced in the context of representation learning in finite-state MDPs by \citet{dayan1993improving}, who also proposed a TD method for learning the SR; this has also been explored in combination with deep learning \cite{kulkarni2016deep,fujimoto2021deep}.

Relatedly, modelling discounted visitation distributions for evaluation was proposed by \citet{sutton1995td}, who termed such objects $\beta$-models. These models were generalised by \citet{precup1998multi}, who proposed multi-time models, which encompass both $\beta$-models and $n$-step models as special cases. More generally, there is a long-established practice of learning option models \citep{sutton1999between,precup1998theoretical,precup2000temporal}, and using such models in a compositional manner \citep{silver2012compositional}.
A central difference between these option models and this work is that
the use of geometric switching times (or, in the language of options, constant termination probabilities) means we do not need to model accumulated return or the taken executing each base policy,
making applications to transfer possible.
In this regard, the approach of this paper may be viewed as a generative approach to learning a certain class of universal option models \citep{yao2014universal}, which also disentangle reward and transition structure; constant termination probabilities facilitate sample-based composition of such models.

Our application to transfer learning in RL is motivated by \emph{successor features} and generalised policy improvement, introduced by \citet{barreto2016successor,barreto2020fast}.
Subsequent work in this direction includes algorithmic innovations in combination with deep learning \citep{pmlr-v80-barreto18a,borsa2019universal},
reward-free learning \citep{grimm2019disentagled,hansen2020fast},
and addressing questions concerning the influence of the policy set on improvements in GPI \citep{zahavy2021discovering,alver2022constructing,lehnert2020successor,nemecek2021policy}. A notable approach that also interpolates between greedy improvement and computation of optimal policies is multi-step policy improvement \citep{efroni2018beyond,efroni2018multiple,tomar2020multi}.

\section{Conclusions, limitations, and future work}\label{sec:planning}

In this paper, we have proposed using geometric horizon models for the evaluation of non-Markov \gsps, and for doing policy improvement over collections of such policies. We have shown that this pair of techniques can be applied to both transfer and policy iteration, extending existing techniques based on successor features and generalised policy improvement. We have also demonstrated that it is possible to combine these ideas with deep learning architectures to arrive at novel approaches to deep RL, and in the course have additionally provided theoretical analyses of these methods.

We foresee several key considerations in further extending the applicability of this approach.
First, the method relies on constructing models over environment state; as with many other model-based methods, a key question is how to learn such models efficiently in high-dimensional settings. 
Additionally, the use of geometric switching times in \GSPs is key to decoupling rewards from learnt models, but limits the expressivity of the non-Markov policies considered; can this restriction be lifted?
In addition to these questions, there are several natural directions for future work. These include further development of theoretical convergence analyses for learning GHMs and improving over \GSPs, as well as further developing combinations of these techniques with deep learning.
We believe that combining \ggpi with recent advances in adaptive planning techniques is a particularly promising direction for further work.

\section*{Acknowledgements}

We thank the anonymous reviewers for useful comments and suggestions, and gratefully acknowledge support from our colleagues in the course of this work.
Thanks in particular to 
Mohammad Gheshlaghi Azar,
Gheorghe Comanici, 
Hamza Merzic,
Doina Precup,
Yunhao Tang, 
and to Th{\'e}ophane Weber for detailed feedback on an earlier draft.

\clearpage

\bibliography{main}

\begin{thebibliography}{90}
\providecommand{\natexlab}[1]{#1}
\providecommand{\url}[1]{\texttt{#1}}
\expandafter\ifx\csname urlstyle\endcsname\relax
  \providecommand{\doi}[1]{doi: #1}\else
  \providecommand{\doi}{doi: \begingroup \urlstyle{rm}\Url}\fi

\bibitem[Abdolmaleki et~al.(2018)Abdolmaleki, Springenberg, Tassa, Munos,
  Heess, and Riedmiller]{abdolmaleki2018maximum}
Abdolmaleki, A., Springenberg, J.~T., Tassa, Y., Munos, R., Heess, N., and
  Riedmiller, M.
\newblock Maximum a posteriori policy optimisation.
\newblock In \emph{Proceedings of the International Conference on Learning
  Representations}, 2018.

\bibitem[Agrawal \& Jia(2017)Agrawal and Jia]{agrawal2017optimistic}
Agrawal, S. and Jia, R.
\newblock Optimistic posterior sampling for reinforcement learning: worst-case
  regret bounds.
\newblock In \emph{Advances in Neural Information Processing Systems}, 2017.

\bibitem[Alver \& Precup(2022)Alver and Precup]{alver2022constructing}
Alver, S. and Precup, D.
\newblock Constructing a good behavior basis for transfer using generalized
  policy updates.
\newblock In \emph{Proceedings of the International Conference on Learning
  Representations}, 2022.

\bibitem[Andrychowicz et~al.(2017)Andrychowicz, Wolski, Ray, Schneider, Fong,
  Welinder, McGrew, Tobin, Abbeel, and Zaremba]{andrychowicz2017hindsight}
Andrychowicz, M., Wolski, F., Ray, A., Schneider, J., Fong, R., Welinder, P.,
  McGrew, B., Tobin, J., Abbeel, P., and Zaremba, W.
\newblock Hindsight experience replay.
\newblock In \emph{Advances in Neural Information Processing Systems}, 2017.

\bibitem[Ba et~al.(2016)Ba, Kiros, and Hinton]{ba2016layer}
Ba, J.~L., Kiros, J.~R., and Hinton, G.~E.
\newblock Layer normalization.
\newblock \emph{arXiv}, 2016.

\bibitem[Babuschkin et~al.(2020)Babuschkin, Baumli, Bell, Bhupatiraju, Bruce,
  Buchlovsky, Budden, Cai, Clark, Danihelka, Fantacci, Godwin, Jones, Hennigan,
  Hessel, Kapturowski, Keck, Kemaev, King, Martens, Mikulik, Norman, Quan,
  Papamakarios, Ring, Ruiz, Sanchez, Schneider, Sezener, Spencer, Srinivasan,
  Stokowiec, and Viola]{deepmind2020jax}
Babuschkin, I., Baumli, K., Bell, A., Bhupatiraju, S., Bruce, J., Buchlovsky,
  P., Budden, D., Cai, T., Clark, A., Danihelka, I., Fantacci, C., Godwin, J.,
  Jones, C., Hennigan, T., Hessel, M., Kapturowski, S., Keck, T., Kemaev, I.,
  King, M., Martens, L., Mikulik, V., Norman, T., Quan, J., Papamakarios, G.,
  Ring, R., Ruiz, F., Sanchez, A., Schneider, R., Sezener, E., Spencer, S.,
  Srinivasan, S., Stokowiec, W., and Viola, F.
\newblock The {D}eep{M}ind {JAX} {E}cosystem, 2020.

\bibitem[Barreto et~al.(2017)Barreto, Dabney, Munos, Hunt, Schaul, Van~Hasselt,
  and Silver]{barreto2016successor}
Barreto, A., Dabney, W., Munos, R., Hunt, J.~J., Schaul, T., Van~Hasselt, H.,
  and Silver, D.
\newblock Successor features for transfer in reinforcement learning.
\newblock In \emph{Advances in Neural Information Processing Systems}, 2017.

\bibitem[Barreto et~al.(2018)Barreto, Borsa, Quan, Schaul, Silver, Hessel,
  Mankowitz, Zidek, and Munos]{pmlr-v80-barreto18a}
Barreto, A., Borsa, D., Quan, J., Schaul, T., Silver, D., Hessel, M.,
  Mankowitz, D., Zidek, A., and Munos, R.
\newblock Transfer in deep reinforcement learning using successor features and
  generalised policy improvement.
\newblock In \emph{Proceedings of the International Conference on Machine
  Learning}, 2018.

\bibitem[Barreto et~al.(2019)Barreto, Borsa, Hou, Comanici, Ayg{\"{u}}n, Hamel,
  Toyama, Hunt, Mourad, Silver, and Precup]{barreto2019optionkeyboard}
Barreto, A., Borsa, D., Hou, S., Comanici, G., Ayg{\"{u}}n, E., Hamel, P.,
  Toyama, D., Hunt, J.~J., Mourad, S., Silver, D., and Precup, D.
\newblock The option keyboard: Combining skills in reinforcement learning.
\newblock In \emph{Advances in Neural Information Processing Systems}, 2019.

\bibitem[Barreto et~al.(2020)Barreto, Hou, Borsa, Silver, and
  Precup]{barreto2020fast}
Barreto, A., Hou, S., Borsa, D., Silver, D., and Precup, D.
\newblock Fast reinforcement learning with generalized policy updates.
\newblock \emph{Proceedings of the National Academy of Sciences}, 117\penalty0
  (48):\penalty0 30079--30087, 2020.
\newblock ISSN 0027-8424.

\bibitem[Bertsekas \& Tsitsiklis(1996)Bertsekas and
  Tsitsiklis]{bertsekas1996neuro}
Bertsekas, D.~P. and Tsitsiklis, J.~N.
\newblock \emph{Neuro-Dynamic Programming}.
\newblock Athena Scientific, 1996.

\bibitem[Blier et~al.(2021)Blier, Tallec, and Ollivier]{blier2021learning}
Blier, L., Tallec, C., and Ollivier, Y.
\newblock Learning successor states and goal-dependent values: A mathematical
  viewpoint.
\newblock \emph{arXiv}, 2021.

\bibitem[Borsa et~al.(2019)Borsa, Barreto, Quan, Mankowitz, van Hasselt, Munos,
  Silver, and Schaul]{borsa2019universal}
Borsa, D., Barreto, A., Quan, J., Mankowitz, D.~J., van Hasselt, H., Munos, R.,
  Silver, D., and Schaul, T.
\newblock Universal successor features approximators.
\newblock In \emph{Proceedings of the International Conference on Learning
  Representations}, 2019.

\bibitem[Bradbury et~al.(2018)Bradbury, Frostig, Hawkins, Johnson, Leary,
  Maclaurin, Necula, Paszke, Vander{P}las, Wanderman-{M}ilne, and
  Zhang]{jax2018github}
Bradbury, J., Frostig, R., Hawkins, P., Johnson, M.~J., Leary, C., Maclaurin,
  D., Necula, G., Paszke, A., Vander{P}las, J., Wanderman-{M}ilne, S., and
  Zhang, Q.
\newblock {JAX}: composable transformations of {P}ython+{N}um{P}y programs,
  2018.

\bibitem[Brunskill \& Li(2014)Brunskill and Li]{brunskill2014pac}
Brunskill, E. and Li, L.
\newblock {PAC}-inspired option discovery in lifelong reinforcement learning.
\newblock In \emph{Proceedings of the International Conference on Machine
  Learning}, 2014.

\bibitem[Bu{\c{s}}oniu \& Munos(2012)Bu{\c{s}}oniu and
  Munos]{busoniu2012optimistic}
Bu{\c{s}}oniu, L. and Munos, R.
\newblock Optimistic planning for {M}arkov decision processes.
\newblock In \emph{Proceedings of the International Conference on Artificial
  Intelligence and Statistics}, 2012.

\bibitem[Bu{\c{s}}oniu et~al.(2012)Bu{\c{s}}oniu, Munos, and
  Babu{\v{s}}ka]{busoniu2012survey}
Bu{\c{s}}oniu, L., Munos, R., and Babu{\v{s}}ka, R.
\newblock A survey of optimistic planning in {M}arkov decision processes.
\newblock In Lewis, F.~L. and Liu, D. (eds.), \emph{Reinforcement Learning and
  Adaptive Dynamic Programming for Feedback Control}, chapter~22, pp.\
  494--516. John Wiley \& Sons, 2012.

\bibitem[Dabney et~al.(2021)Dabney, Ostrovski, and
  Barreto]{dabney2020temporally}
Dabney, W., Ostrovski, G., and Barreto, A.
\newblock Temporally-extended $\epsilon$-greedy exploration.
\newblock In \emph{Proceedings of the International Conference on Learning
  Representations}, 2021.

\bibitem[Dalal et~al.(2021)Dalal, Hallak, Dalton, Frosio, Mannor, and
  Chechik]{dalal2021improve}
Dalal, G., Hallak, A., Dalton, S., Frosio, I., Mannor, S., and Chechik, G.
\newblock Improve agents without retraining: Parallel tree search with
  off-policy correction.
\newblock In \emph{Advances in Neural Information Processing Systems}, 2021.

\bibitem[Dayan(1993)]{dayan1993improving}
Dayan, P.
\newblock Improving generalization for temporal difference learning: The
  successor representation.
\newblock \emph{Neural Computation}, 5\penalty0 (4):\penalty0 613--624, 1993.

\bibitem[Dinh et~al.(2017)Dinh, Sohl-Dickstein, and Bengio]{dinh2016density}
Dinh, L., Sohl-Dickstein, J., and Bengio, S.
\newblock Density estimation using real {NVP}.
\newblock In \emph{Proceedings of the International Conference on Learning
  Representations}, 2017.

\bibitem[Durkan et~al.(2019)Durkan, Bekasov, Murray, and
  Papamakarios]{durkan2019neural}
Durkan, C., Bekasov, A., Murray, I., and Papamakarios, G.
\newblock Neural spline flows.
\newblock In \emph{Advances in Neural Information Processing Systems}, 2019.

\bibitem[Efroni et~al.(2018{\natexlab{a}})Efroni, Dalal, Scherrer, and
  Mannor]{efroni2018beyond}
Efroni, Y., Dalal, G., Scherrer, B., and Mannor, S.
\newblock Beyond the one step greedy approach in reinforcement learning.
\newblock In \emph{Proceedings of the International Conference on Machine
  Learning}, 2018{\natexlab{a}}.

\bibitem[Efroni et~al.(2018{\natexlab{b}})Efroni, Dalal, Scherrer, and
  Mannor]{efroni2018multiple}
Efroni, Y., Dalal, G., Scherrer, B., and Mannor, S.
\newblock Multiple-step greedy policies in online and approximate reinforcement
  learning.
\newblock In \emph{Advances in Neural Information Processing Systems},
  2018{\natexlab{b}}.

\bibitem[Efroni et~al.(2019)Efroni, Dalal, Scherrer, and
  Mannor]{efroni2019combine}
Efroni, Y., Dalal, G., Scherrer, B., and Mannor, S.
\newblock How to combine tree-search methods in reinforcement learning.
\newblock In \emph{Proceedings of the AAAI Conference on Artificial
  Intelligence}, 2019.

\bibitem[Eysenbach et~al.(2021)Eysenbach, Salakhutdinov, and
  Levine]{eysenbach2021c}
Eysenbach, B., Salakhutdinov, R., and Levine, S.
\newblock C-learning: Learning to achieve goals via recursive classification.
\newblock In \emph{Proceedings of the International Conference on Learning
  Representations}, 2021.

\bibitem[Feldman \& Domshlak(2013)Feldman and Domshlak]{feldman2013monte}
Feldman, Z. and Domshlak, C.
\newblock {M}onte-{C}arlo planning: {T}heoretically fast convergence meets
  practical efficiency.
\newblock In \emph{Proceedings of the Conference on Uncertainty in Artificial
  Intelligence}, 2013.

\bibitem[Feldman \& Domshlak(2014{\natexlab{a}})Feldman and
  Domshlak]{feldman2014mabs}
Feldman, Z. and Domshlak, C.
\newblock On {MAB}s and separation of concerns in {M}onte-{C}arlo planning for
  {MDP}s.
\newblock In \emph{Proceedings of the International Conference on Automated
  Planning and Scheduling}, 2014{\natexlab{a}}.

\bibitem[Feldman \& Domshlak(2014{\natexlab{b}})Feldman and
  Domshlak]{feldman2014simple}
Feldman, Z. and Domshlak, C.
\newblock Simple regret optimization in online planning for {M}arkov decision
  processes.
\newblock \emph{Journal of Artificial Intelligence Research}, 51:\penalty0
  165--205, 2014{\natexlab{b}}.

\bibitem[Frobenius(1912)]{frobenius1912ueber}
Frobenius, G.
\newblock {\"U}ber {M}atrizen aus nicht negativen {E}lementen.
\newblock \emph{Sitzungsberichte Akad. Wiss. Berlin}, 1912.

\bibitem[Fujimoto et~al.(2021)Fujimoto, Meger, and Precup]{fujimoto2021deep}
Fujimoto, S., Meger, D., and Precup, D.
\newblock A deep reinforcement learning approach to marginalized importance
  sampling with the successor representation.
\newblock In \emph{Proceedings of the International Conference on Machine
  Learning}, 2021.

\bibitem[Grimm et~al.(2019)Grimm, Higgins, Barreto, Teplyashin, Wulfmeier,
  Hertweck, Hadsell, and Singh]{grimm2019disentagled}
Grimm, C., Higgins, I., Barreto, A., Teplyashin, D., Wulfmeier, M., Hertweck,
  T., Hadsell, R., and Singh, S.
\newblock Disentangled cumulants help successor representations transfer new
  tasks.
\newblock \emph{arXiv}, 2019.

\bibitem[Hansen et~al.(2020)Hansen, Dabney, Barreto, de~Wiele, Warde{-}Farley,
  and Mnih]{hansen2020fast}
Hansen, S., Dabney, W., Barreto, A., de~Wiele, T.~V., Warde{-}Farley, D., and
  Mnih, V.
\newblock Fast task inference with variational intrinsic successor features.
\newblock In \emph{Proceedings of the International Conference on Learning
  Representations}, 2020.

\bibitem[Harb et~al.(2018)Harb, Bacon, Klissarov, and Precup]{harb2018waiting}
Harb, J., Bacon, P.-L., Klissarov, M., and Precup, D.
\newblock When waiting is not an option: Learning options with a deliberation
  cost.
\newblock In \emph{Proceedings of the AAAI Conference on Artificial
  Intelligence}, 2018.

\bibitem[Harris et~al.(2020)Harris, Millman, van~der Walt, Gommers, Virtanen,
  Cournapeau, Wieser, Taylor, Berg, Smith, Kern, Picus, Hoyer, van Kerkwijk,
  Brett, Haldane, del R{\'{i}}o, Wiebe, Peterson, G{\'{e}}rard-Marchant,
  Sheppard, Reddy, Weckesser, Abbasi, Gohlke, and Oliphant]{harris2020array}
Harris, C.~R., Millman, K.~J., van~der Walt, S.~J., Gommers, R., Virtanen, P.,
  Cournapeau, D., Wieser, E., Taylor, J., Berg, S., Smith, N.~J., Kern, R.,
  Picus, M., Hoyer, S., van Kerkwijk, M.~H., Brett, M., Haldane, A., del
  R{\'{i}}o, J.~F., Wiebe, M., Peterson, P., G{\'{e}}rard-Marchant, P.,
  Sheppard, K., Reddy, T., Weckesser, W., Abbasi, H., Gohlke, C., and Oliphant,
  T.~E.
\newblock Array programming with {NumPy}.
\newblock \emph{Nature}, 585\penalty0 (7825):\penalty0 357--362, September
  2020.

\bibitem[Harutyunyan et~al.(2019)Harutyunyan, Dabney, Borsa, Heess, Munos, and
  Precup]{harutyunyan2019termination}
Harutyunyan, A., Dabney, W., Borsa, D., Heess, N., Munos, R., and Precup, D.
\newblock The termination critic.
\newblock In \emph{Proceedings of the International Conference on Artificial
  Intelligence and Statistics}, 2019.

\bibitem[Higgins et~al.(2016)Higgins, Matthey, Pal, Burgess, Glorot, Botvinick,
  Mohamed, and Lerchner]{higgins2016beta}
Higgins, I., Matthey, L., Pal, A., Burgess, C., Glorot, X., Botvinick, M.,
  Mohamed, S., and Lerchner, A.
\newblock beta-{VAE}: Learning basic visual concepts with a constrained
  variational framework.
\newblock In \emph{Proceedings of the International Conference on Learning
  Representations}, 2016.

\bibitem[Hunt et~al.(2019)Hunt, Barreto, Lillicrap, and
  Heess]{hunt2019composing}
Hunt, J., Barreto, A., Lillicrap, T., and Heess, N.
\newblock Composing entropic policies using divergence correction.
\newblock In \emph{Proceedings of the International Conference on Machine
  Learning}, 2019.

\bibitem[Hunter(2007)]{hunter2007matplotlib}
Hunter, J.~D.
\newblock Matplotlib: A 2d graphics environment.
\newblock \emph{Computing in Science \& Engineering}, 9\penalty0 (3):\penalty0
  90--95, 2007.

\bibitem[Janner et~al.(2020)Janner, Mordatch, and Levine]{janner2020gamma}
Janner, M., Mordatch, I., and Levine, S.
\newblock Gamma-models: Generative temporal difference learning for
  infinite-horizon prediction.
\newblock In \emph{Advances in Neural Information Processing Systems}, 2020.

\bibitem[Kaelbling(1993)]{kaelbling1993learning}
Kaelbling, L.~P.
\newblock Learning to achieve goals.
\newblock In \emph{Proceedings of the International Joint Conference on
  Artificial Intelligence}, 1993.

\bibitem[Kingma \& Ba(2015)Kingma and Ba]{kingma2014adam}
Kingma, D.~P. and Ba, J.
\newblock Adam: A method for stochastic optimization.
\newblock \emph{Proceedings of the International Conference on Learning
  Representations}, 2015.

\bibitem[Kingma \& Dhariwal(2018)Kingma and Dhariwal]{kingma2018glow}
Kingma, D.~P. and Dhariwal, P.
\newblock Glow: Generative flow with invertible 1x1 convolutions.
\newblock In \emph{Advances in Neural Information Processing Systems}, 2018.

\bibitem[Kingma \& Welling(2014)Kingma and Welling]{kingma2014auto}
Kingma, D.~P. and Welling, M.
\newblock Auto-encoding variational {B}ayes.
\newblock In \emph{Proceedings of the International Conference on Learning
  Representations}, 2014.

\bibitem[Kulkarni et~al.(2016{\natexlab{a}})Kulkarni, Narasimhan, Saeedi, and
  Tenenbaum]{kulkarni2016hierarchical}
Kulkarni, T.~D., Narasimhan, K., Saeedi, A., and Tenenbaum, J.
\newblock Hierarchical deep reinforcement learning: Integrating temporal
  abstraction and intrinsic motivation.
\newblock In \emph{Advances in Neural Information Processing Systems},
  2016{\natexlab{a}}.

\bibitem[Kulkarni et~al.(2016{\natexlab{b}})Kulkarni, Saeedi, Gautam, and
  Gershman]{kulkarni2016deep}
Kulkarni, T.~D., Saeedi, A., Gautam, S., and Gershman, S.~J.
\newblock Deep successor reinforcement learning.
\newblock \emph{arXiv}, 2016{\natexlab{b}}.

\bibitem[Kushner \& Yin(2003)Kushner and Yin]{kushner2003stochastic}
Kushner, H. and Yin, G.~G.
\newblock \emph{Stochastic Approximation and Recursive Algorithms and
  Applications}.
\newblock Springer, 2003.

\bibitem[Lehnert \& Littman(2020)Lehnert and Littman]{lehnert2020successor}
Lehnert, L. and Littman, M.~L.
\newblock Successor features combine elements of model-free and model-based
  reinforcement learning.
\newblock \emph{Journal of Machine Learning Research}, 21:\penalty0
  196:1--196:53, 2020.

\bibitem[Lesner \& Scherrer(2015)Lesner and Scherrer]{lesner2015non}
Lesner, B. and Scherrer, B.
\newblock Non-stationary approximate modified policy iteration.
\newblock In \emph{Proceedings of the International Conference on Machine
  Learning}, 2015.

\bibitem[Machado et~al.(2017)Machado, Bellemare, and
  Bowling]{machado2017laplacian}
Machado, M.~C., Bellemare, M.~G., and Bowling, M.
\newblock A {L}aplacian framework for option discovery in reinforcement
  learning.
\newblock In \emph{Proceedings of the International Conference on Machine
  Learning}, 2017.

\bibitem[McGovern \& Barto(2001)McGovern and Barto]{mcgovern2001automatic}
McGovern, A. and Barto, A.~G.
\newblock Automatic discovery of subgoals in reinforcement learning using
  diverse density.
\newblock In \emph{Proceedings of the International Conference on Machine
  Learning}, 2001.

\bibitem[Menache et~al.(2002)Menache, Mannor, and Shimkin]{menache2002q}
Menache, I., Mannor, S., and Shimkin, N.
\newblock Q-cut — dynamic discovery of sub-goals in reinforcement learning.
\newblock In \emph{Proceedings of the European Conference on Machine Learning},
  2002.

\bibitem[Meyn(2022)]{meyn2022control}
Meyn, S.
\newblock \emph{Control Systems and Reinforcement Learning}.
\newblock Cambridge University Press, 2022.

\bibitem[Munos(2014)]{munos2014bandits}
Munos, R.
\newblock From bandits to {M}onte-{C}arlo tree search: The optimistic principle
  applied to optimization and planning.
\newblock \emph{Foundations and Trends® in Machine Learning}, 7\penalty0
  (1):\penalty0 1--129, 2014.

\bibitem[Nemecek \& Parr(2021)Nemecek and Parr]{nemecek2021policy}
Nemecek, M. and Parr, R.
\newblock Policy caches with successor features.
\newblock In \emph{Proceedings of the International Conference on Machine
  Learning}, 2021.

\bibitem[Norris(1998)]{norris1998markov}
Norris, J.~R.
\newblock \emph{Markov Chains}.
\newblock Cambridge University Press, 1998.

\bibitem[Osband et~al.(2013)Osband, Russo, and Van~Roy]{osband2013more}
Osband, I., Russo, D., and Van~Roy, B.
\newblock (more) efficient reinforcement learning via posterior sampling.
\newblock In \emph{Advances in Neural Information Processing Systems}, 2013.

\bibitem[Osband et~al.(2016)Osband, Blundell, Pritzel, and
  Van~Roy]{osband2016deep}
Osband, I., Blundell, C., Pritzel, A., and Van~Roy, B.
\newblock Deep exploration via bootstrapped {DQN}.
\newblock In \emph{Advances in Neural Information Processing Systems}, 2016.

\bibitem[Perron(1907)]{perron1907theorie}
Perron, O.
\newblock Zur {T}heorie der {M}atrices.
\newblock \emph{Mathematische Annalen}, 64\penalty0 (2):\penalty0 248--263,
  1907.

\bibitem[Pong et~al.(2018)Pong, Gu, Dalal, and Levine]{pong2018temporal}
Pong, V., Gu, S., Dalal, M., and Levine, S.
\newblock Temporal difference models: Model-free deep {RL} for model-based
  control.
\newblock In \emph{Proceedings of the International Conference on Learning
  Representations}, 2018.

\bibitem[Precup(2000)]{precup2000temporal}
Precup, D.
\newblock \emph{Temporal abstraction in reinforcement learning}.
\newblock PhD thesis, University of Massachusetts Amherst, 2000.

\bibitem[Precup et~al.(1998{\natexlab{a}})Precup, Sutton, and
  Singh]{precup1998multi}
Precup, D., Sutton, R.~S., and Singh, S.
\newblock Multi-time models for temporally abstract planning.
\newblock In \emph{Advances in Neural Information Processing Systems},
  1998{\natexlab{a}}.

\bibitem[Precup et~al.(1998{\natexlab{b}})Precup, Sutton, and
  Singh]{precup1998theoretical}
Precup, D., Sutton, R.~S., and Singh, S.
\newblock Theoretical results on reinforcement learning with temporally
  abstract options.
\newblock In \emph{Proceedings of the European Conference on Machine Learning},
  1998{\natexlab{b}}.

\bibitem[Precup et~al.(2000)Precup, Sutton, and Singh]{precup2000eligibility}
Precup, D., Sutton, R.~S., and Singh, S.~P.
\newblock Eligibility traces for off-policy policy evaluation.
\newblock In \emph{Proceedings of the International Conference on Machine
  Learning}, 2000.

\bibitem[Puterman(2014)]{puterman2014markov}
Puterman, M.~L.
\newblock \emph{Markov decision processes: Discrete stochastic dynamic
  programming}.
\newblock John Wiley \& Sons, 2014.

\bibitem[Rezende \& Mohamed(2015)Rezende and Mohamed]{rezende2015variational}
Rezende, D. and Mohamed, S.
\newblock Variational inference with normalizing flows.
\newblock In \emph{Proceedings of the International Conference on Machine
  Learning}, 2015.

\bibitem[Rezende et~al.(2014)Rezende, Mohamed, and
  Wierstra]{rezende2014stochastic}
Rezende, D.~J., Mohamed, S., and Wierstra, D.
\newblock Stochastic backpropagation and approximate inference in deep
  generative models.
\newblock In \emph{Proceedings of the International Conference on Machine
  Learning}, 2014.

\bibitem[Robbins \& Siegmund(1971)Robbins and Siegmund]{robbins1971convergence}
Robbins, H. and Siegmund, D.
\newblock A convergence theorem for non negative almost supermartingales and
  some applications.
\newblock In \emph{Optimizing methods in statistics}, pp.\  233--257. Elsevier,
  1971.

\bibitem[Russo et~al.(2018)Russo, Van~Roy, Kazerouni, Osband, Wen,
  et~al.]{russo2018tutorial}
Russo, D.~J., Van~Roy, B., Kazerouni, A., Osband, I., Wen, Z., et~al.
\newblock A tutorial on {T}hompson sampling.
\newblock \emph{Foundations and Trends{\textregistered} in Machine Learning},
  11\penalty0 (1):\penalty0 1--96, 2018.

\bibitem[Schaul et~al.(2015)Schaul, Horgan, Gregor, and
  Silver]{schaul2015universal}
Schaul, T., Horgan, D., Gregor, K., and Silver, D.
\newblock Universal value function approximators.
\newblock In \emph{Proceedings of the International Conference on Machine
  Learning}, 2015.

\bibitem[Scherrer \& Lesner(2012)Scherrer and Lesner]{scherrer2012use}
Scherrer, B. and Lesner, B.
\newblock On the use of non-stationary policies for stationary infinite-horizon
  {M}arkov decision processes.
\newblock In \emph{Advances in Neural Information Processing Systems}, 2012.

\bibitem[Schulman et~al.(2016)Schulman, Moritz, Levine, Jordan, and
  Abbeel]{schulman2015high}
Schulman, J., Moritz, P., Levine, S., Jordan, M., and Abbeel, P.
\newblock High-dimensional continuous control using generalized advantage
  estimation.
\newblock In \emph{Proceedings of the International Conference on Learning
  Representations}, 2016.

\bibitem[Seneta(2006)]{seneta2006non}
Seneta, E.
\newblock \emph{Non-Negative Matrices and {M}arkov Chains}.
\newblock Springer Science \& Business Media, 2006.

\bibitem[Silver \& Ciosek(2012)Silver and Ciosek]{silver2012compositional}
Silver, D. and Ciosek, K.
\newblock Compositional planning using optimal option models.
\newblock In \emph{Proceedings of the International Conference on Machine
  Learning}, 2012.

\bibitem[{\c{S}}im{\c{s}}ek \& Barto(2004){\c{S}}im{\c{s}}ek and
  Barto]{csimcsek2004using}
{\c{S}}im{\c{s}}ek, {\"O}. and Barto, A.~G.
\newblock Using relative novelty to identify useful temporal abstractions in
  reinforcement learning.
\newblock In \emph{Proceedings of the International Conference on Machine
  Learning}, 2004.

\bibitem[Sohn et~al.(2015)Sohn, Lee, and Yan]{sohn2015learning}
Sohn, K., Lee, H., and Yan, X.
\newblock Learning structured output representation using deep conditional
  generative models.
\newblock In \emph{Advances in Neural Information Processing Systems}, 2015.

\bibitem[Strens(2000)]{strens2000bayesian}
Strens, M.
\newblock A {B}ayesian framework for reinforcement learning.
\newblock In \emph{Proceedings of the International Conference on Machine
  Learning}, 2000.

\bibitem[Sutton(1995)]{sutton1995td}
Sutton, R.~S.
\newblock {TD} models: Modeling the world at a mixture of time scales.
\newblock In \emph{Proceedings of the International Conference on Machine
  Learning}, 1995.

\bibitem[Sutton \& Barto(2018)Sutton and Barto]{sutton2018reinforcement}
Sutton, R.~S. and Barto, A.~G.
\newblock \emph{Reinforcement Learning: An Introduction}.
\newblock MIT Press, 2018.

\bibitem[Sutton et~al.(1999)Sutton, Precup, and Singh]{sutton1999between}
Sutton, R.~S., Precup, D., and Singh, S.
\newblock Between mdps and semi-mdps: A framework for temporal abstraction in
  reinforcement learning.
\newblock \emph{Artificial intelligence}, 112\penalty0 (1-2):\penalty0
  181--211, 1999.

\bibitem[Szepesv{\'a}ri(2010)]{szepesvari2010algorithms}
Szepesv{\'a}ri, {\relax Cs}.
\newblock \emph{Algorithms for Reinforcement Learning}.
\newblock Morgan \& Claypool, 2010.

\bibitem[Sz{\"o}r{\'e}nyi et~al.(2014)Sz{\"o}r{\'e}nyi, Kedenburg, and
  Munos]{szorenyi2014optimistic}
Sz{\"o}r{\'e}nyi, B., Kedenburg, G., and Munos, R.
\newblock Optimistic planning in {M}arkov decision processes using a generative
  model.
\newblock In \emph{Advances in Neural Information Processing Systems}, 2014.

\bibitem[Todorov et~al.(2012)Todorov, Erez, and Tassa]{todorov2012mujoco}
Todorov, E., Erez, T., and Tassa, Y.
\newblock {MuJoCo}: A physics engine for model-based control.
\newblock In \emph{Proceedings of the IEEE International Conference on
  Intelligent Robots and Systems}, 2012.

\bibitem[Tomar et~al.(2020)Tomar, Efroni, and Ghavamzadeh]{tomar2020multi}
Tomar, M., Efroni, Y., and Ghavamzadeh, M.
\newblock Multi-step greedy reinforcement learning algorithms.
\newblock In \emph{Proceedings of the International Conference on Machine
  Learning}, 2020.

\bibitem[Touati \& Ollivier(2021)Touati and Ollivier]{touati2021learning}
Touati, A. and Ollivier, Y.
\newblock Learning one representation to optimize all rewards.
\newblock In \emph{Advances in Neural Information Processing Systems}, 2021.

\bibitem[Toussaint \& Storkey(2005)Toussaint and Storkey]{toussaint2005}
Toussaint, M. and Storkey, A.
\newblock Probabilistic inference for computing optimal policies in {MDP}s.
\newblock In \emph{{NIPS} Workshop on Game Theory, Machine Learning and
  Reasoning under Uncertainty}, 2005.

\bibitem[Toussaint \& Storkey(2006)Toussaint and
  Storkey]{toussaint2006probabilistic}
Toussaint, M. and Storkey, A.
\newblock Probabilistic inference for solving discrete and continuous state
  {M}arkov decision processes.
\newblock In \emph{Proceedings of the International Conference on Machine
  Learning}, 2006.

\bibitem[Wulfmeier et~al.(2021)Wulfmeier, Rao, Hafner, Lampe, Abdolmaleki,
  Hertweck, Neunert, Tirumala, Siegel, Heess, and
  Riedmiller]{wulfmeier2021data}
Wulfmeier, M., Rao, D., Hafner, R., Lampe, T., Abdolmaleki, A., Hertweck, T.,
  Neunert, M., Tirumala, D., Siegel, N., Heess, N., and Riedmiller, M.
\newblock Data-efficient hindsight off-policy option learning.
\newblock In \emph{Proceedings of the International Conference on Machine
  Learning}, 2021.

\bibitem[Yao et~al.(2014)Yao, Szepesv{\'a}ri, Sutton, Modayil, and
  Bhatnagar]{yao2014universal}
Yao, H., Szepesv{\'a}ri, {\relax Cs}., Sutton, R.~S., Modayil, J., and
  Bhatnagar, S.
\newblock Universal option models.
\newblock In \emph{Advances in Neural Information Processing Systems}, 2014.

\bibitem[Zahavy et~al.(2021)Zahavy, Barreto, Mankowitz, Hou, O'Donoghue,
  Kemaev, and Singh]{zahavy2021discovering}
Zahavy, T., Barreto, A., Mankowitz, D.~J., Hou, S., O'Donoghue, B., Kemaev, I.,
  and Singh, S.
\newblock Discovering a set of policies for the worst case reward.
\newblock In \emph{Proceedings of the International Conference on Learning
  Representations}, 2021.

\end{thebibliography}
\bibliographystyle{icml2022}

\clearpage
\appendix
\onecolumn

\section*{\centering Generalised Policy Improvement with Geometric Policy Composition: Appendices}

We briefly summarise the contents of the appendices here for convenience.
\begin{itemize}[topsep=0cm]
    \item Appendix~\ref{sec:appendix:background} provides
    further discussion of related work, as well as
    additional context for geometric horizon models and their precise connection with concepts such as the successor representation.
    \item Appendix~\ref{sec:appendix:proofs} provides proofs for the results in the main paper concerning evaluating and improving over \gsps.
    \item Appendix~\ref{sec:appendix:cetd-convergence} provides a proof of the CETD convergence result presented in the main paper, and illustrations of an implementation of the algorithm.
    \item Appendix~\ref{sec:appendix:examples} provides further examples and illustrations to complement the findings of the main paper, including counterexamples illustrating the necessity of several conditions in our results and algorithm pseudocode for application of \ggpi to transfer and policy iteration.
    \item Appendix~\ref{sec:appendix:experiment_details} provides further experimental details and results.
    \item Appendix~\ref{sec:appendix:extensions} provides a generalisation of the core policy evaluation result in the main paper.
\end{itemize}

\section{Additional background, related work and context}
\label{sec:appendix:background}

\subsection{Related work}

Below, we discuss connections of this work to several sub-fields of reinforcement learning.

\textbf{Other generalisations of greedy policy improvement.} Our proposed approach is one way of interpolating between greedy improvement and full planning.  \citet{efroni2018beyond,efroni2018multiple,tomar2020multi} consider multi-step improvement as a different means of achieving such a trade-off, both analysing the approach theoretically, and empirically investigating the approach in combination with deep reinforcement learning.
More generally, recent developments in Monte Carlo tree search and related ideas in planning \citep{busoniu2012optimistic,busoniu2012survey,feldman2013monte,feldman2014simple,munos2014bandits,szorenyi2014optimistic,feldman2014mabs,efroni2018beyond,efroni2019combine,dalal2021improve} can all be viewed as sitting between greedy improvement and computation of the exact optimal policy, and have the potential to be profitably combined with GHMs and \GSPs.

\textbf{Option models.} Modelling discounted visitation distributions was proposed by \citet{sutton1995td}, who termed them $\beta$-models. These models were generalised by \citet{precup1998multi}, who proposed multi-time models, which encompass both $\beta$-models and $n$-step models as special cases. More generally, there is a long-established practice of learning option models \citep{sutton1999between,precup1998theoretical,precup2000temporal}, and using such models in a compositional manner \citep{silver2012compositional}.
A central difference between option models and this work is that
the use of geometric switching times (or in the language of options, constant termination probabilities)
means we do not need to model accumulated return obtained by each base policy, or the time taken executing each base policy,
making applications to transfer possible.
In this regard, the approach of this paper is related to universal option models \citep{yao2014universal}, which also disentangle reward and transition structure; constant termination probabilities more easily facilitate sample-based composition of such models.
Although orthogonal to the direction of this work, the problem of \emph{option discovery} is central to hierarchical RL \cite{mcgovern2001automatic,menache2002q,csimcsek2004using,brunskill2014pac,kulkarni2016hierarchical,machado2017laplacian,harb2018waiting,harutyunyan2019termination,wulfmeier2021data}, and is clearly relevant here too, essentially posing the question of where the base policies supplied to \ggpi should come from.

\textbf{The successor representation and visitation distributions.}
Discounted visitation distributions are closely related to the successor representation (SR), introduced by \citet{dayan1993improving}, who also proposed a temporal-difference method for learning the SR.
As discussed above, \citet{janner2020gamma} introduce several methods for learning approximate discounted visitations on continuous state spaces, among other contributions. Several other recent works also target this problem.  \citet{blier2021learning} propose several methods for learning parametric approximations to discounted visitation distributions, including an approach based on low-rank approximations. Building on this work, \citet{touati2021learning} propose a compact representation of an MDP that in principle allows for the optimal policy associated with any reward function to be computed without planning, in practice relying on a 
low-dimensional
approximation 
of the visitation
distributions.
\citet{eysenbach2021c} propose an approach based on contrastive learning; these works also note a close connection with the domain of
goal-conditioned RL \citep{kaelbling1993learning,schaul2015universal,andrychowicz2017hindsight,pong2018temporal}.

\textbf{Successor features and GPI.} \citet{barreto2016successor} introduced successor features, a generalisation of the successor representation, and GPI, in the context of transfer; later \citet{pmlr-v80-barreto18a} discussed the practicalities involved in combining the approach with deep learning. The same conceptual machinery was then used by \citet{barreto2019optionkeyboard} to promote temporal abstraction in RL. \citet{borsa2019universal} introduced a generalised form of successor features that has a representation of a policy as one of their inputs, thus allowing generalisation along the space of policies. \citet{hunt2019composing} extended successor features to entropy-regularized RL and addressed some of the challenges involved in applying GPI to continuous action spaces. \citet{grimm2019disentagled} and \citet{hansen2020fast} propose approaches that allow the features used in successor features to be learned from data in the absence of a reward signal. \citet{zahavy2021discovering} and \citet{alver2022constructing} studied the problem of how to construct a good set of policies to be used with GPI.
\citet{lehnert2020successor} showed how successor features can be seen as a link between model-free and model-based RL. \citet{nemecek2021policy} studied a related problem: given a set of successor features and a reward function, they showed how to estimate the performance of the associated GPI policy and use this estimate to decide whether to add new successor features to the set. Recently, \citet{barreto2020fast} presented a comprehensive account of GPI and successor features in which the latter are cast as a special case of a more general concept called \emph{generalised policy evaluation} (GPE). We believe GHMs can be understood as an alternative form of GPE.

\textbf{Non-Markov policies.} Non-Markov/homogeneous policies are used in several other sub-fields of reinforcement learning in MDPs. \citet{scherrer2012use,lesner2015non} consider approximate value iteration, policy iteration, and modified policy iteration algorithms, proposing the use of non-homogeneous policies that repeatedly cycle through a sequence of recent greedy Markov policies, and showing that such policies obtain improved performance bounds. In contrast, \ggpi always produces a Markov policy, but one which improves upon non-Markov policies. Non-Markov policies are also commonly-encountered in exploration, for example via action repetition \citep{dabney2020temporally}, and Thompson sampling and its approximations and variations  \citep{strens2000bayesian,osband2013more,osband2016deep,agrawal2017optimistic,russo2018tutorial}.

\subsection{Successor features, the successor representation, and geometric horizon models}\label{sec:appendix:sr}

We provide some additional discussion regarding the relationship between the successor representation, successor features, and geometric horizon models in the case of finite state spaces $\mathcal{X}$. For ease of comparison, we phrase all three concepts in terms of variants that condition on an initial state-action pair, although the successor representation was originally introduced as a state-indexed quantity.

\citet{dayan1993improving} introduced the successor representation in reinforcement learning. In the context of discounted MDPs, the definition is as follows.

\begin{definition}
    For a given policy $\pi : \mathcal{X} \rightarrow \mathscr{P}(\mathcal{A})$, 
    the corresponding \emph{successor representation} of a state-action pair $(x, a) \in \mathcal{X} \times \mathcal{A}$ is the vector
    \begin{align*}
        \lambda^\pi(x, a) = \mathbb{E}^\pi_{x, a} \Big \lbrack\sum_{k=0}^\infty \gamma^k e_{X_k} \Big\rbrack \in \mathbb{R}^\mathcal{X} \, ,
    \end{align*}
    where $e_{x'} \in \mathbb{R}^\mathcal{X}$ is the one-hot vector for the coordinate $x'$.
\end{definition}

We can view $\lambda^\pi(x, a)$ as an unnormalised probability distribution; scaling by a factor of $1-\gamma$ yields a probability distribution that corresponds to sampling a time $T \sim \text{Geometric}(1-\gamma)$, and then sampling $T-1$ transition steps in the environment under $\pi$, initialised at the state-action pair $(x, a)$.

\citet{barreto2016successor} introduced successor features as a generalisation of the successor representation.

\begin{definition}
    Consider a base feature map $\phi : \mathcal{X} \times \mathcal{A} \times \mathcal{X} \rightarrow \mathbb{R}^K$. For a given policy $\pi : \mathcal{X} \rightarrow \mathscr{P}(\mathcal{A})$, 
    the corresponding vector of \emph{successor features} of a state $x \in \mathcal{X}$ is the vector
    \begin{align*}
        \psi^\pi(x, a) = \mathbb{E}^\pi_{x, a}\Big \lbrack \sum_{t \geq 0} \gamma^t \phi(X_t, A_t, X_{t+1}) \Big \rbrack \in \mathbb{R}^K \, .    
    \end{align*}
\end{definition}

The successor representation is subsumed as a special case of successor features when $\phi(x, a, x') \in \mathbb{R}^{\mathcal{X}}$ is taken to be the basis vector for state $x$. The following result relates the discounted future state-visitation distributions of Definition~\ref{def:state-visitation} with successor features.

\begin{proposition}
\label{thm:sfs_ghms}
    The discounted future state-visitation distribution $\mu^\pi_\gamma$ is an instance of successor features, with the base feature map $\phi(x, a, x') = (1-\gamma)e_{x'} \in \mathbb{R}^{\mathcal{X}}$, where $e_{x'}$ is the one-hot vector for the coordinate $x'$.
\end{proposition}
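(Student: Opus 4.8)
The plan is to prove Proposition~\ref{thm:sfs_ghms} by direct substitution of the proposed base feature map into the definition of successor features and recognition of the result as the discounted future state-visitation distribution of Definition~\ref{def:state-visitation}. There is no deep content here; the statement is essentially an exercise in unwinding definitions, so the ``proof'' amounts to a single short computation together with a remark on why the index offset lines up.

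Concretely, I would fix an initial state-action pair $(x_0, a_0) \in \mathcal{X} \times \mathcal{A}$ and substitute $\phi(x, a, x') = (1-\gamma) e_{x'}$ into the definition of $\psi^\pi$, obtaining
\begin{align*}
    \psi^\pi(x_0, a_0) = \mathbb{E}^\pi_{x_0, a_0}\Big[ \sum_{t \geq 0} \gamma^t (1-\gamma) e_{X_{t+1}} \Big] = (1-\gamma) \sum_{t \geq 0} \gamma^t \, \mathbb{E}^\pi_{x_0, a_0}[e_{X_{t+1}}] \, ,
\end{align*}
where the interchange of expectation and infinite sum is justified by nonnegativity of all terms (monotone convergence). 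I would then read off the result coordinate-wise: for any $x \in \mathcal{X}$, the $x$-th coordinate of $e_{X_{t+1}}$ equals $1$ when $X_{t+1} = x$ and $0$ otherwise, so its expectation under $\mathbb{E}^\pi_{x_0,a_0}$ is $\mathbb{P}^\pi_{x_0, a_0}(X_{t+1} = x)$. Hence the $x$-th coordinate of $\psi^\pi(x_0, a_0)$ is $(1-\gamma) \sum_{t \geq 0} \gamma^t \mathbb{P}^\pi_{x_0, a_0}(X_{t+1} = x)$, which is exactly $\mu^\pi_\gamma(x \mid x_0, a_0)$ after renaming the summation index $t$ to $k$. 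I would also note in passing that this recomputes the fact that $\mu^\pi_\gamma(\cdot \mid x_0, a_0)$ is a bona fide probability distribution, since $\sum_{x} \mathbb{P}^\pi_{x_0, a_0}(X_{t+1} = x) = 1$ for each $t$ and $(1-\gamma) \sum_{t \geq 0} \gamma^t = 1$.

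The only step requiring any care—rather than being a genuine obstacle—is bookkeeping of the time index: the $t = 0$ term of the successor-feature sum involves $\phi(X_0, A_0, X_1)$, i.e. the one-hot vector $e_{X_1}$ rather than $e_{X_0}$, which is precisely why the initial state $x_0$ does not contribute, matching the $k = 0$ term of Definition~\ref{def:state-visitation} (the ``future'' qualifier footnoted there). Once this alignment is observed, the identity $\psi^\pi = \mu^\pi_\gamma$ follows immediately from the displayed computation, completing the proof.
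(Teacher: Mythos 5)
Your proof is correct and follows essentially the same route as the paper's: substitute the feature map into the definition of $\psi^\pi$, interchange expectation and sum (you invoke monotone convergence via nonnegativity, the paper uses dominated convergence via boundedness—both valid), and read off the coordinates as $\mu^\pi_\gamma$. The extra remarks on the time-index offset and normalisation are fine but not needed.
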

\begin{proof}
    We directly calculate the $x'$ coordinate of $\psi^\pi(x, a)$ as:
    \begin{align*}
        \mathbb{E}^\pi_{x, a}\Big \lbrack \sum_{t \geq 0} \gamma^t \mathbbm{1}\{ X_{t+1} = x' \} \Big \rbrack
        & = \mathbb{E}^\pi_{x, a}\Big \lbrack \sum_{t \geq 0} \gamma^t (1-\gamma) \mathbbm{1}\{ X_{t+1} = x' \} \Big \rbrack \\
        & \overset{(a)}{=} \sum_{t \geq 0} \gamma^t (1-\gamma) \mathbb{E}^\pi_{x, a}\Big \lbrack  \mathbbm{1}\{ X_{t+1} = x' \}  \Big \rbrack \\
        & = (1-\gamma) \sum_{t \geq 0} \gamma^t \mathbb{P}^\pi_{x_,a}(  X_{t+1} = x' ) \\
        & = \mu^\pi_\gamma(x'|x, a) \, ,
    \end{align*}
    where the swapping of summation and expectation in (a) is justified by the dominated convergence theorem, since the integrand is bounded.
\end{proof}

Proposition~\ref{thm:sfs_ghms} sheds light on the relationship between successor features and GHMs in the case of a finite state space $\mathcal{X}$. When using the features $\phi(x, a, x') = (1-\gamma)e_{x'}$, the successor features of policy $\pi$ become the $\gamma$-discounted state-visitation distribution of $\pi$---that is, $\psi^\pi(x,a) = \mu^\pi_\gamma(\cdot| x, a)$; the corresponding GHM is a generative model of this distribution.

\section{Proofs relating to geometric horizon models and generalised policy improvement}\label{sec:appendix:proofs}

\subsection{Proofs of results in Section~\ref{sec:ghm}}

{
\renewcommand\footnote[1]{}
\propGeometricInterpretation*
}

\begin{proof}
    We have
    \begin{align*}
        \mathbb{P}^\pi_{x, a}(X_T = x') & = \mathbb{E}[\mathbb{P}^\pi_{x, a}(X_T = x' \mid T)] \\
        & = \sum_{k=1}^\infty \mathbb{P}(T=k)\mathbb{P}^\pi_{x, a}(X_k = x' \mid T=k)\\
        & = \sum_{k=1}^\infty (1-\gamma) \gamma^{t-1} \mathbb{P}^\pi_{x, a}(X_k = x')\\
        & = \mu^\pi_\gamma(x'|x, a) \, ,
    \end{align*}
    as required.
\end{proof}

\propBasicEval*

\begin{proof}
    We have
    \begin{align*}
        Q^\pi_\gamma(x, a)&  = \mathbb{E}^\pi_{x, a}\left\lbrack\sum_{t = 0}^\infty \gamma^t R_t \right\rbrack \\
        & = \mathbb{E}^\pi_{x, a}[R_0] + \mathbb{E}^\pi_{x, a}\Big\lbrack\sum_{t = 1}^\infty \gamma^t R_t\Big\rbrack \\
        & = r(x,a) + \gamma \sum_{t =1}^\infty  \gamma^{t-1} \sum_{x' \in \mathcal{X}} \mathbb{P}^\pi_{x, a}(X_t = x' ) r^\pi(x') \\
        & \overset{(a)}{=} r(x, a) + \gamma \sum_{x' \in \mathcal{X}} \sum_{t=0}^\infty \gamma^t \mathbb{P}^\pi_{x, a}(X_{t+1} = x') r^\pi(x') \\
        & = r(x, a) + \gamma (1-\gamma)^{-1} \sum_{x' \in \mathcal{X}} \mu^\pi_\gamma(x'|x, a) r^\pi(x') \\
        & = r(x, a) +  \gamma (1-\gamma)^{-1} \mathbb{E}_{X' \sim \mu^\pi_\gamma(\cdot|x, a)}[ r^\pi(X') ] \, .
    \end{align*}
    as required. The switching of the order of summation at (a) can be justified, for example, by noting that the double-sum is absolutely convergent:
    \begin{align*}
        \sum_{t=1}^\infty \sum_{x' \in \mathcal{X}} \left| \gamma^{t-1} \mathbb{P}^\pi_{x, a}(X_t = x') r^\pi(x') \right| \leq \sum_{t=1}^\infty \gamma^{t-1}  R^\pi_{\text{max}} = R^\pi_\text{max} (1-\gamma)^{-1} < \infty \, .
    \end{align*}
    where $R^\pi_\text{max} = \max_x |r^\pi(x)| < \infty$, as $|\mathcal{X}|$ is finite.
\end{proof}

\subsection{Proof of result from Section~\ref{sec:ghm-markov}}

Below, we re-derive a result essentially equivalent to Theorem~2 of \citet{janner2020gamma}, stated as Proposition~\ref{prop:markov-eval} in our main paper, with a slightly different proof technique.
The central idea is to develop a different way of sampling the random variable $X_T$ appearing in Proposition~\ref{prop:geometric-interpretation}, using the following results.

\begin{restatable}{lemma}{lemGeomRandomSum}\label{lem:geometric-random-sum}
    Let $(T_i)_{i =1}^\infty \overset{\text{i.i.d.}}{\sim} \text{Geometric}(1-\beta)$, and independently, $N \sim \text{Geometric}(\nicefrac{1 - \gamma}{1 - \beta})$. Then the random sum $\sum_{i=1}^N T_i$ has distribution $\text{Geometric}(1-\gamma)$.
\end{restatable}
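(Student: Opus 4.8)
The plan is to establish this as a statement about probability generating functions (PGFs). Recall that for $T \sim \text{Geometric}(p)$ supported on $\{1, 2, \ldots\}$, the PGF is $G_T(s) = \mathbb{E}[s^T] = \frac{ps}{1 - (1-p)s}$ for $|s| < (1-p)^{-1}$. The random sum $S = \sum_{i=1}^N T_i$, being a compound distribution with $N$ independent of the i.i.d.\ family $(T_i)$, has PGF given by the composition $G_S(s) = G_N(G_{T_1}(s))$, a standard fact that I would either cite or justify in one line by conditioning on $N$ and using independence.

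So the key computation is to substitute. With $T_i \sim \text{Geometric}(1-\beta)$ we have $G_{T_1}(s) = \frac{(1-\beta)s}{1 - \beta s}$, and with $N \sim \text{Geometric}\big(\tfrac{1-\gamma}{1-\beta}\big)$ we have $G_N(u) = \frac{\frac{1-\gamma}{1-\beta} u}{1 - \big(1 - \frac{1-\gamma}{1-\beta}\big)u} = \frac{(1-\gamma)u}{(1-\beta) - (\beta - \gamma)u}$, using $1 - \frac{1-\gamma}{1-\beta} = \frac{\beta - \gamma}{1 - \beta}$. Plugging $u = \frac{(1-\beta)s}{1-\beta s}$ into $G_N$ and simplifying the resulting rational function, the factors of $(1-\beta)$ and $(1 - \beta s)$ should cancel, leaving $G_S(s) = \frac{(1-\gamma)s}{1 - \gamma s}$, which is exactly the PGF of $\text{Geometric}(1-\gamma)$. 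Since a distribution on $\{1,2,\ldots\}$ is determined by its PGF on a neighbourhood of $0$, this proves $S \sim \text{Geometric}(1-\gamma)$.

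An alternative I would keep in reserve, in case a PGF-free argument is preferred, is a direct combinatorial/analytic computation of $\mathbb{P}(S = k)$: condition on $N = n$, so $S$ is a sum of $n$ i.i.d.\ $\text{Geometric}(1-\beta)$ variables, which has a negative binomial distribution $\mathbb{P}(S = k \mid N = n) = \binom{k-1}{n-1}(1-\beta)^n \beta^{k-n}$ for $k \geq n$; then sum against $\mathbb{P}(N = n) = \big(\tfrac{1-\gamma}{1-\beta}\big)\big(\tfrac{\beta-\gamma}{1-\beta}\big)^{n-1}$ over $n = 1, \ldots, k$ and collapse the binomial series using $\sum_{n=1}^k \binom{k-1}{n-1} x^{n-1} = (1+x)^{k-1}$ with an appropriate $x$, arriving at $\gamma^{k-1}(1-\gamma)$. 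I do not expect any genuine obstacle here; the only thing to be careful about is bookkeeping of the parameter $\tfrac{1-\gamma}{1-\beta}$, which lies in $(0,1]$ precisely because $0 \leq \beta < \gamma$ (and the edge case $\beta = 0$ reduces $T_i$ to the constant $1$, consistent with $S = N$), so all the geometric-series manipulations are valid on a neighbourhood of $s = 0$. The PGF route is cleanest and is what I would present.
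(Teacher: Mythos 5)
Your proposal is correct and follows essentially the same route as the paper, which also proves this via probability generating functions using the compound-sum identity $G_S(s) = G_N(G_{T_1}(s))$ and direct substitution of the two geometric PGFs. One trivial slip to fix when writing it up: $1 - \tfrac{1-\gamma}{1-\beta} = \tfrac{\gamma-\beta}{1-\beta}$, not $\tfrac{\beta-\gamma}{1-\beta}$, so $G_N(u) = \tfrac{(1-\gamma)u}{(1-\beta) - (\gamma-\beta)u}$; with this sign the cancellation you describe goes through and yields $\tfrac{(1-\gamma)s}{1-\gamma s}$ as claimed.
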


\begin{restatable}{lemma}{lemmaGeometricFixedSum}\label{lem:geometric-fixed-sum}
    Let $(T_i)_{i=1}^{n-1} \overset{\text{i.i.d.}}{\sim} \text{Geometric}(1-\beta)$, and independently, $T' \sim \text{Geometric}(1-\gamma)$, and $N'$ a random variable taking variables in $\{1,\ldots,n\}$, with probabilities
    \begin{align*}
        \mathbb{P}(N'=m) = \frac{1 - \gamma}{1-\beta} \left(\frac{\gamma - \beta}{1 - \beta}\right)^{m-1} \text{ for }m=1,\ldots,n-1 \, , \text{ and } \mathbb{P}(N'=n) = \left(\frac{\gamma - \beta}{1 - \beta}\right)^{n-1} \, .
    \end{align*}
    Then the random sum
    \begin{align*}
        \sum_{i=1}^{\min(N', n-1)} T_i + \mathbbm{1}\{N' = n\} T'
    \end{align*}
    has distribution $\text{Geometric}(1-\gamma)$.
\end{restatable}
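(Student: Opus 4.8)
\textbf{Proof plan for Lemma~\ref{lem:geometric-fixed-sum}.}
The plan is to prove this as a fairly direct consequence of Lemma~\ref{lem:geometric-random-sum}, by recognising $N'$ as a truncated version of the geometric random variable $N$ appearing there. First I would set $q = \nicefrac{(1-\gamma)}{(1-\beta)}$, so that $N \sim \text{Geometric}(q)$ has $\mathbb{P}(N = m) = q(1-q)^{m-1}$ and $\mathbb{P}(N \geq m) = (1-q)^{m-1}$; note $1 - q = \nicefrac{(\gamma - \beta)}{(1-\beta)}$. The key observation is then that $N'$ has exactly the law of $\min(N, n)$: for $m = 1,\ldots,n-1$ we have $\mathbb{P}(\min(N,n) = m) = \mathbb{P}(N = m) = q(1-q)^{m-1} = \frac{1-\gamma}{1-\beta}\big(\frac{\gamma-\beta}{1-\beta}\big)^{m-1}$, and $\mathbb{P}(\min(N,n) = n) = \mathbb{P}(N \geq n) = (1-q)^{n-1} = \big(\frac{\gamma-\beta}{1-\beta}\big)^{n-1}$, matching the stated probabilities for $N'$ term-by-term.

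Next I would couple the two constructions on a common probability space. Take an infinite i.i.d.\ sequence $(T_i)_{i \geq 1} \sim \text{Geometric}(1-\beta)$ and an independent $N \sim \text{Geometric}(q)$, and set $S = \sum_{i=1}^N T_i$, which by Lemma~\ref{lem:geometric-random-sum} has distribution $\text{Geometric}(1-\gamma)$. I want to show the random sum in the statement has this same distribution. Split on the event $\{N < n\}$ versus $\{N \geq n\}$. On $\{N < n\}$, we have $\min(N', n-1) = N$ and the indicator $\mathbbm{1}\{N' = n\}$ vanishes, so conditionally the target sum equals $\sum_{i=1}^N T_i = S$. On $\{N \geq n\}$, we have $N' = n$, so the target sum is $\sum_{i=1}^{n-1} T_i + T'$; the point is that, conditioned on $\{N \geq n\}$, the variable $S = \sum_{i=1}^N T_i = \sum_{i=1}^{n-1} T_i + \sum_{i=n}^N T_i$, and by the memorylessness of the geometric distribution the conditional law of the tail $\sum_{i=n}^N T_i$ given $\{N \geq n\}$ is again $\text{Geometric}(1-\gamma)$ — precisely the law of $T'$ — and this tail is conditionally independent of $T_1,\ldots,T_{n-1}$. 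Hence on each of the two events, the target random sum has the same conditional law as $S$, so it has the same unconditional law, namely $\text{Geometric}(1-\gamma)$.

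Alternatively, and perhaps cleaner to write, I would avoid the coupling and just compute the probability generating function (or Laplace/z-transform) $\mathbb{E}[z^{W}]$ of $W = \sum_{i=1}^{\min(N',n-1)} T_i + \mathbbm{1}\{N'=n\}T'$ directly. Writing $g_\beta(z) = \mathbb{E}[z^{T_1}] = \frac{(1-\beta)z}{1-\beta z}$ and $g_\gamma(z) = \frac{(1-\gamma)z}{1-\gamma z}$, conditioning on $N'$ gives
\begin{align*}
\mathbb{E}[z^W] = \sum_{m=1}^{n-1} \frac{1-\gamma}{1-\beta}\Big(\frac{\gamma-\beta}{1-\beta}\Big)^{m-1} g_\beta(z)^m + \Big(\frac{\gamma-\beta}{1-\beta}\Big)^{n-1} g_\beta(z)^{n-1} g_\gamma(z),
\end{align*}
which is a finite geometric series plus a correction term; summing it and simplifying should collapse to $g_\gamma(z) = \frac{(1-\gamma)z}{1-\gamma z}$, establishing the claim by uniqueness of generating functions. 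The main obstacle is purely bookkeeping in this second route — verifying that the geometric-series sum telescopes correctly against the $m = n$ boundary term — whereas the first route pushes all the work onto the memorylessness argument, which is conceptually transparent but needs the conditional-independence claim stated carefully. I would present the coupling argument as the main proof, since it most directly exposes why the specific weights on $N'$ are exactly what is needed, and it reuses Lemma~\ref{lem:geometric-random-sum} rather than re-deriving a generating-function identity.
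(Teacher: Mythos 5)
Your proposal is correct and follows essentially the same route as the paper: both identify $N'$ as (distributionally) $\min(N,n)$ for the geometric $N$ of Lemma~\ref{lem:geometric-random-sum}, decompose $\sum_{i=1}^N T_i$ on the event $\{N \geq n\}$, and invoke memorylessness to replace the tail $\sum_{i=n}^N T_i$ by an independent $\text{Geometric}(1-\gamma)$ variable. Your write-up is if anything slightly more explicit about the coupling and the conditional-independence of the tail, and the PGF computation you sketch as an alternative is a valid but unnecessary fallback.
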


\begin{restatable}{proposition}{propComposeDistribution}\label{prop:compose-distribution}
    If we define a sequence of states and actions $(X^{(n)}, A^{(n)})_{n \geq 0}$ inductively by $(X^{(0)}, A^{(0)}) = (x_0, a_0)$, $X^{(n+1)} \sim \mu^\pi_\beta(\cdot|X^{(n)}, A^{(n)})$, $A^{(n+1)} \sim \pi(\cdot|X^{(n+1)})$, then 
    $X^{(n)} \overset{\mathcal{D}}{=} X_{\sum_{i=1}^n T_i}$.
\end{restatable}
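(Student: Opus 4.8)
The plan is to prove Proposition~\ref{prop:compose-distribution} by induction on $n$, using the composition structure of GHMs together with Lemma~\ref{lem:geometric-random-sum}. The base case $n=0$ is immediate, since $X^{(0)} = x_0 = X_0$ (the empty sum being zero). For the inductive step, I would condition on $(X^{(n-1)}, A^{(n-1)})$ and use the defining property of $\mu^\pi_\beta$ together with Proposition~\ref{prop:geometric-interpretation}: conditional on the state-action pair $(X^{(n-1)}, A^{(n-1)})$, the next sample $X^{(n)} \sim \mu^\pi_\beta(\cdot|X^{(n-1)}, A^{(n-1)})$ is distributed as the state reached after $T_n \sim \text{Geometric}(1-\beta)$ further steps of $\pi$ starting from $(X^{(n-1)}, A^{(n-1)})$, where $T_n$ is independent of everything so far.

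The key bookkeeping step is then to match up the trajectory pieces. By the inductive hypothesis, $X^{(n-1)}$ has the law of $X_{\sum_{i=1}^{n-1} T_i}$ along a single trajectory generated by $\pi$. The action $A^{(n-1)} \sim \pi(\cdot|X^{(n-1)})$ is exactly the action that trajectory would take at that time step, so continuing for a further independent $T_n$ steps lands at $X_{\sum_{i=1}^{n} T_i}$. Making this precise requires a careful statement that the "restart from $(X^{(n-1)}, A^{(n-1)})$ and run for $T_n$ steps" construction can be glued onto the original trajectory using the Markov property of $\pi$ and the memorylessness of the geometric distribution — i.e., that the sum $\sum_{i=1}^n T_i$ of the geometric switching times is really the same random time as running the single trajectory for the concatenated horizon. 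I would phrase this as: the pair $(X^{(n-1)}, A^{(n-1)})$ together with an independent $\text{Geometric}(1-\beta)$ clock $T_n$, followed by $T_n$ further transitions under $\pi$, yields a sample from $\mu^\pi_\beta(\cdot|X^{(n-1)},A^{(n-1)})$ whose joint law with $X^{(n-1)}$ agrees with $(X_{\sum_{i=1}^{n-1}T_i}, X_{\sum_{i=1}^n T_i})$.

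The main obstacle I anticipate is being rigorous about this "gluing" of trajectory segments, since $X^{(n-1)}$ is itself a random stopping-time snapshot of a $\pi$-trajectory, and one must argue that appending a fresh independent geometric segment does not disturb the marginal law — this is essentially an application of the strong Markov property together with the fact that $\sum_{i=1}^{n-1}T_i$ is a stopping time independent of the post-$\sum_{i=1}^{n-1}T_i$ trajectory increments. Once that is set up cleanly, the result follows. One can also sanity-check consistency against Lemma~\ref{lem:geometric-random-sum}: taking $n$ itself random with $N \sim \text{Geometric}(\nicefrac{1-\gamma}{1-\beta})$ recovers $X^{(N)} \overset{\mathcal{D}}{=} X_{\sum_{i=1}^N T_i} \overset{\mathcal{D}}{=} X_{\text{Geometric}(1-\gamma)} \sim \mu^\pi_\gamma(\cdot|x_0,a_0)$, which is the self-composition identity underlying Proposition~\ref{prop:markov-eval}.
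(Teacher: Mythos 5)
Your proposal is correct and follows essentially the same route as the paper: induction on $n$, with the inductive step handled by applying Proposition~\ref{prop:geometric-interpretation} to the conditional law $X^{(n)} \mid (X^{(n-1)}, A^{(n-1)}) \sim \mu^\pi_\beta(\cdot \mid X^{(n-1)}, A^{(n-1)})$ and then gluing the fresh geometric segment onto the trajectory via the Markov property. Your remarks about making the gluing rigorous (strong Markov property at the stopping time $\sum_{i=1}^{n-1} T_i$, independence of the fresh clock) are exactly the details the paper's proof elides with the phrase ``by the Markov property,'' so nothing is missing.
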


We also note that using different distributional identities for the random variable $T$ leads to variants of the result given in Proposition~\ref{prop:markov-eval}. For example, directly using the distributional identity in Lemma~\ref{lem:geometric-random-sum} can be used to establish a version of Theorem~1 of \citet{janner2020gamma} using exactly the same proof technique as for Proposition~\ref{prop:markov-eval}.

\begin{proof}[Proof of Lemma~\ref{lem:geometric-random-sum}]
    This is a classical result from elementary probability theory.
    We work with probability generating functions. The probability generating function of a random variable $Z$ taking values in $\mathbb{N}$ is defined as the function $G_Z(s) = \mathbb{E}[s^Z] = \sum_{k=1}^\infty \mathbb{P}(Z=k) s^k$, and clearly characterises the distribution of $Z$.
    
    A standard calculation shows that for $T \sim \text{Geometric}(1-\gamma)$, we have
    \begin{align*}
        G_T(s) = \frac{s(1-\gamma)}{1-s\gamma} \, , \text{ for } |s| < \gamma^{-1} \, .
    \end{align*}
    We also have the following standard relationship for the PGF of a random sum of i.i.d.~terms:
    \begin{align*}
        G_{\sum_{i=1}^N T_i}(s) = \mathbb{E}[s^{\sum_{i=1}^N T_i}] = \mathbb{E}[\mathbb{E}[s^{\sum_{i=1}^N T_i} \mid N]] = \sum_{n=1}^\infty \mathbb{P}(N=n)\mathbb{E}[s^{\sum_{i=1}^n T_i}] = \sum_{n=1}^\infty \mathbb{P}(N=n)G_{T_1}(s)^n = G_N(G_{T_1}(s)) \, .
    \end{align*}
    Since both $N$ and $T_1$ have geometric distributions, we can directly calculate
    \begin{align*}
        G_N(G_{T_1}(s)) = G_N\left( \frac{s(1-\beta)}{1 - s\beta} \right) = \frac{\frac{s(1-\beta)}{1-s\beta}\left( \frac{1-\gamma}{1-\beta}\right)}{1 - \frac{s(1-\beta)}{1-s\beta}\left(1 - \frac{1-\gamma}{1-\beta}\right)} = \frac{s(1-\gamma)}{1-s\gamma} \, , 
    \end{align*}
    for $|s| < \gamma^{-1}$, which is the probability generating function of a $\text{Geometric}(1-\gamma)$ random variable, as required.
\end{proof}

\begin{proof}[Proof of Lemma~\ref{lem:geometric-fixed-sum}]
    This follows as a straightforward corollary of Lemma~\ref{lem:geometric-random-sum}; under the notation of that result, we have $T \overset{\mathcal{D}}{=} \sum_{i=1}^N T_i$. We now decompose this based on whether the event $\{N \geq n\}$ occurs, and use the fact that $\mathbb{P}(N=k) = \mathbb{P}(N'=k)$ for $k=1,\ldots,n-1$:
    \begin{align*}
        T
        \overset{\mathcal{D}}{=} \sum_{i=1}^{\min(N, n-1)} T_i + \mathbbm{1}\{N \geq n\} \sum_{i=n}^N T_i 
        \overset{\mathcal{D}}{=} \sum_{i=1}^{\min(N', n-1)} T_i + \mathbbm{1}\{N' = n\} T'  \, ,
    \end{align*}
    as required. The final equality in distribution holds from the memoryless property of the geometric distribution; on the event $\{N \geq n\}$, we have $N - (n-1) \sim \text{Geometric}(\nicefrac{1-\gamma}{1-\beta})$, and hence $\sum_{i=n}^N T_i \sim \text{Geometric}(1-\gamma)$ on this event.
\end{proof}

\begin{proof}[Proof of Proposition~\ref{prop:compose-distribution}]
    This follows straightforwardly by induction. The case $n=1$ follows from Proposition~\ref{prop:geometric-interpretation}. Now suppose the claim holds for $n=l$. Then we have $X^{(l)} \overset{\mathcal{D}}{=} X_{\sum_{i=1}^n T_i}$. So
    \begin{align*}
        X^{(l+1)} | X^{(l)}, A^{(l)} \sim \mu^\pi_\beta(\cdot|X^{(l)}, A^{(l)}) \, ,
    \end{align*}
    and so by Proposition~\ref{prop:geometric-interpretation} again, we have $X^{(l+1)} | X^{(l)} \overset{\mathcal{D}}{=} X'_{T'}$, with $T' \sim \text{Geometric}(1-\beta)$, and $(X'_t, A'_t, R'_t)_{\geq 0}$ an independent trajectory following $\pi$ with initial state $X^{(l)}$. But since $X^{(l)} \overset{\mathcal{D}}{=} X_{\sum_{i=1}^n T_i}$, by the Markov property we therefore have $X^{(l+1)} \overset{\mathcal{D}}{=} X_{\sum_{i=1}^n T_i + T'}  \overset{\mathcal{D}}{=} X_{\sum_{i=1}^{n+1} T_i}$ as required.
\end{proof}

We now restate and prove Proposition~\ref{prop:markov-eval}.

\propMarkovEval*

\begin{proof}
    We start from the expression for $Q^\pi_\gamma(x, a)$ in Equation~\ref{eq:gamma-model-eval-1}.
    Using the notation of Proposition~\ref{prop:geometric-interpretation}, we have
    \begin{align*}
        \mathbb{E}_{X' \sim \mu^\pi_\gamma(\cdot|x, a)}[r^\pi(X')] = \mathbb{E}^\pi_{x, a}[r^\pi(X_T)] \, .
    \end{align*}
    Now with the notation of Lemma~\ref{lem:geometric-fixed-sum}, we have
    \begin{align*}
        \mathbb{E}^\pi_{x, a}[r^\pi(X_T)] & = \mathbb{E}^\pi_{x, a}[r^\pi(X_{\sum_{i=1}^{\min(N', n-1)} T_i + \mathbbm{1}\{N' = n\} T'})] \\
        & = \mathbb{E}[\mathbb{E}^\pi_{x, a}[ r^\pi(X_{\sum_{i=1}^{\min(N', n-1)} T_i + \mathbbm{1}\{N' = n\} T'}) \mid N'] ] \\
        & = \sum_{m=1}^{n-1} \frac{1-\gamma}{1 - \beta}\left(\frac{\gamma - \beta}{1-\beta}\right)^{m-1} \mathbb{E}^\pi_{x, a}[r^\pi(X_{\sum_{i=1}^m T_i})] + \left(\frac{\gamma - \beta}{1- \beta}\right)^{n-1} \mathbb{E}^\pi_{x, a}[r^\pi(X_{\sum_{i=1}^{n-1} T_i + T'})] \, .
    \end{align*}
    Finally, by Proposition~\ref{prop:compose-distribution}, we have $X_{\sum_{i=1}^m T_i} \overset{\mathcal{D}}{=} X^{(m)}$ as defined above, and $X_{\sum_{i=1}^{n-1} T_i + T'} \overset{\mathcal{D}}{=} X'$, to obtain the desired conclusion.
\end{proof}

\subsection{Proofs of result from Section~\ref{sec:non-markov-eval}}

\propSMPEval*

\begin{proof}
     Just as with Markov policies, we have the basic identity
    \begin{align*}
        Q^\nu_\gamma(x, a) = r(x) + \frac{\gamma}{1-\gamma} \mathbb{E}^\nu_{x, a}[r(X_T) ] \, .
    \end{align*}
    We now show that $\mathbb{E}^{\nu}_{x, a}[r(X_T)]$ has the required form by induction on $n$. The base case $n=1$ follows from Proposition~\ref{prop:basic-eval}. For the inductive step, fix $n=l$, and suppose the required form of the expectation has been demonstrated for all smaller values of $n$.
    
    Let $\nu =\pi_1 \overset{\alpha}{\rightarrow} \cdots \overset{\alpha}{\rightarrow} \pi_l$. We consider the time to switch from the first policy $\pi_1$, to the second sampled policy, $\pi_2$, denoting this time $T_1$, recalling that its distribution is $\text{Geometric}(\switchprob)$. 
    We proceed by considering whether or not the geometric horizon $T \sim \text{Geometric}(1-\gamma)$ is greater than $T_1$:
    \begin{align}\label{eq:decomp1}
        & \mathbb{E}^\nu_{x, a}[r(X_T) ] \\
        = & \mathbb{E}^\nu_{x, a}[r(X_T) \mathbbm{1}_{T \leq T_1} + r(X_T) \mathbbm{1}_{T > T_1} ] \nonumber \\
        = & \mathbb{E}^{\nu}_{x, a}[r(X_T) \mid T \leq T_1] \mathbb{P}(T \leq T_1 \mid X_0 = x, A_0 = a) \nonumber \\
        & \quad\quad+ \mathbb{E}^{\nu}_{x, a}[r(X_T) \mid T > T_1] \mathbb{P}^\nu_{x, a}(T > T_1) \, . \nonumber
    \end{align}
    Since $T$, $T_1$ are independent of the trajectory $(X_t, A_t, R_t)_{t \geq 0}$, we have $\mathbb{P}^\nu_{x, a}(T \leq T_1) = \mathbb{P}(T \leq T_1)$. To compute $\mathbb{P}(T \leq T_1)$, we have
    \begin{align*}
        \mathbb{P}(T \leq T_1) & = \sum_{k=1}^\infty \mathbb{P}(T \leq k) \mathbb{P}(T_1 = k) \\
        & = \sum_{k=1}^\infty (1 - \gamma^k) \switchprob (1-\switchprob)^{k-1} \\
        & = \switchprob \sum_{k=1}^\infty ((1-\switchprob)^{k-1} - \gamma (\gamma (1-\switchprob))^{k-1}) \\
        & = \switchprob \left( \frac{1}{\switchprob} - \frac{\gamma}{1-\gamma(1-\switchprob)}\right) \\
        & = \frac{1 - \gamma}{1-\gamma(1-\switchprob)} \, .
    \end{align*}
    Now, to compute $\mathbb{E}^{\nu}_{x, a}[r(X_T) \mid T \leq T_1]$, we need the marginal distribution of $T$ given the event $\{T \leq T_1\}$, which again is independent of the trajectory $(X_t, A_t, R_t)_{t \geq 0}$. We have
    \begin{align*}
        \mathbb{P}(T = k \mid T \leq T_1) & \propto \mathbb{P}(T = k, T \leq T_1) \\
        & = \sum_{l=k}^\infty \mathbb{P}(T=k) \mathbb{P}(T_1 =l) \\
        & = (1-\gamma) \gamma^{k-1} (1-\switchprob)^k \\
        & \propto (\gamma (1-\switchprob))^k \, ,
    \end{align*}
    which is the probability mass function of a $\text{Geometric}(1-\gamma(1-\switchprob))$ distribution. Hence, conditional on $T \leq T_1$, we have that $T \sim \text{Geometric}(1-\gamma(1-\switchprob))$, and that the policy $\nu$ has not switched from $\pi_1$ on this event, so 
    \begin{align*}
        \mathbb{E}^\nu_{x, a}[r(X_T) \mid T \leq T_1 ] = \mathbb{E}_{X' \sim \mu_{\gamma(1-\switchprob)}^{\pi_1}(\cdot|x, a)}[ r(X') ] \, .
    \end{align*}
    
    We next turn our attention to the second term on the right-hand side of Equation~\eqref{eq:decomp1}. Conditional on $\{T > T_1\}$, we compute the joint distribution of $(T - T_1, T_1)$. For any $k,l > 0$:
    \begin{align*}
        \mathbb{P}(T - T_1=k, T_1=l \mid T > T_1)  \propto \mathbb{P}(T - T_1=k, T_1=l) 
         = \mathbb{P}(T=k+l, T_1 =l) 
         \propto \gamma^{k+l} (1-\switchprob)^l 
         =  \gamma^k (\gamma (1-\switchprob))^l \, ,
    \end{align*}
    which we recognise as the distribution of two independent geometric random variables with parameters $1-\gamma$ and $1-\gamma(1-\switchprob)$. Hence, a sample from $X_T$ on the event $\{T > T_1\}$ can be obtained by first sampling the state $X^{(1)} \sim \mu^{\pi_1}_{\gamma(1-\switchprob)}$ at which the switch from $\pi_1$ to $\pi_2$ occurs. From this point, we require a state sampled $T - T_1 \sim \text{Geometric}(1-\gamma)$ steps into the future, from initial state $X^{(1)}$, and action $A^{(1)} \sim \pi_2(\cdot|X^{(1)})$, following the suffix \GSP $\nu' = \pi_2 \overset{\alpha}{\rightarrow} \cdots \overset{\alpha}{\rightarrow} \pi_l$. By induction, the corresponding expectation can be expressed as
    \begin{align*}
        \mathbb{E}^\nu_{x, a}[r(X_T) \mid T > T_1] = \mathbb{E}\Big \lbrack \sum_{m=1}^{l-2} \frac{1-\gamma}{1-\beta}\left( \frac{\gamma - \beta}{1 - \beta} \right)^{m-1} r(\bar{X}^{(m)}) + \left( \frac{\gamma - \beta}{1 - \beta} \right)^{l-2} r(\bar{X}') \Big\rbrack \, ,
    \end{align*}
    where $\bar{X}^{(0)} \sim \mu_\beta^{\pi_1}(\cdot|x, a)$, $\bar{X}^{(m)} \sim \mu^{\pi_{m+1}}_{\beta}(\cdot|\bar{X}^{(m\indexminus 1)}, \bar{A}^{(m \indexminus 1)})$, $\bar{A}^{(m)} \sim \pi_{m+2}(\cdot|\bar{X}^{(m)})$, $\bar{X}' \sim \mu^{\pi_l}_\gamma(\cdot|\bar{X}^{(l-2)}, \bar{A}^{(l-2)})$.
    
    Rewriting in terms of the original sequence $(X^{(0)}, X^{(1)}, \ldots, X^{(n)}, A^{(n)}, X')$ in the theorem statement, we have
    \begin{align*}
        \mathbb{E}^\nu_{x, a}[r(X_T) \mid T > T_1] = \mathbb{E}\Big \lbrack \sum_{m=1}^{l-2} \frac{1-\gamma}{1-\beta}\left( \frac{\gamma - \beta}{1 - \beta} \right)^{m-1} r(X^{(m+1)}) + \left( \frac{\gamma - \beta}{1 - \beta} \right)^{l-2} r(X') \Big\rbrack \, .
    \end{align*}
    Putting everything together from the decomposition in Equation~\eqref{eq:decomp1}, we therefore have
    \begin{align*}
        & \mathbb{E}^\nu_{x,a}[r(X_T) ]
        \\
        =& \frac{1-\gamma}{1 -\gamma\beta} \mathbb{E}[r(X^{(1)})] + \frac{\gamma - \beta}{1 - \beta}\mathbb{E}\Big \lbrack \sum_{m=1}^{l-2} \frac{1-\gamma}{1-\beta}\left( \frac{\gamma - \beta}{1 - \beta} \right)^{m-1} r(X^{(m+1)}) + \left( \frac{\gamma - \beta}{1 - \beta} \right)^{l-2} r(X') \Big\rbrack \\
         =&  \mathbb{E} \Big\lbrack  \sum_{m=1}^{l-1} \frac{1-\gamma}{1 - \beta}\left(\frac{\gamma - \beta}{1-\beta}\right)^{m-1} r(X^{(m)})
        + \left(\frac{\gamma - \beta}{1-\beta}\right)^{l-1} r(X') \Big\rbrack
    \end{align*}
    as required.
\end{proof}

\subsection{Proof of result from Section~\ref{sec:nmgpi}}

\propnmgpi*

\begin{proof}
    It is sufficient to show that for any policy $\nu \in \Pi$, we have $Q^{\pi'} \geq Q^{\nu}$. If $\nu = \pi$ is Markov, then we have
    \begin{align*}
        Q^\pi_\gamma (x, a) = r(x, a) + \gamma \sum_{x' \in \mathcal{X}} \sum_{a' \in \mathcal{A}} P(x'|x, a) \pi(a'|x') Q^\pi_\gamma(x', a') \, ,
    \end{align*}
    and hence
    \begin{align*}
        Q^\pi_\gamma (x, a) \leq r(x, a) + \gamma \sum_{x' \in \mathcal{X}} \sum_{a' \in \mathcal{A}} P(x'|x, a) \pi'(a'|x') \max_{\bar{\nu} \in \Pi} Q^{\bar{\nu}}_\gamma(x', a') = (T^{\pi'}( \max_{\bar{\nu} \in \Pi} Q^{\bar{\nu}}_\gamma ))(x, a) \, .
    \end{align*}
    Now suppose $\nu= \pi_1 \overset{\alpha}{\rightarrow} \cdots \overset{\alpha}{\rightarrow} \pi_n \in \Pi$ is a non-Markov \gsp.
    Let $\nu'= \pi_2 \overset{\alpha}{\rightarrow} \cdots \overset{\alpha}{\rightarrow} \pi_n$ be the suffix policy of $\pi$. By suffix-closedness of $\Pi$, $\nu' \in \Pi$, and so we have the following observation:
    \begin{align*}
        Q^{\nu}_\gamma(x, a)& =  r(x, a) + \gamma \sum_{x' \in\mathcal{X}} P(x'|x, a) \left\lbrack (1-\switchprob)  \sum_{a' \in \mathcal{A}} \pi_1(a'|x') Q^{\nu}_\gamma(x', a') + \switchprob \sum_{a' \in \mathcal{A}} \pi_2(a'|x') Q^{\nu'}_\gamma(x', a')  \right\rbrack \\
        & \leq  r(x, a) + \gamma \sum_{x' \in\mathcal{X}} P(x'|x, a) \left\lbrack (1-\switchprob)  \sum_{a' \in \mathcal{A}} \pi'(a'|x') \max_{\bar{\nu} \in \Pi} Q^{\bar{\nu}}_\gamma(x', a') + \switchprob\sum_{a' \in \mathcal{A}} \pi'(a'|x') \max_{\bar{\nu} \in \Pi} Q^{\bar{\nu}}_\gamma(x', a')  \right\rbrack \\
        & = (T^{\pi'} (\max_{\bar{\nu} \in \Pi} Q^{\bar{\nu}}_\gamma ))(x, a) \, ,
    \end{align*}
    similarly to the Markov case. By taking a maximum over the policy considering on the left-hand side of the main chain of inequalities above, we get $\max_{\bar{\nu} \in \Pi} Q^{\bar{\nu}} \leq T^{\pi'} (\max_{\bar{\nu} \in \Pi} Q^{\bar{\nu}})$. As in the proof of improvement guarantee for standard GPI, we have that $T^{\pi'}$ is monotone, and contracts to $Q^{\pi'}$. Hence, $Q^\nu \leq \max_{\nu \in \Pi} Q^\nu \leq \lim_{n \rightarrow \infty} (T^{\pi'})^n (\max_{\nu \in \Pi} Q^\nu) = Q^{\pi'}$, as required. For the final statement of the result, observe that if equality holds at all state-action pairs, then we have that $\max_{\nu \in \Pi} Q^\nu_\gamma$ satisfies the Bellman optimality equation $\max_{\nu \in \Pi} Q^\nu_\gamma = T^{\pi'} \max_{\nu \in \Pi} Q^\nu_\gamma = T^* \max_{\nu \in \Pi} Q^\nu_\gamma$, and hence $\max_{\nu \in \Pi} Q^\nu_\gamma = Q^{\pi'} = Q^*$, so $\pi'$ is optimal.
\end{proof}

\subsection{Proof of result from Section~\ref{sec:applications_description}}

\propPiMClosed*

\begin{proof}
    Given a policy $\nu = \pi^{(1)} \overset{\alpha}{\rightarrow} \cdots \pi^{(m)} \in \Pi_m$, its suffix policy is $\nu' = \pi^{(2)} \overset{\alpha}{\rightarrow} \cdots \pi^{(m)}$. On the face of it, this policy appears not to lie in $\Pi_m$, since it contains only $m-2$ switches. However, the key observation is that appending an additional switch from the tail Markov policy to itself does not change the geometric switching policy; that is 
    \begin{align*}
        \pi^{(2)} \overset{\alpha}{\rightarrow} \cdots \pi^{(m-1)} \overset{\alpha}{\rightarrow} \pi^{(m)} = \pi^{(2)} \overset{\alpha}{\rightarrow} \cdots \pi^{(m-1)} \overset{\alpha}{\rightarrow} \pi^{(m)} \overset{\alpha}{\rightarrow} \pi^{(m)} \, .
    \end{align*}
    The right-hand side clearly lies in $\Pi_m$, and hence the proof of suffix-closedness is complete. The improvement guarantee now follows from Theorem~\ref{prop:gnmpi}.
\end{proof}

\subsection{Proof of result from Section~\ref{sec:learning_ghm}}

Here, we provide a proof of Proposition~\ref{prop:bellman}, and note that the (longer) proof of Theorem~\ref{thm:cetd} is given in Appendix~\ref{sec:appendix:cetd-convergence}.

\propBellman*

\begin{proof}
    That $\mu^\pi_\beta$ solves $\mu = T^\pi_\beta \mu$ follows straightforwardly from the Markov property of the environment:
    \begin{align*}
        \mu^\pi_\beta(x'|x, a) = & (1-\beta)\mathbb{E}^\pi_{x, a}\Big\lbrack \sum_{t \geq 0} \beta^t \mathbbm{1}_{X_{t+1} = x'} \Big\rbrack \\
        = & (1-\beta)\mathbb{E}^\pi_{x, a}\Big \lbrack \mathbbm{1}_{X_{t+1} = x'} \Big\rbrack + \beta \mathbb{E}^\pi_{x, a}\Big\lbrack (1- \beta)\mathbb{E}^\pi_{X_1, A_1}\Big\lbrack \sum_{t \geq 1} \beta^t \mathbbm{1}_{X_{t+1} = x'} \Big\rbrack\Big\rbrack \\
        = & (1-\beta)P(x'|x, a) + \beta \mathbb{E}^\pi_{x, a}\Big\lbrack \mu(x'|X_1, A_1)\Big\rbrack \\
        = & (1-\beta)P(x'|x, a) + \beta \sum_{x'' \in \mathcal{X}} \sum_{a'' \in \mathcal{A}} P(x''|x, a) \pi(a''|x'')\mu(x'|x', a')\\
        = & (1-\beta)P(x'|x, a) + \beta (\mu \otimes_\pi P)(x'|x, a)
        \, .
    \end{align*}
    We now show that $T^\pi_\beta$ is a contraction mapping on $\mathscr{P}(\mathcal{X})^{\mathcal{X} \times \mathcal{A}}$.
    Let $\mu, \mu' \in \mathscr{P}(\mathcal{X})^{\mathcal{X} \times \mathcal{A}}$, from which uniqueness of the solution to $\mu = T^\pi_\beta \mu$ immediately follows. We directly calculate
    \begin{align*}
        (T^\pi_\beta \mu - T^\pi_\beta \mu')(x'|x, a) & = \left((1-\beta) P(x'|x, a) + \beta \sum_{x'' \in \mathcal{X}} \sum_{a' \in \mathcal{A}} P(x''|x, a) \pi(a'|x'') \mu(x'|x'', a')\right) -  \\
        & \quad\quad\quad\quad\left((1-\beta) P(x'|x, a) + \beta \sum_{x'' \in \mathcal{X}} \sum_{a' \in \mathcal{A}} P(x''|x, a) \pi(a'|x'') \mu'(x'|x'', a')\right) \\
        & = \beta \sum_{x'' \in \mathcal{X}} P(x''|x, a) \pi(a'|x'') (\mu(x'|x'', a') - \mu'(x'|x'', a')) \, .
    \end{align*}
    Hence,
    \begin{align*}
        \max_{(x, a, x') \in \mathcal{X} \times \mathcal{A} \times \mathcal{X}} |(T^\pi_\beta \mu - T^\pi_\beta \mu')(x'|x, a)| \leq \beta \max_{(x, a, x') \in \mathcal{X} \times \mathcal{A} \times \mathcal{X}} |(\mu -  \mu')(x'|x, a)| \, ,
    \end{align*}
    as required.
\end{proof}

\section{Proof of the convergence of cross-entropy temporal-difference learning}\label{sec:appendix:td-mle-proof}\label{sec:appendix:cetd-convergence}

In this section we prove Theorem~\ref{thm:cetd}, which establishes the convergence of cross-entropy TD learning in the tabular, finite state-space setting, under mild conditions. The broad structure of the proof follows that of many arguments in stochastic approximation: defining a Lyapnuov function, showing convergence of this Lyapunov function to 0 as the algorithm progresses via the Robbins-Siegmund theorem \citep{robbins1971convergence}, and deducing convergence of the algorithm as a consequence; see for example \citet{kushner2003stochastic} for further background. We begin by recalling the details of the theorem.

\textbf{Statement of result.} The algorithm generates a sequence of logits $(\phi_k)_{k \geq 0}$, with $\phi_k \in \mathbb{R}^{\mathcal{X}\times\mathcal{A}\times\mathcal{X}}$, and corresponding estimated geometric horizon models, denoted $\mu_k$, and defined by
\begin{align*}
    \mu_k(x'|x, a) = \frac{ \exp(\phi_k(x'|x, a)) }{ \sum_{x'' \in \mathcal{X}} \exp(\phi_k(x''|x, a)) } \, .
\end{align*}

We work with a synchronous algorithm, for which every state-action pair is updated at every algorithm time step. Thus, $\phi_0 \in \mathbb{R}^{\mathcal{X} \times \mathcal{A} \times \mathcal{X}}$ is initialised in some manner, and for each algorithm time step $k \geq 0$, for each $(x, a)$ we take a transition $(x, a, X')$ generated from the MDP, independent of all other transitions used at time $k$ and earlier, and define $\phi_{k+1}$ via the update
\begin{align}
    \phi_{k+1}(\cdot|x,a) = & \phi_k(\cdot|x, a) - \varepsilon_k \nabla_{\phi_k(\cdot|x, a)} \text{KL}( \texttt{SG}[ (\hat{T}^\pi \mu_k)(\cdot|x, a)]\ || \ \mu_k(\cdot|x, a) )  \, , \label{eq:logit-update}
\end{align}
where $\texttt{SG}$ denotes a stop-gradient, and $(\hat{T}^\pi \mu_k)(\cdot|X_k, A_k)$ is an unbiased approximation error to the Bellman operator application $(T^\pi \mu_k)(\cdot|X_k, A_k)$, given by
\begin{align*}
     (\hat{T}^\pi \mu_k)(\cdot|x, a) = (1-\gamma) e_{X'} + \gamma e_{X''}  \, ,
\end{align*}
where $X''$ is sampled first by sampling $A' \sim \pi(\cdot|X')$, and then $X'' \sim \mu_k(\cdot|X', A')$. Evaluating the gradient above allows us to re-express the update as
\begin{align}\label{eq:phi-update}
    \phi_{k+1}(\cdot|x,a) = & \phi_k(\cdot|x, a) + \varepsilon_k \left( (\hat{T}^\pi \mu_k)(\cdot|x, a) - \mu_k(\cdot|x, a) \right)  \, .
\end{align}
Then the theorem statement is that if the Robbins-Monro conditions for the step sizes $(\varepsilon_k)_{k=0}^\infty$ hold, then we have $\mu_k \rightarrow \mu^\pi_\gamma$ with probability 1.

\textbf{Proof.} The proof of the result is presented below. We include schematic illustrations of some of the key ideas in the proof in Figure~\ref{fig:proof-illustration}.

\begin{figure}[ht]
    \centering
    \includegraphics[keepaspectratio,width=.95\textwidth]{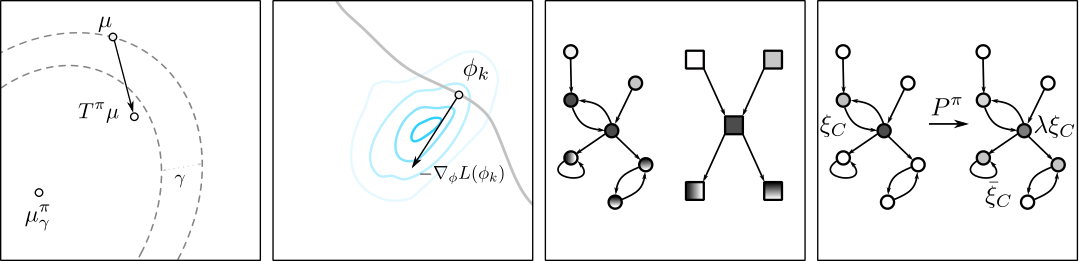}
    \caption{Schematic illustrations of core ideas in the convergence proof for cross-entropy temporal-difference learning.
    \textbf{Left:} Contractivity of the operator $T^\pi$ in the weighted $L^2$ norm $\| \cdot \|_\xi$ towards $\mu^\pi_\gamma$.
    \textbf{Centre-left: } For a given value of $\phi_k$, the corresponding level set of the Lyapunov function $L$ as a grey line, the conditional distribution over $\phi_{k+1}$ illustrated with blue contours, and the negative gradient of the Lyapunov function indicated as a black arrow. The Robbins-Siegmund argument shows that even though $\phi_{k+1}$ may have a higher Lyapunov value than $\phi_k$, in the long term the value of the Lyapunov function must converge to $0$.
    \textbf{Centre-right: } The decomposition of a Markov chain state space into a directed acyclic graph of communicating classes.
    \textbf{Right: } The distribution $\xi_C$ supported on a given communicating class $C$, as constructed via the Perron-Frobenius theorem, and the result of right-multiplying by the Markov chain transition matrix $P^\pi$; on the communicating class $C$, the distribution is scaled by $\lambda$, while descendant communicating classes may now have non-zero probabilities, given by $\bar{\xi}_C$.
    }
    \label{fig:proof-illustration}
\end{figure}

\textbf{The Lyapunov function.} 
Let $\xi$ be a stationary state-action distribution under $\pi$, and suppose initially that it has full support; we will explain how to remove this assumption below. It is useful to introduce the function $\logittoprob : \mathbb{R}^{\mathcal{X} \times \mathcal{A} \times \mathcal{X}} \rightarrow \mathscr{P}(\mathcal{X})^{\mathcal{X} \times \mathcal{A}}$ for the softmax function that maps logits to corresponding collections of probability distributions. We now define the Lyapunov function
\begin{align*}
    L(\phi) = \sum_{x,a} \xi(x, a) \text{KL}(\mu^\pi_\gamma(\cdot|x, a)\ ||\ \logittoprob(\phi)(\cdot|x,a)) \, .    
\end{align*}
The full support condition ensures that $L(\phi) = 0$ implies that $\logittoprob(\phi) = \mu^\pi_\gamma$. Our goal is to show that $L(\phi_k) \rightarrow 0$ almost surely, hence $\sum_{x, a} \xi(x, a) \text{KL}(\mu^\pi_\gamma(\cdot|x, a)\ ||\ \logittoprob(\phi_k)(\cdot|x,a)) \rightarrow 0$, and so $\mu_k \rightarrow \mu^\pi_\gamma$, as required.

\textbf{A supermartingale argument.}  We start by considering a second-order Taylor expansion (with Lagrange remainder) of $L(\phi_{k+1})$ around $\phi_k$ (here, and in the remainder of the proof, it is useful to interpret a probability distribution in $\mathscr{P}(\mathcal{X})$ as a vector in $\mathbb{R}^{\mathcal{X}}$ --- specifically, an element of the simplex $\Delta(\mathcal{X})$, which we will do without further remark):
\begin{align*}
    L(\phi_{k+1}) = L(\phi_k + \varepsilon_k ( \hat{T}^\pi \mu_k - \mu_k)) = L(\phi_k) + \varepsilon_k \langle \nabla_\phi L(\phi_k), \hat{T}^\pi \mu_k - \mu_k \rangle + \varepsilon^2_k \nabla^2_\phi L(\tilde{\phi_k})[\hat{T}^\pi \mu_k - \mu_k, \hat{T}^\pi \mu_k - \mu_k] \, ,
\end{align*}
for some $\tilde{\phi}_{k}$ on the line segment $[\phi_k, \phi_{k+1}]$.
Defining $\mathscr{F}_k$ to be the sigma-algebra generated by all random variables up to, but not including, those defining the update from $\phi_k$ to $\phi_{k+1}$, we have
\begin{align*}
    \mathbb{E}[L(\phi_{k+1}) \mid \mathscr{F}_k] = L(\phi_k) + \varepsilon_k \mathbb{E}[\langle \nabla_\phi L(\phi_k), \hat{T}^\pi \mu_k - \mu_k \rangle \mid \mathscr{F}_k] + \varepsilon^2_k \mathbb{E}[ \nabla^2_\phi L(\tilde{\phi_k})[\hat{T}^\pi \mu_k - \mu_k, \hat{T}^\pi \mu_k - \mu_k] \mid \mathscr{F}_k] \, .
\end{align*}
From the form of the gradient $\nabla_\phi L(\phi)$, the Hessian $\nabla^2_\phi L(\phi)$ is readily seen to be bounded, and the inputs above $\hat{T}^\pi \mu_k - \mu_k$ are also bounded, meaning there is a constant $K > 0$ such that
\begin{align*}
    \mathbb{E}[L(\phi_{k+1}) \mid \mathscr{F}_k] \leq L(\phi_k) + \varepsilon_k \mathbb{E}[\langle \nabla_\phi L(\phi_k), \hat{T}^\pi \mu_k - \mu_k \rangle \mid \mathscr{F}_k] + \varepsilon^2_k K \, .
\end{align*}
To deal with the first-order term, we note that a straightforward calculation gives
\begin{align*}
     [ \nabla_{\phi} L(\phi) ](x'|x, a) = \xi(x, a)  (\logittoprob(\phi)(x'|x, a) - \mu^\pi_\gamma(x'|x, a)) \, .
\end{align*}
We hence have
\begin{align*}
    \mathbb{E}[\langle \nabla_\phi L(\phi_k), \hat{T}^\pi \mu_k - \mu_k \rangle \mid \mathscr{F}_k] = \langle \mu_k - \mu^\pi_\gamma, T^\pi \mu_k - \mu_k  \rangle_\xi \, .
\end{align*}
Now we use a contractivity argument to bound this derivative. We first argue that $T^\pi$ as defined above is a $\gamma$-contraction under the norm $\|\cdot\|_\xi$ defined by $\|\mu \|^2_\xi = \sum_{x, a, x'} \xi(x,a) \mu(x'|x, a)^2$. To see this, note
\begin{align*}
    \| T^\pi \mu - T^\pi \mu' \|^2_\xi = & \| \gamma P^\pi \mu - \gamma P^\pi \mu' \|^2_\xi \\
    = & \gamma^2 \|P^\pi \mu - P^\pi \mu \|^2_\xi \\
    = & \gamma^2 \sum_{x, a, x'} \xi(x, a) \left(\sum_{x'', a''} P(x''|x, a)\pi(a''|x'') (\mu(x'|x'', a'') - \mu'(x'|x'', a''))\right)^2 \\
    \overset{(a)}{\leq} & \gamma^2 \sum_{x, a, x'} \xi(x, a) \sum_{x'', a''} P(x''|x, a)\pi(a''|x'') ((\mu(x'|x'', a'') - \mu'(x'|x'', a'')))^2 \\
    \overset{(b)} = & \gamma^2 \sum_{x, a, x'} \xi(x, a) (\mu(x'|x, a) - \mu'(x'|x, a))^2 \\
    = & \gamma^2 \|\mu - \mu' \|_\xi^2 \, ,
\end{align*}
as required, with (a) following from Jensen's inequality, and (b) from $\xi$ being stationary.

Using this contraction result, we have:
\begin{align*}
    & \| \mu^\pi_\gamma - T^\pi \mu_k \|_\xi^2 \leq \gamma^2 \| \mu^\pi_\gamma - \mu_k \|_\xi^2 \\
    \implies & \| \mu^\pi_\gamma - \mu_k + \mu_k - T^\pi \mu_k \|_\xi^2 \leq \gamma^2 \| \mu^\pi - \mu_k \|_\xi^2 \\
    \implies & \| \mu^\pi_\gamma - \mu_k \|_\xi^2 + \|\mu_k - T^\pi \mu_k \|_\xi^2 + 2\langle \mu^\pi_\gamma - \mu_k, \mu_k - T^\pi \mu_k \rangle_\xi \leq \gamma^2 \| \mu^\pi_\gamma - \mu_k \|_\xi^2 \\
    \implies & \langle \mu_k - \mu^\pi_\gamma, T^\pi \mu_k - \mu_k \rangle_\xi \leq \frac{1}{2} \left( (\gamma^2 - 1) \|\mu^\pi_\gamma - \mu_k\|_\xi^2 - \| \mu_k - T^\pi \mu_k \|_\xi^2  \right) \leq -\frac{1-\gamma^2}{2} \|\mu^\pi_\gamma - \mu_k \|_\xi^2 \, .
\end{align*}

Returning to the Lyapunov function, we therefore have
\begin{align*}
    \mathbb{E}[L(\phi_{k+1}) \mid \mathscr{F}_k] \leq L(\phi_k) - \varepsilon_k\frac{1-\gamma^2}{2} \|\mu^\pi_\gamma - \mu_k\|_\xi^2  + \varepsilon^2_k K \, .
\end{align*}
We now follow the ideas of the Robbins-Siegmund theorem \citep{robbins1971convergence}. Based on the above inequality, $(L(\phi_{k}))_{k \geq 0}$ is almost a positive supermartingale, save for the additive $\varepsilon^2_k K$ terms in the upper bounds on the conditional expectation. However, defining $\tilde{L}_k = L(\phi_k) - \sum_{l=0}^{k-1} \varepsilon_l^2 K + \sum_{l=0}^{k-1} \varepsilon_l \tfrac{1-\gamma^2}{2}\|\mu^\pi_\gamma - \mu_l\|^2_\xi$, we have
\begin{align*}
    \mathbb{E}[\tilde{L}_{k+1} \mid \mathscr{F}_k]
    & \leq \mathbb{E} \Big\lbrack L(\phi_{k+1}) - \sum_{l=0}^k \varepsilon_l^2 K + \sum_{l=0}^{k} \varepsilon_l \frac{1-\gamma^2}{2}\|\mu^\pi_\gamma - \mu_l\|^2_\xi \ \mid \mathscr{F}_k \Big \rbrack \\
    &\leq L(\phi_k) - \sum_{l=0}^{k-1} \varepsilon_l^2 K + \sum_{l=0}^{k-1} \varepsilon_l \frac{1-\gamma^2}{2}\|\mu^\pi_\gamma - \mu_l\|^2_\xi \\
    & \leq \tilde{L}_k \, .
\end{align*}
Hence $(\tilde{L}_k)_{k \geq 0}$ is a supermartingale. However, the subtraction of the $\varepsilon_k^2 K$ terms means that it is not a non-negative supermartingale, so we cannot immediately apply the supermartingale convergence theorem. The approach of \citet{robbins1971convergence} is to define a sequence of stopping times $\tau_\ell = \inf \{ k \geq 0 : \tilde{L}_k \leq - \ell  \}$, for $\ell \in \mathbb{N}$. By the optional stopping theorem, $(\tilde{L}_{k \wedge \tau_\ell})_{k=0}^\infty$ is a supermartingale bounded below, and hence by the supermartingale convergence theorem, converges almost surely. By the second Robbins-Monro step size condition, $\tau_\ell = \infty $ eventually almost surely, and hence $\tilde{L}_k$ converges almost surely, leading to almost-sure convergence of $L(\phi_k)$ too, as well as $\sum_{l=0}^{k} \varepsilon_l \frac{1-\gamma^2}{2}\|\mu^\pi_\gamma - \mu_l\|^2_\xi$. Due to the first Robbins-Monro step size condition $\sum_{k=0}^\infty \varepsilon_k = \infty$, we must have $\|\mu^\pi_\gamma - \mu_k\|^2_\xi \rightarrow 0$, which completes the proof of the theorem in the case where $\xi$ has full support.

\textbf{A chaining argument for invariant distributions without full support.} The previous argument relied on the existence of an invariant distribution $\xi$ for the Markov chain over state-action pairs generated by the interaction of the policy $\pi$ with the MDP in question. We now explain how to generalise this proof technique to remove this restriction on $\xi$.

First, by appending an artificial self-transitioning terminal state if required, there always exists an invariant distribution $\xi$ for the Markov chain concerned, even in episodic settings where trajectories terminate in finite time. The argument above may be applied as-is to obtain the same conclusion $\sum_{l=0}^{k} \varepsilon_l \frac{1-\gamma^2}{2}\|\mu^\pi_\gamma - \mu_l\|^2_\xi < \infty$, and hence $\|\mu^\pi_\gamma - \mu_k\|^2_\xi \rightarrow 0$. The difference now is that this only shows convergence of $\mu_k$ to $\mu^\pi_\gamma$ along the state-action pairs with support under $\xi$.

We begin by recalling some notions from the theory of discrete-time Markov chains on finite sets; see \citet{norris1998markov} for further background. We also clarify that in Markov chain theory, the term \emph{state space} is typically used to refer to the set of states which a Markov chain can take on. For our Markov chain, this \emph{state space} is $\mathcal{X} \times \mathcal{A}$, \emph{not} the usual state space of the MDP. To avoid confusion, we will use the term Markov chain state space (or MCSS) to distinguish the state space of the Markov chain from the set $\mathcal{X}$, and the term Markov chain state to refer to an element of the MCSS.

We can partition the MCSS $\mathcal{X} \times \mathcal{A}$ into \emph{communicating classes}. A communicating class $C \subseteq \mathcal{X} \times \mathcal{A}$ is a set of Markov chain states such that for all $(x, a), (y, b) \in C$, there exists $t > 0$ such that $\mathbb{P}((X_t, A_t) = (x, a) \mid (X_0, A_0) = (y, b)) > 0$ and $\mathbb{P}((X_t, A_t) = (y, b) \mid (X_0, A_0) = (x, a)) > 0$, and further for any $(x, a) \in C$, no Markov chain state outside $C$ has this property. The set of communicating classes of the Markov chain can be given a directed acyclic graph structure, by adding an edge from one class $C$ to a distinct class $C'$ if there exist $(x, a) \in C$, $(y, b) \in C'$ with $\mathbb{P}((X_1, A_1) = (y, b) \mid (X_0,A_0) = (x, a)) > 0$. Let us refer to this directed acyclic graph as $G$. Without loss of generality to what follows, we may assume $G$ is connected (the argument may be applied to each connected component of $G$ separately if $G$ is not connected).

The goal is to recurse backwards through the directed acyclic graph $G$, establishing first for the Markov chain states $(x, a)$ in communicating classes in the leaves of the graph that $\mu_k(\cdot|x, a) \rightarrow \mu^\pi_\gamma(\cdot|x, a)$, and then inductively moving back through the graph. Note that the leaves of $G$ are precisely the \emph{recurrent} communicating classes of the Markov chain: those classes $C$ for which there exists an invariant distribution $\xi_C$ for the Markov chain supported precisely on $C$. The argument above establishes that $\mu_k(\cdot|x, a) \rightarrow \mu^\pi_\gamma(\cdot|x, a)$ for all $(x, a) \in C$, and in fact the stronger conclusion that $\sum_{l=0}^\infty \varepsilon_l \| \mu^\pi_\gamma - \mu_l \|^2_{\xi_C} < \infty$.

Now, for the inductive step of the argument, let $C$ be a non-recurrent communicating class of the Markov chain, and suppose that for every descendant $C'$ of $C$ in the directed acyclic graph $G$, we have established that for some distribution $\xi_{C'}$ supported on $C'$, we have $\sum_{l=0}^\infty \varepsilon_l \| \mu^\pi_\gamma - \mu_l \|^2_{\xi_{C'}} \rightarrow 0$. We now aim to construct a distribution $\xi_C$ supported on $C$, and to demonstrate that $\sum_{l=0}^\infty \varepsilon_l \| \mu^\pi_\gamma - \mu_l \|^2_{\xi_{C}} \rightarrow 0$, so that by induction the theorem is proven.

To do this, we appeal to the Perron-Frobenius theorem \citep{perron1907theorie,frobenius1912ueber}; see \citet{seneta2006non} for a recent account. Specifically, we consider the transition matrix of the Markov chain in question, and consider the sub-matrix obtained by deleting all rows and columns corresponding to Markov chain states outside $C$. The resulting matrix is strictly sub-stochastic (all elements are non-negative, rows sums are less than or equal to 1, with at least one row having row sum strictly less than 1), and hence by the Perron-Frobenius theorem, there exists a left-eigenvector $v \in \mathbb{R}^{C}$ for this matrix with eigenvalue $0 \leq \lambda < 1$, and all elements positive; we may further scale $v$ so that the elements sum to 1. We now set $\xi_C$ to be the distribution over the Markov chain state space that is equal to $v$ on $C$, and 0 elsewhere. We now show that $T^\pi$ still behaves `almost' like a contraction under $\|\cdot\|_{\xi_C}$, which will allow us to re-use the supermartingale argument above. First note that from the structure of the communicating classes, we have that $\xi_C P^\pi$ is equal to $\lambda \xi_C$ on $C$, some other non-negative vector $\bar{\xi}_C$ on $\bar{C}$ the union of descendant communicating classes from $C$, and 0 elsewhere. Now, note that for $\mu, \mu' \in \Delta(\mathcal{X})^{\mathcal{X} \times \mathcal{A}}$, we have
\begin{align*}
    & \| T^\pi \mu - T^\pi \mu' \|^2_{\xi_C} \\
    = & \gamma^2 \|P^\pi \mu - P^\pi \mu \|^2_{\xi_C} \\
    = & \gamma^2 \sum_{x, a, x'} \xi_C(x, a) \left(\sum_{x'', a''} P(x''|x, a)\pi(a''|x'') (\mu(x'|x'', a'') - \mu'(x'|x'', a''))\right)^2 \\
    \leq & \gamma^2 \sum_{x, a, x'} \xi_C(x, a) \sum_{x'', a''} P(x''|x, a)\pi(a''|x'') ((\mu(x'|x'', a'') - \mu'(x'|x'', a'')))^2 \\
    = & \gamma^2 \left( \sum_{(x, a) \in C} \lambda \xi_C(x, a) \sum_{x'} (\mu(x'|x, a) - \mu'(x'|x, a))^2 + \sum_{(x, a) \in \bar{C}} \bar{\xi}_{C}(x, a) \sum_{x'}  (\mu(x'|x, a) - \mu'(x'|x, a))^2 \right) \\
    = & \gamma^2 \lambda \|\mu - \mu' \|_{\xi_C}^2 + \gamma^2 \|\mu - \mu' \|_{\bar{\xi}_{C}}^2  \, .
\end{align*}
The intuition here is that if $\mu \approx \mu'$ on $\bar{C}$, then we have a contraction-like bound for $T^\pi$ as measured by $\xi_C$. From this, we obtain the bound
\begin{align*}
    \langle \mu_k - \mu^\pi_\gamma, T^\pi \mu_k - \mu_k \rangle_{\xi_C} \leq -\frac{1-\gamma^2\lambda}{2} \|\mu^\pi_\gamma - \mu_k \|_{\xi_C}^2 + \frac{\gamma^2}{2} \|\mu_k - \mu^\pi_\gamma \|_{\bar{\xi}_C}^2 \, .
\end{align*}
Defining an alternative Lyapunov function by
\begin{align*}
    L_{\xi_C}(\phi) = \sum_{x,a} \xi_C(x, a) \text{KL}(\mu^\pi_\gamma(\cdot|x, a)\ ||\ \logittoprob(\phi)(\cdot|x,a)) \, ,
\end{align*}
a similar calculation to the above gives
\begin{align*}
    \mathbb{E}\lbrack L_{\xi_C}(\phi_{k+1}) \mid \mathscr{F}_k \rbrack \leq L_{\xi_C}(\phi_k) - \varepsilon_k \frac{1-\gamma^2\lambda}{2} \|\mu^\pi_\gamma - \mu_k\|_{\xi_C}^2 + \varepsilon_k \frac{\gamma^2}{2} \|\mu_k - \mu^\pi_\gamma \|_{\bar{\xi}_C}^2 + \varepsilon^2_k K \, .
\end{align*}
The inductive hypothesis leads to $\sum_{l=0}^\infty\varepsilon_l \nicefrac{\gamma^2}{2}  \|\mu_l - \mu^\pi_\gamma \|_{\bar{\xi}_C}^2 < \infty$, and so defining the modified sequence
\begin{align*}
    \tilde{L}_k^{\xi_C} = L_{\xi_C}(\phi_k) - \sum_{l=0}^{k-1} \left(\varepsilon_l^2 K + \varepsilon_l \frac{\gamma^2}{2} \|\mu_l - \mu^\pi_\gamma \|_{\bar{\xi}_C}^2 \right) + \sum_{l=0}^{k-1} \varepsilon_l \frac{1-\gamma^2\lambda}{2}\|\mu^\pi_\gamma - \mu_l\|^2_{\xi_C} \, ,
\end{align*}
the same Robbins-Siegmund argument yields that $(\tilde{L}_k^{\xi_C})_{k \geq 0}$ is a convergent supermartingale, and hence $\sum_{l=0}^{\infty} \varepsilon_l \tfrac{1-\gamma^2\lambda}{2}\|\mu^\pi_\gamma - \mu_l\|^2_{\xi_C} < \infty$, as required to complete the induction, and hence the proof. \qed

\subsection{Examples of cross-entropy TD learning}

\begin{figure*}
    \centering
    \includegraphics[keepaspectratio,width=.9\textwidth]{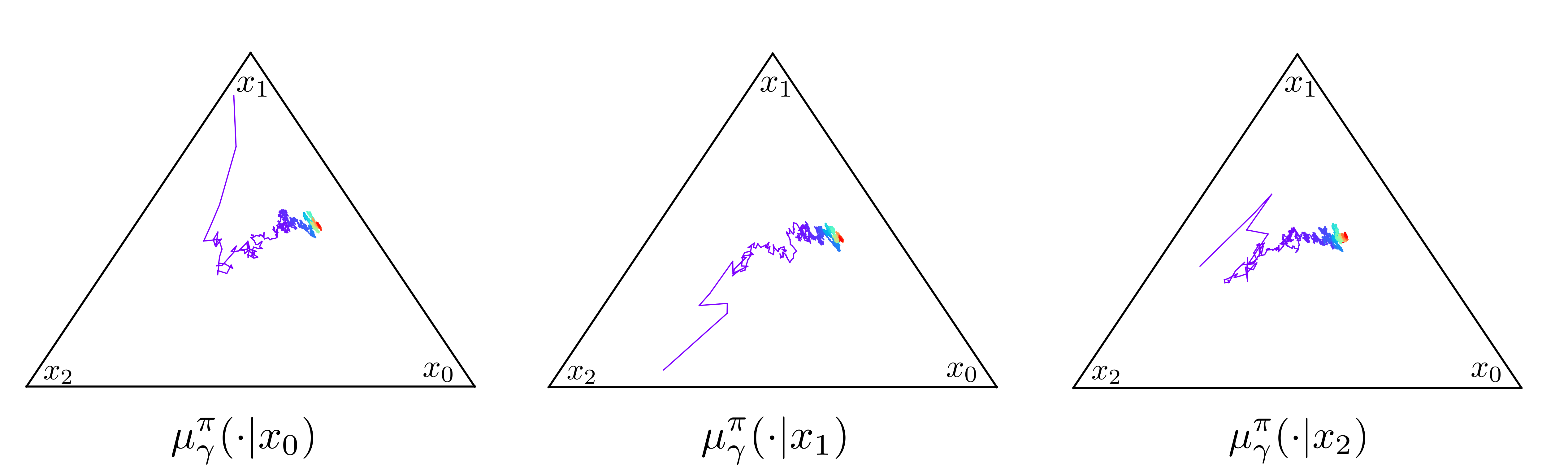}
    \caption{Illustration of CETD dynamics in a three-state MDP. The red dots indicate the fixed point of the operator $T^\pi$ (the true visitation distributions). The coloured lines indicate the path taken by the CETD algorithm.
    }
    \label{fig:cetd-dynamics-main}
\end{figure*}

Figure~\ref{fig:cetd-dynamics-main} shows an example visualisation of the synchronous CETD algorithm in the case of a randomly-generated three-state, one-action MDP. The transition matrix and initial distributions $\mu_0$ used to generated these plots are
\begin{align*}
    P = \begin{pmatrix}
    0.297492728 & 0.702444212 & 0.000063060 \\
    0.584810131 & 0.257810252 & 0.157379617 \\
    0.181511854 & 0.373368720 & 0.445119427
    \end{pmatrix}
    \, , \quad
    \phi_0 =
    \begin{pmatrix}
        -2.3634686 & 1.13534535 & -1.01701414 \\
        0.63736181 & -0.85990661 & 1.77260763 \\
        -1.11036305 & 0.18121427 & 0.56434487
    \end{pmatrix} \, ,
\end{align*}
where as the MDP has a single action, we specify $P$ as a state-by-state transition matrix, and similarly $\phi_0$ is presented a state-by-state matrix, with each row corresponding to the logits of a single estimated future state-visitation distribution. Further, we take $\gamma = 0.9$, and the learning rate schedule used was $\varepsilon_k = 0.75 (k + 1)^{-0.6}$. In all plots presented in this section, we subsample the trajectories generated by a factor of 10 to make trajectories easier to visually inspect.

We also provide a further illustration of CETD below, in the case where the target $\mu^\pi_\gamma$ lies on the boundary of $\Delta(\mathcal{X})^{\mathcal{X} \times \mathcal{A}}$, by modifying the transition matrix $P$ above to have a transient first state. Specifically, we set
\begin{align*}
    P =
    \begin{pmatrix}
        0.765830909 & 0.234148071 & 0.000021020 \\
        0 & 0.620945430 & 0.379054570 \\
        0 & 0.456168756 & 0.543831244
    \end{pmatrix} \, ,
\end{align*}
and use the same initialisation for $\phi_0$ as described above.
The results are shown in Figure~\ref{fig:cetd2}. One interesting observation is that when the collection of true state visitation distributions lies on the boundary of the product of simplices, the convergence of the algorithm appears to be particularly slow. An intuition as to why this might be the case is that the sample-based CETD update is limited to decreasing logit values only by the magnitude of the current probability corresponding to the logit. Because of this, fitting zero (or near-zero) probabilities requires many gradient updates. This hints at the utility of further work to develop a finer-grained understanding of the asymptotic performance of this algorithm (such as asymptotic covariance and/or convergence rate), as well as approaches for variance reduction that may improve the convergence rate, either practically or empirically.

\begin{figure}[ht]
    \centering
    \includegraphics[keepaspectratio,width=.9\textwidth]{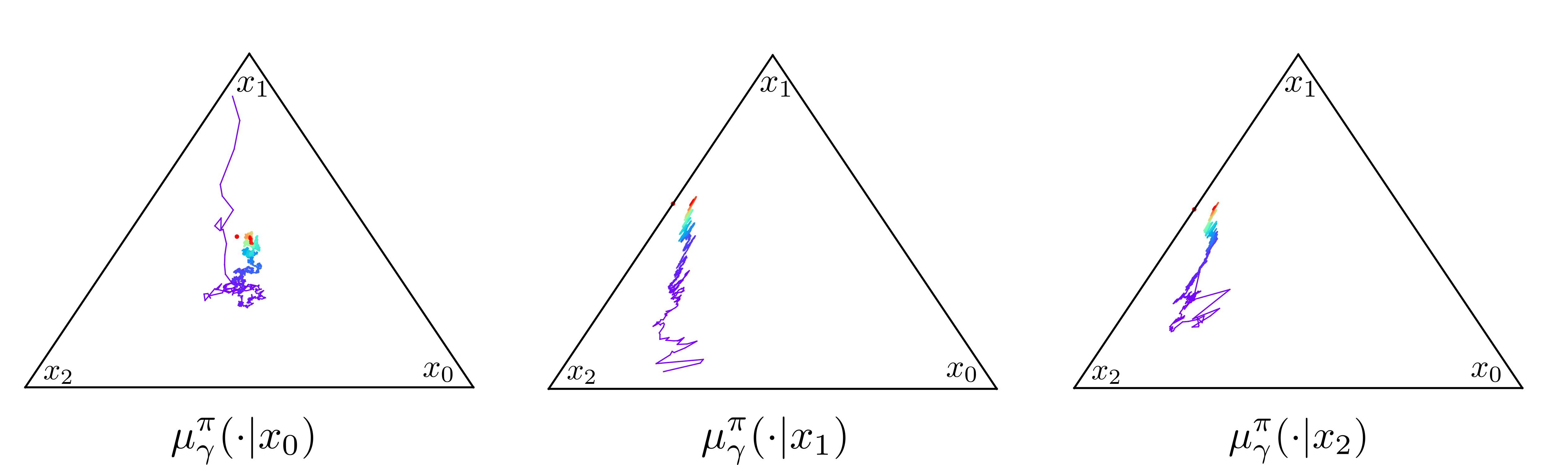}
    \caption{CETD dynamics for an MDP with $\mu^\pi_\gamma$ lying on the boundary of $\Delta(\mathcal{X})^{\mathcal{X} \times \mathcal{A}}$.}
    \label{fig:cetd2}
\end{figure}

\section{Further details and examples relating to core algorithms}\label{sec:appendix:examples}

\subsection{\ggpi counterexamples}
\label{sec:gnmpi-counterexamples}

We address several questions about the necessity of conditions for results appearing in the main paper through a set of counterexamples. Specifically, these questions and their resolutions are:
\begin{itemize}
    \item Is the Q-function of a non-Markov policy always equal to that of a Markov policy? No: See Example~\ref{example:non-markov-q}.
    \item Can we do GPI with the Q-functions of any collection of non-Markov policies? No: See Example~\ref{example:gpi-non-markov}.
    \item If we restrict to \GSPs, do we need the closure condition? Yes: See Example~\ref{example:gpi-non-markov-switching}.
\end{itemize}

\begin{example}\label{example:non-markov-q}
    Consider the two-state, two-action MDP in Figure~\ref{fig:gpi-example1}, and consider a non-Markov policy $\pi$ of the following form. When initialised in the left state, the agent seeks to take the action sequence $bb$, leading it to transition immediately to the right state, and then to terminate in the right state, having attained $0$ reward. When initialised in the right state, the agent seeks to take the action sequence $aab$, attaining a reward of $2$. In order for $Q^\pi(\text{L}, b)$ to be equal to $Q^{\pi'}(\text{L}, b)$ for some Markov policy $\pi'$, it must be the case that $\pi'$ takes action $b$ in state $\text{R}$ with probability 1. However, this is incompatible with the requirement $Q^{\pi}(\text{R}, a) = Q^{\pi'}(\text{R}, a)$, since we would require $\pi'(a|\text{R}) > 0$.
\end{example}

\begin{example}\label{example:gpi-non-markov}
    As a very basic example of why non-Markov policies cannot in general be used within GPI, consider the one-state MDP in Figure~\ref{fig:gpi-example2}. Consider a non-Markov policy $\pi$ specified as follows:
    \begin{itemize}
        \item Initially, the policy randomises uniformly between actions $a$ and $b$.
        \item If action $a$ is selected at the first time-step, then the full sequence of actions is deterministically specified as $abbbb\ldots$.
        \item If action $b$ is selected at the first time-step, then the full sequence of actions is deterministically specified as $baaaa\ldots$.
    \end{itemize}
    We therefore have $Q^\pi(a) = 1$, and $Q^\pi(b) = \gamma/(1-\gamma)$. So if $\gamma > \frac{1}{2}$, the greedy Markov policy obtained prefers action $b$, which is clearly worse for performance than the non-Markov policy $\pi$, meaning the GPI guarantee does not hold in this case.
\end{example}

\begin{example}\label{example:gpi-non-markov-switching}
    Consider the MDP with a depth-3 binary tree transition structure displayed in Figure~\ref{fig:gpi-example3}. Consider two policies $\pi_\text{L}$ and $\pi_\text{R}$ which always take the `left' and `right' actions in the tree. We consider the \GSP $\nu$ that follows $\pi_\text{L}$ for two steps and then switches to $\pi_\text{R}$ (this is a \GSP with two switches, and probability of switching equal to 1 in both cases). The value of this policy at the root node (in the undiscounted case) is +1 (obtained at the red leaf node), since the sequence of actions taken from the root node $x_0$ is LLR.
    
    We now consider the Markov policy obtained by acting greedily with respect to $Q^\nu$. To begin with, consider the Q-values $Q^\nu(x_0, \text{L})$ and $Q^\nu(x_0, \text{R})$. These are +1 and +2 respectively (obtained from the red and green leaf nodes), since these correspond to sequences of actions LLR and RLR respectively. So the greedy policy with respect to this Q-function takes the `right' action at the base state $x_0$. Next, at state $x_1$, we have $Q^\nu(x_1, \text{L}) = -1$ and $Q^\nu(x_1, \text{R}) = 0$ (obtained at the blue and grey nodes), since these correspond to the action sequences LL and RL from $x_1$, meaning that the greedy policy takes the `right' action in state $x_2$. This is enough to deduce that the greedy policy, executed from $x_0$, attains a return of 0, in contrast to the return of $+1$ obtained by the initial \GSP $\nu$; the greedy policy performs worse than the initial policy.
    
    To see how the closure condition deals with this, note that the condition would require that we include the value functions for (i) the \GSP that executes $\pi_\text{L}$ for one step and then switches to $\pi_\text{R}$ and (ii) $\pi_\text{R}$ itself, in the GPI procedure.
    Performing \ggpi over this collection of three policies then leads to an improved policy which when executed from $x_0$, obtains a return of +2, improving over all initial policies considered.
\end{example}

\begin{figure}[ht]
    \centering
    \includegraphics[keepaspectratio,width=.3\textwidth]{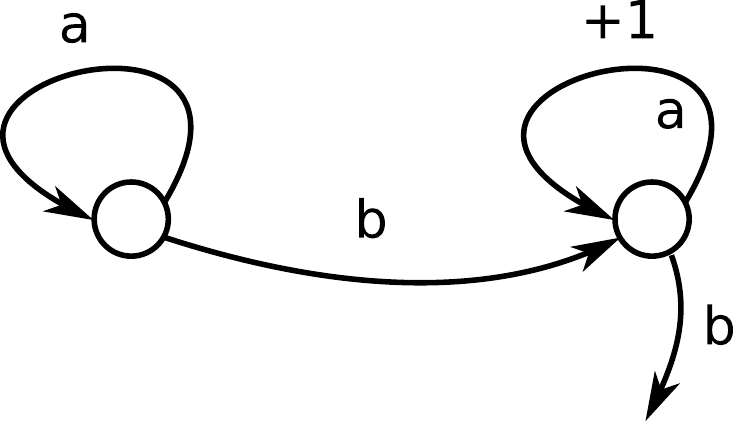}
    \caption{Example MDP showing that the Q-function of a non-Markov policy cannot necessarily be written as the Q-function of a Markov policy.}
    \label{fig:gpi-example1}
\end{figure}

\begin{figure}[ht]
    \centering
    \includegraphics[keepaspectratio,width=.15\textwidth]{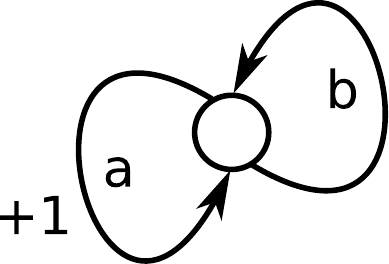}
    \caption{Example MDP illustrating that acting greedily with respect to a non-Markov policy may lead to detrimental performance.}
    \label{fig:gpi-example2}
\end{figure}

\begin{figure}[ht]
    \centering
    \includegraphics[keepaspectratio,width=.45\textwidth]{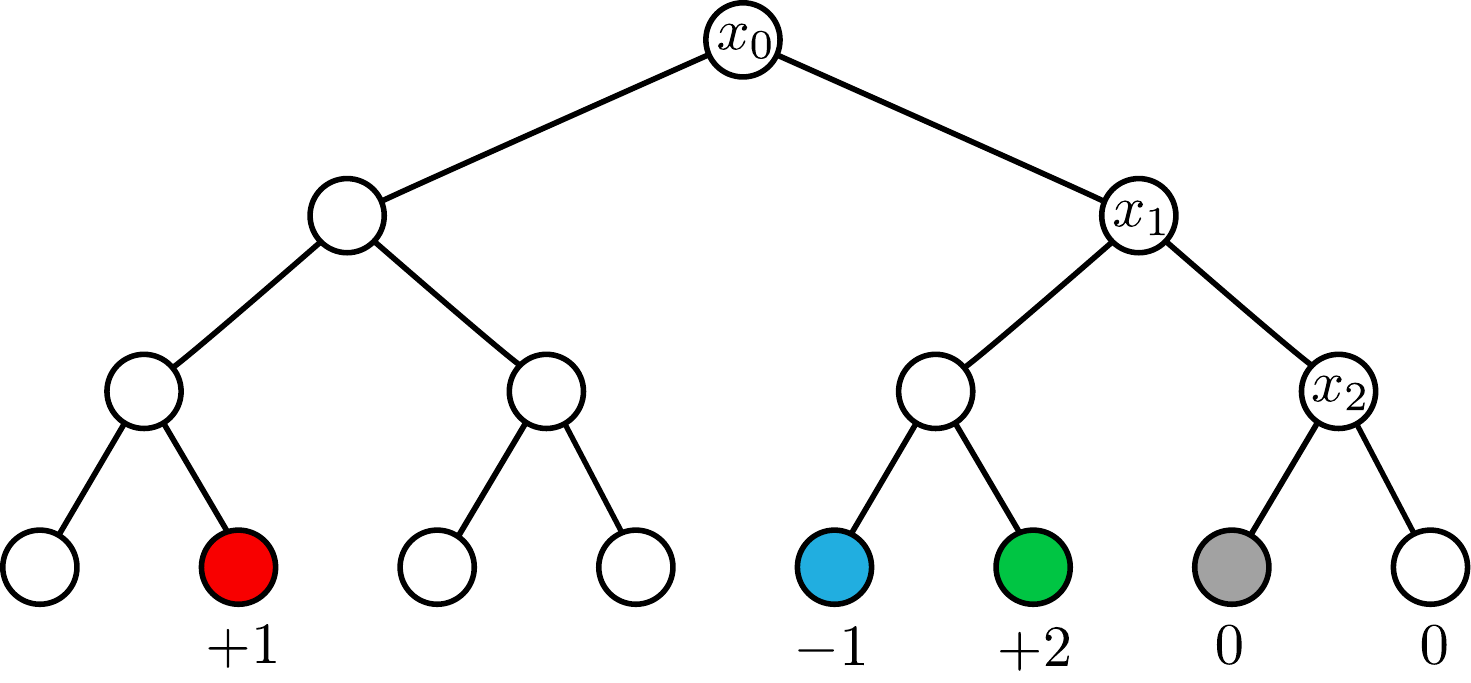}
    \caption{Example MDP illustrating that acting greedily with respect to a \GSP may lead to detrimental performance.}
    \label{fig:gpi-example3}
\end{figure}

\subsection{Further examples of \ggpi for policy iteration}
\label{sec:appendix:policy_iteration_details}

We first give a full description in Algorithm~\ref{alg:policy_iteration} of the combination of \ggpi with policy iteration that we consider in the paper. A possible optimisation is for each step to consider only those \GSPs that end in the most recent policy  $\pi_k$, as this dominates any \GSP comprised of the same initial policies and ending in any other prior policy.

The example that follows then gives a granular sense of when \ggpi delivers benefits over standard policy iteration in the tabular setting, to complement the four-rooms experiments in the main paper.

\begin{algorithm}
    \caption{\ggpi for sample-based policy iteration}
    \label{alg:policy_iteration}
    \begin{algorithmic}
        \REQUIRE Number of iterations $n_{\text{iter}}$, sample budget $n_{\text{samples}}$ 
        \STATE $i \leftarrow 0$
        \STATE $\pi' \leftarrow$ arbitrary policy
        \STATE $\Pi \leftarrow \emptyset$ \COMMENT{Set of policies seen so far}
        \STATE $\mathcal{M} \leftarrow \emptyset$ \COMMENT{Set of GHMs associated with policies in $\Pi$}
        \REPEAT
        \STATE $\pi \leftarrow \pi'$
        \STATE $\Pi \leftarrow \Pi \cup \{ \pi \}$ 
        \STATE Learn GHMs $\mu^{\pi}_\gamma$ and $\mu^{\pi}_\beta$, with $\beta = \gamma(1-\switchprob)$.
        \STATE $\mathcal{M} \leftarrow \mathcal{M} \cup \{\mu^{\pi}_\gamma, \mu^{\pi}_\beta \}$
        \STATE Define a set $\mathcal{V}$ of \GSPs using base policies in $\Pi$ and switching probability $\switchprob$. 
        \FOR{each state $x \in \mathcal{X}$}
                    \STATE For each $\nu \in \mathcal{V}$, estimate $Q^\nu_\gamma(x, \cdot)$ by composing $n_{\text{samples}}$ GHM samples via Equation~\eqref{eq:smp-estimator}.
            \STATE $\pi'(x) \leftarrow \mathcal{G}\left(\{ Q^\nu_\gamma(x, \cdot) | \nu \in \mathcal{V}\}\right)$
        \ENDFOR
        \STATE $i \leftarrow i + 1$
        \UNTIL $\pi' = \pi$ or $i \ge n_{\text{iter}}$
    \end{algorithmic}
\end{algorithm}

\begin{example}
    Figure~\ref{fig:ggpi-ex-chain} illustrates a chain environment with a large reward at one end of the chain, and `distractor' rewards along the chain that may cause a myopic agent to prefer a sub-optimal action. An initial policy $\pi_0$ that is able to make some progress towards the optimal side of the chain will have this progress `wiped out' by myopic greedy improvements in standard policy iteration. In contrast, with \ggpi, this initial progress can be used to deliver a stronger improvement over the first greedy policy $\pi_1$, leading to an optimal policy in fewer iterations. The figure illustrates an extreme setting where standard policy iteration would need $k+1$ improvement steps to reach the optimal policy, while \ggpi reaches it in two single improvement steps. Note also that the set $\{\pi_0 \overset{\alpha}{\rightarrow} \pi_1, \pi_1\}$ is suffix-closed in the sense of Definition~\ref{def:suffix-closed}, so improvement is guaranteed by Theorem~\ref{prop:gnmpi}.
\end{example}

\begin{figure}
    \centering
    \includegraphics[keepaspectratio,width=.5\textwidth]{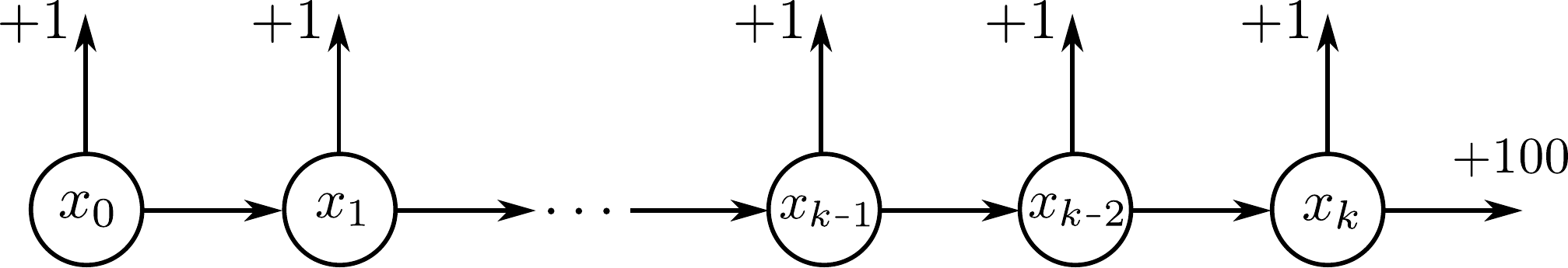}
    
    \vspace{0.5cm}
    \begin{tabular}{c|cccccc}
    \toprule
    Policy & $x_0$ & $x_1$ & $\cdots$ & $x_{k\indexminus2}$ & $x_{k\indexminus1}$ & $x_k$ \\ \midrule
    $\pi_0$  & \cellcolor{lightblue} $\rightarrow$ & \cellcolor{lightblue} $\rightarrow$ & \cellcolor{lightblue} $\cdots$ & \cellcolor{lightblue} $\rightarrow$ & \cellcolor{lightblue} $\rightarrow$ & $\uparrow$ \\
    $\pi_1 = \mathcal{G}(\pi_0)$  & $\uparrow$ & $\uparrow$ & $\cdots$ & $\uparrow$ & $\uparrow$ & \cellcolor{lightblue} $\rightarrow$ \\ 
    $\mathcal{G}(\pi_1)$ & $\uparrow$ & $\uparrow$ & $\cdots$ & $\uparrow$ &  \cellcolor{lightblue} $\rightarrow$ & \cellcolor{lightblue} $\rightarrow$ \\
    $\mathcal{G}(\{\pi_0, \pi_1\})$ & $\uparrow$ & $\uparrow$ & $\cdots$ & $\uparrow$ & \cellcolor{lightblue} $\rightarrow$ &\cellcolor{lightblue} $\rightarrow$ \\
    \cellcolor{lightblue}$\mathcal{G}(\{\pi_0 \overset{\switchprob}{\rightarrow} \pi_1, \pi_1\})$\!\! & \cellcolor{lightblue} $\rightarrow$ & \cellcolor{lightblue} $\rightarrow$ & \cellcolor{lightblue} $\cdots$ & \cellcolor{lightblue} $\rightarrow$ & \cellcolor{lightblue} $\rightarrow$ & \cellcolor{lightblue} $\rightarrow$ \\
    \bottomrule
    \end{tabular}
    
    \caption{Example chain environment, initial policy $\pi_0$, and policies generated by a variety of policy improvement steps. Light-blue shaded cells indicate optimal actions/policies. In this case, since $\pi_0$ encodes optimal behaviour in some states, it is useful to include switching policies beginning with $\pi_0$ in the policy improvement step, and $\mathcal{G}(\{\pi_0 \overset{\switchprob}{\rightarrow} \pi_1, \pi_1\})$ is indeed optimal in this case.}
    \label{fig:ggpi-ex-chain}
\end{figure}

\subsection{Full algorithmic description of \ggpi for transfer}
\label{sec:appendix:transfer-alg}
In Algorithm~\ref{alg:transfer}, we give algorithmic pseudocode to describe the use of \GSP evaluation with GHMs and \ggpi for transfer. There are several steps to the process: a set of Markov policies is given, which may be obtained through learning about prior reward signals, exploration objectives, imitation learning, etc. The agent learns GHMs for these models in a reward-free manner. A novel reward function is then revealed, and \ggpi can be used to derive an improved policy in a zero-shot manner.

\begin{algorithm}
    \caption{\ggpi for sample-based transfer.}
    \label{alg:transfer}
    \begin{algorithmic}
        \REQUIRE Markov policies $\pi_1,\ldots,\pi_k$, switching probability $\switchprob \in (0, 1]$, sampling budget $n_{\text{samples}}$, discount $\gamma$.
        \STATE Learn GHMs $\mu^{\pi_i}_\gamma$ and $\mu^{\pi_i}_\beta$, with $\beta = \gamma(1-\switchprob)$.
        \STATE Novel reward function $r$ is revealed
        \STATE Select a suffix-closed set $\Pi$ of \GSPs
        \FOR{each state $x$ encountered}
            \STATE For each $\nu \in \Pi$, estimate $Q^\nu_\gamma(x, \cdot)$ by composing $n_{\text{samples}}$ GHM samples via Equation~\eqref{eq:smp-estimator}.
            \STATE Act greedily according to the output of $\mathcal{G}$ applied to these estimated Q-functions.
        \ENDFOR
    \end{algorithmic}
\end{algorithm}

\subsection{Geometric horizon models with $\beta = 0$ and $\beta > 0$}
\label{sec:appendix:ghm_b0_illustration}

\begin{figure}
    \centering
    \includegraphics[keepaspectratio,width=0.9\textwidth]{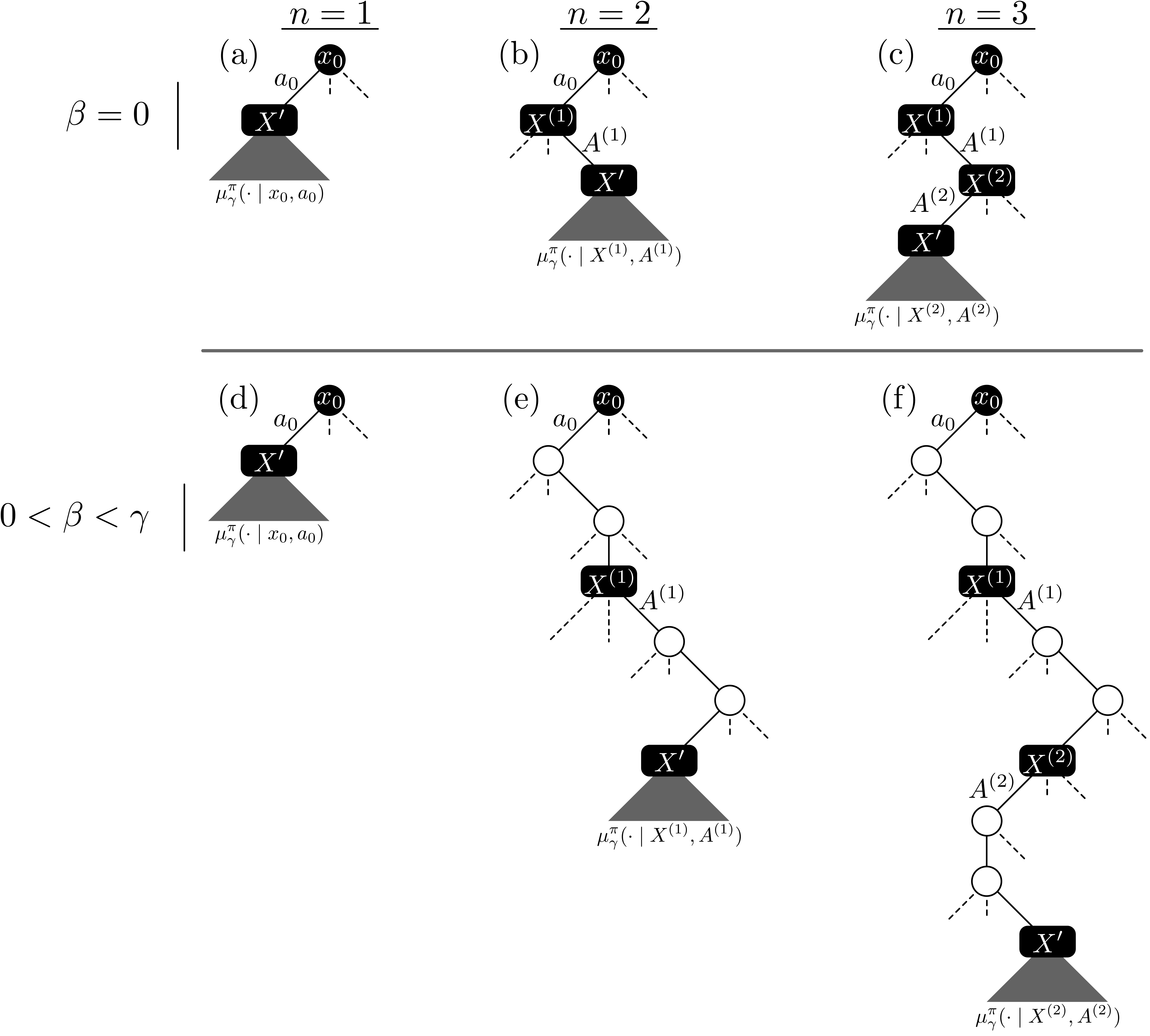}
    \caption{Illustrating samples for GHMs with (a)-(c) $\beta = 0$ and (d)-(f) $\beta > 0$, with $n \in \{1, 2, 3\}$ respectively. Starting state-action given by ($x_0, a_0$) shown as the root node and first branch. States that are sampled or directly observed are shown in black and actions explicitly conditioned on are denoted with solid line branches. Meanwhile, states shown in white are not sampled, but would have been produced while following the policy connecting two observed states. Action branches not followed are shown with dashed lines.}
    \label{fig:diagram_beta0}
\end{figure}

As noted in the main text, there is a close connection between GHMs with $\beta = 0$ (equivalently taking the switching probability as $\alpha = 1$) and one-step forward models traditionally used in planning. Concretely, we can say that $\mu^\pi_{\beta = 0}(\cdot \mid x, a)$, for any policy $\pi$, is exactly identical to one-step forward models commonly used in planning. In Figure~\ref{fig:diagram_beta0} (top row), we show examples of samples generated in this setting while varying the number of model compositions. For any $n \ge 1$, this corresponds to planning with a one-step model for $n - 1$ steps and `bootstrapping' afterwards with a value estimate of $V^\pi_\gamma(X')$ using $\mu^\pi_\gamma$. When we set $\beta > 0$, the practice of planning with this model is much the same but each sample from the model may move one or more trajectory steps into the future, and now the policies $(\pi_i)_{i=1}^{n-1}$ determines the nature of this evolution. In Figure~\ref{fig:diagram_beta0} (bottom row), the corresponding illustration for $\beta > 0$ is given. This again can be thought of as planning for $n - 1$ steps and bootstrapping afterwards with a value estimate of $V^\pi_\gamma(X')$, however now each step of unrolling the model is temporally extended and thus potentially moves through many intermediate trajectory states (while following some Markov policy $\pi_i$). This allows for much deeper planning with fewer unroll steps than in the $\beta = 0$ (one-step model) case, potentially reducing problems with error accumulation common in planning with learned models. Finally, note that in the case of $n = 1$, the value of $\beta$ has no effect, and we always sample directly from $\mu^\pi_\gamma(\cdot \mid x_0, a_0)$. 

\clearpage
\section{Further experiments and training details}
\label{sec:appendix:experiment_details}

All experiments were undertaken with Python, using NumPy \citep{harris2020array} and Matplotlib \citep{hunter2007matplotlib} for visualisation. Experiments involving deep neural networks were undertaken with Jax \citep{jax2018github}, specifically making use of the DeepMind Jax ecosystem \citep{deepmind2020jax}.

\subsection{Sparse-reward ant experiment details}
\label{sec:appendix:experiment_details_ant}

\textbf{Environment and task. }
The sparse-reward ant domain is implemented in the MuJoCo framework \citep{todorov2012mujoco} for realistic physics simulations. The agent is a quadrapedal bot resembling an ant, first introduced by \citet{schulman2015high}, with 8 controllable joints. The observation is a 35-dimensional representation of the true agent state, including information about its centre of mass position, velocity, joint angles and angular velocities, heading, etc. The environment has an 8-dimensional action space $[-1, 1]^8$ representing the torque to be applied at each joint. The ant is capable of moving about in an infinite unconstrained two-dimensional arena. Each episode starts with the agent randomly initialised at rest in a 20x20 square centered at the origin. A target is chosen randomly at a distance of between 2 to 4 units, at an angle that lies in the middle 30 degrees of each quadrant. Note that a policy trained to move consistently in a single direction typically progresses at around 0.2-0.4 units of distance per time step. The episode lasts for 150 time steps, or ends early if the ant reaches the target.

\textbf{Policy training.}
We pretrain policies to move consistently in the 4 axis-aligned directions in the arena. In order to train the policies to effectively turn directions when switching between these policies, we jointly pretrain them by randomly switching between them every Uniform(40) steps. Thus the pretraining conditions mimic how the policies will be used during \ggpi planning.
We found training policies jointly on data generated in this manner to be important in learning policies that composed well together. In contrast, training the base policies entirely from on-policy data typically led to poor compositions in preliminary experiments.

The policies are implemented as stochastic Gaussian policies, with a 4-layer MLP with 256 hidden units including layer normalisation~\citep{ba2016layer} and tanh non-linearities. The network outputs the mean/variance of the torque to be applied at each of the 8 joints independently. The policies are pretrained using the component of the velocity in the desired direction as a reward signal, using MPO~\citep{abdolmaleki2018maximum}. The critic network used for MPO is a similar 5-layer MLP with 256 hidden units at each layer, with Layer Norm and tanh non-linearities. The policies are pretrained for 1 million update steps, using the Adam optimiser~\citep{kingma2014adam} with a learning rate of 0.0003.

\textbf{GHM training.}
We implement the GHMs as conditional $\beta$-VAE models with a single latent dimension and a $\beta_{\text{VAE}}=20$. The encoder, prior, and decoder distributions are all assumed to be Gaussian, and implemented as a 3-layer MLP with 128 hidden units in each layer. They each take the concatenated representation of the current agent state and action $(x, a)$ as auxiliary input to be conditioned on. We slightly modify the modelling task to predict the change in the agent state rather than the future state directly, i.e. we model $X_{t+\text{Geom}(1-\beta)} - X_t$ and add this to the state $X_t$ to form a prediction, rather than directly modelling $X_{t+\text{Geom}(1-\beta)}$ itself. We found this to improve performance in terms of negative ELBO slightly.
For each pretrained policy, we train 2 separate GHMs, one with geometric horizon parameter $\beta = 0.8$ and one with $\beta = 0.9$. The GHMs are trained for 500,000 update steps with the CETD loss, using the Adam optimiser and a learning rate of 0.0003.

We performed a light hyperparameter search for: the learning rate between $\{3 \cdot 10^{-5}, 10^{-4}, 3 \cdot 10^{-4}, 10^{-3}\}$; the $\beta_{\text{VAE}}$-parameter in the $\beta$-VAE loss between $\{1, 20, 50, 100\}$; and the VAE latent dimension between $\{1, 8\}$. Performance in terms of negative ELBO was fairly robust across this range of hyperparameters.

Both the policy and GHM training use a distributed actor-learner setup communicating via a uniform replay buffer of size $10^6$. Each learner step uses a batch size of 256 and averages the loss over trajectories of length 20. These settings are conducted without a target specified, and episodes last for between 100 and 140 steps uniformly at random.

\textbf{\ggpi.}
When performing \ggpi to improve on this set of policies, we evaluate with a discount factor of $\gamma=0.9$. Thus, we can consider \gsps that switch with a probability of $\alpha = \nicefrac{1}{9}$, and estimate the GHM of such a policy using the GHMs of its constituent base policies with $\beta=0.8$ and $\beta=0.9$. We estimate the action-value function by sampling from the composed GHM, evaluating the sample under the known non-linear reward function $r$, and averaging over multiple samples per \gsp. For fairness, when comparing \ggpi with $n=1$ and $n=2$, we sample 4 times the sampling budget when evaluating $n=1$ --- thus, both $n=2$ and $n=1$ \ggpi are considering the same number of \emph{total} samples, with $n=1$ actually seeing more samples per policy.

Since this environment has a continuous action space, we cannot evaluate $Q^{\nu}(x, a)$ for all actions; thus, instead we estimate $V^{\nu}(x) = \mathbb{E}_{A \sim \pi_1(\cdot|x)}[Q^{\nu}(x, A)]$, i.e. consider only those actions that we obtain by sampling from the head-policy $\pi_1$ of the \GSP $\nu$, and use this to choose the best \GSP $\nu \in \Pi$ and act according to it per time step.

\subsection{Additional agent visualisations}\label{sec:agent-visualisations}

Figures~\ref{fig:log_viz}~and~\ref{fig:ant_results_viz} show more detailed visualisations of the GPI and depth-2 \ggpi agents.

\begin{figure*}[h]
    \centering
    \includegraphics[scale=0.5]{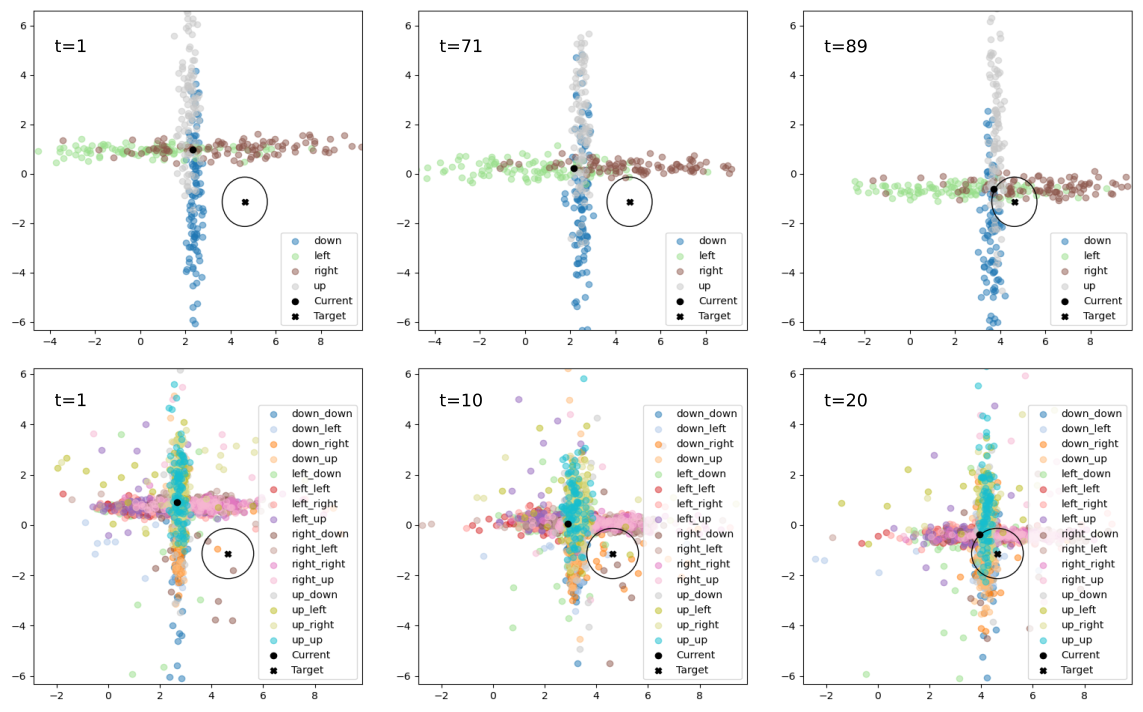}
    \caption{Visualisation of an agent performing GPI (top) and depth-2 \ggpi (bottom) for the same episode initialisation. While \ggpi is immediately able to plan to reach the target and reaches it within 20 steps, standard GPI is unable to do so and spends 70 steps moving randomly before happening to align with the target through pure chance, and then reaching the target. We show the agent and target locations, the boundary outside of which the reward signal is zero, and visualisation of the different agent plans.}
    \label{fig:log_viz}
\end{figure*}

\begin{figure*}[!h]
    \centering
    \begin{minipage}{.5\textwidth}
        \centering
        \includegraphics[width=0.85\linewidth]{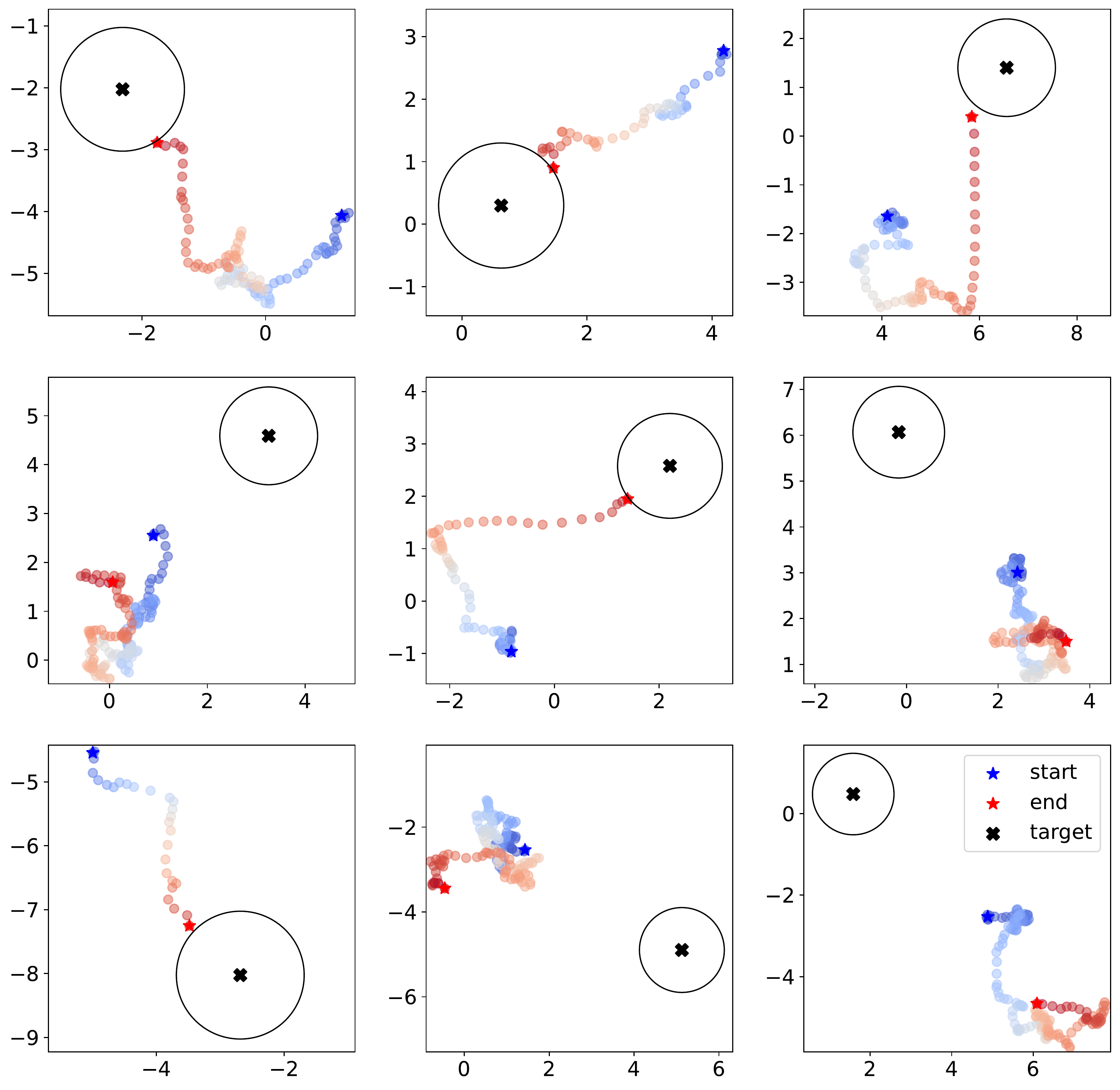}
    \end{minipage}%
    \begin{minipage}{.5\textwidth}
        \centering
        \includegraphics[width=0.85\linewidth]{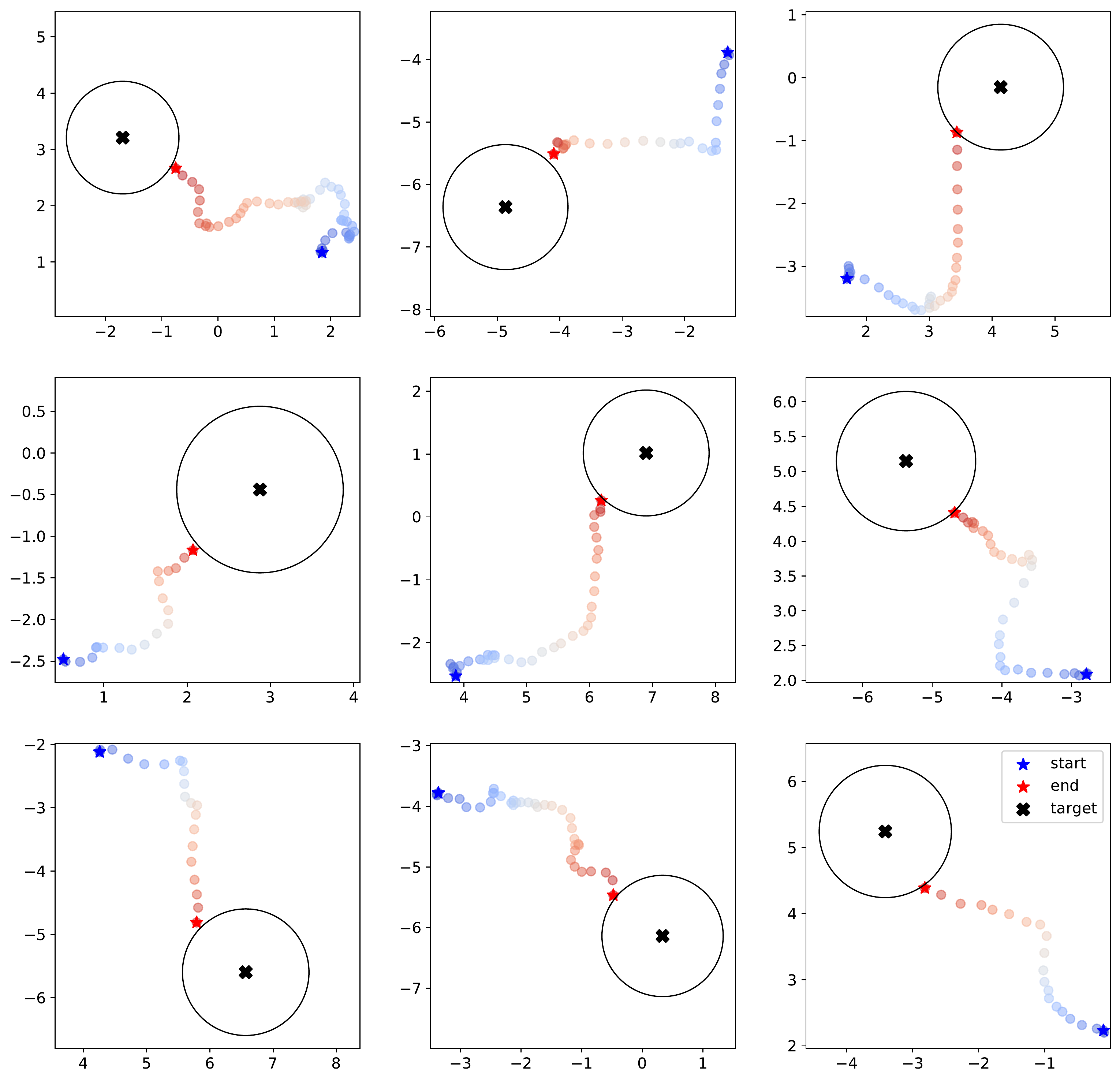}
    \end{minipage}
    \caption{Comparison of standard GPI (left) and $n=2$ \ggpi (right) to navigate towards a goal given sparse rewards. We plot the x-y coordinates of the agent centre of mass, and colour on a gradient from blue to red through the duration of the episode. The target is displayed along with the boundary outside of which the reward signal is zero.}
    \label{fig:ant_results_viz}
\end{figure*}

\subsection{Comparing GHMs to compositions of 1-step models}
\label{sec:appendix:ghm_vs_one_step}

We compare the use of GHMs against a 1-step model unrolled for multiple steps. Concretely, we compare our our VAE-based GHM($\beta$) against a one-step model unrolled for a Geometric($1-\beta$) number of steps. Note that for perfectly trained models, these two distributions would be identical; however, the GHM models show compounding error at train time due to the use of bootstrapping, while the one-step model shows compounding error at evaluation time due to the multi-step composition. We implement the one-step model with identical architecture as the GHM model, equivalent to a GHM($\beta=0$) model.

Figure~\ref{fig:ghm_vs_one_step_plots} shows a comparison of these models for $\beta=0.9$, versus the true geometrically discounted future state distribution obtained through sampling trajectories via simulating the policy in the environment.
The scatterplot on the left shows samples from these models compared with the true distribution, showing a high degree of overlap for the GHM with the true distribution and large errors for the one-step model. The plot on the left measures distance of a sample from these models versus the nearest simulated future state. Though the 1-step model has lower error for very near-term predictions, its error quickly compounds and increases steadily as the prediction horizon increases. Meanwhile, the GHM models make low-error predictions even far into the future.

\begin{figure*}[h]
    \centering
    \begin{minipage}{.5\textwidth}
        \centering
        \includegraphics[scale=0.5]{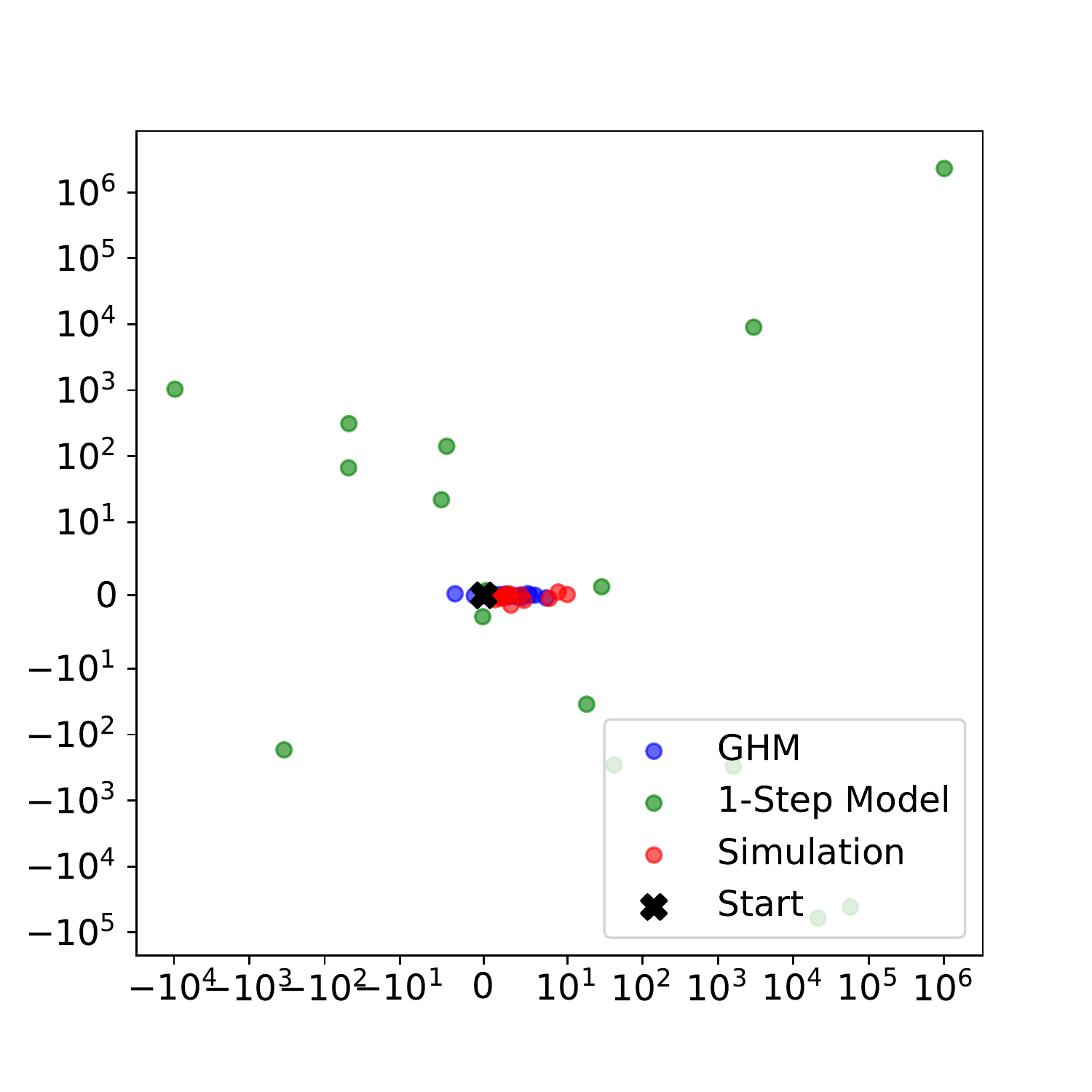}
    \end{minipage}%
    \begin{minipage}{0.5\textwidth}
        \centering
        \includegraphics[scale=0.5]{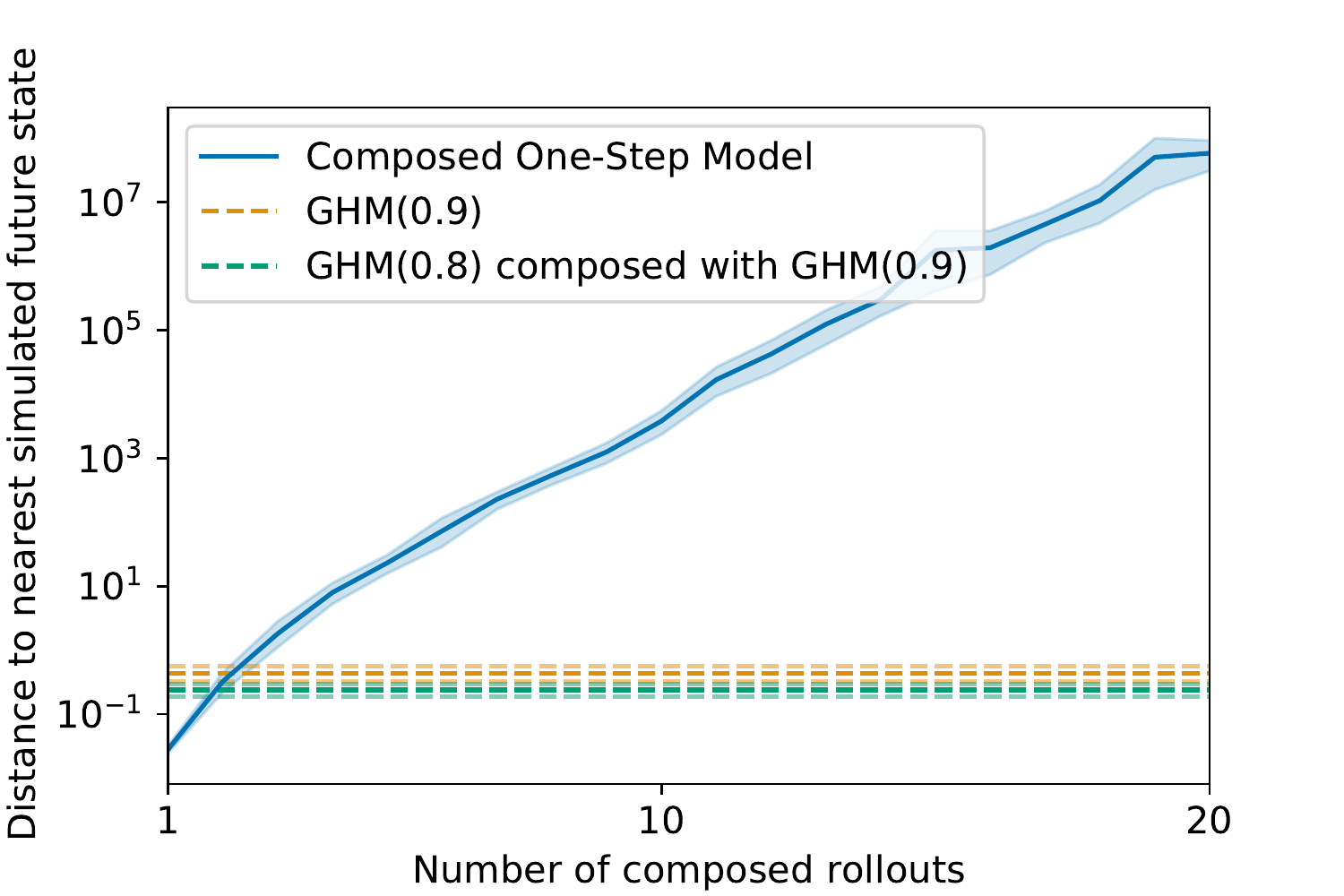}
    \end{minipage}
    \caption{Comparison of a true discounted future state visitation distribution, a learnt GHM($\beta$), and a 1-step model composed for a Geometric($1-\beta$) number of steps, for $\beta=0.9$. The plot on the right is averaged over 100 random seeds.}
    \label{fig:ghm_vs_one_step_plots}
\end{figure*}

\clearpage

\subsection{GHM training using normalising flows}
\label{sec:appendix:ghm_training_exps}

In this section we provide further details of GHM training experiments that inform our choice of VAE-GHMs and the CETD loss in the main paper. One of our main points of comparison is the $L^2$ loss on log-densities, introduced by \citet{janner2020gamma}.

\begin{definition}[Log-$L^2$ temporal-difference (\lltwotd) loss \citep{janner2020gamma}]
    Given an observed transition $(x, a, x')$, the log-$L^2$ bootstrap loss is defined by
    \begin{align*}
        ( \log(\mu(x''|x, a)) - \log((1-\beta) P(x''|x, a) +  \beta \bar{\mu}(x''|x', a')) )^2 \, ,
    \end{align*}
    where $a' \sim \pi(\cdot|x')$, $x'' \sim \mu(\cdot|x', a')$, $\bar{\mu}$ denotes a stop-gradient on $\mu$.
\end{definition}
\citet{janner2020gamma} focus on the \lltwotd bootstrap loss in their experiments. However, as they note, this loss generally leads to an incorrect minimiser, due to the presence of bias (specifically, the averaging of $x', a', x''$ \emph{outside} rather than inside the logarithm).

\begin{figure}[h]
\centering
\includegraphics[scale=0.5]{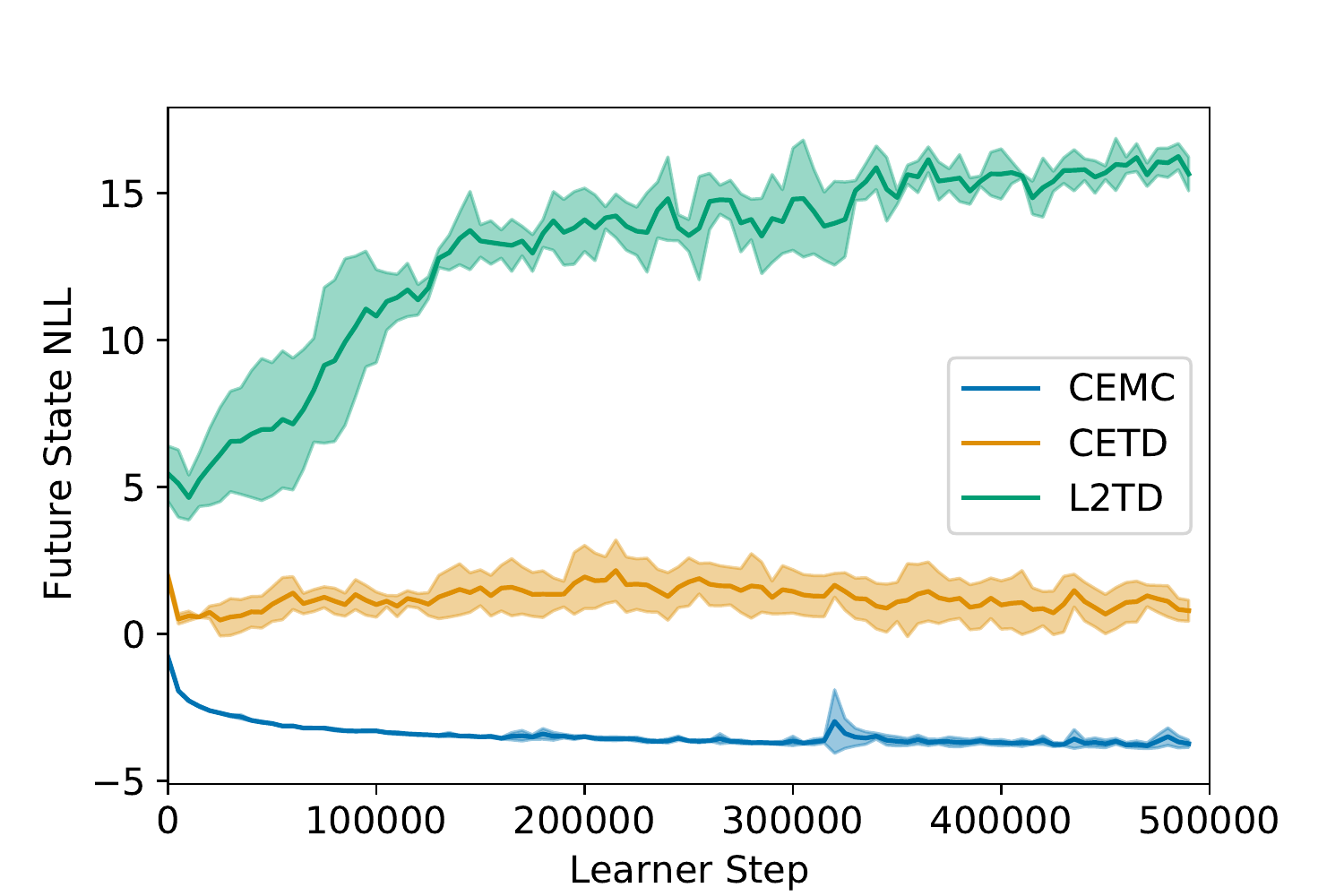}
\caption{Comparison of CEMC, CETD, and \lltwotd losses for training GHMs implemented as normalising flows (left) and $\beta$-VAEs (right). Performance (lower is better) is measured in terms of estimated negative log-likelihood of samples from the true geometric horizon distribution obtained by sampling states from on-policy trajectories. Results are shown averaged over 5 random seeds.}
\label{fig:vae_vs_flows}
\end{figure}

We provide an empirical comparison of the different methods for GHM training introduced in Section~\ref{sec:learning_ghm}. We primarily consider the \lltwotd loss proposed by~\citet{janner2020gamma}, against the CETD loss analysed previously. We also briefly consider training the GHM model $\mu$ by directly sampling a future state from on-policy trajectories at a time sampled according to a Geometric($1-\beta$) distribution into the future, following the CEMC loss introduced in the main paper. Note that the CEMC loss is straightforward to implement and does not require any bootstrapping, but cannot be learned from off-policy samples and thus is less desirable than the other methods.
    
In addition to the comparison using the much simpler VAE models in Section~\ref{sec:ghm_training_exp}, here we compare the losses using normalising flow models~\citep{rezende2015variational} of a similar architecture as suggested by \citet{janner2020gamma}. As these models admit exact density computation, we also can compare against the \lltwotd loss.
Figure~\ref{fig:vae_vs_flows} shows the performance of the different methods for $\beta=0.8$ in terms of the negative log-likelihood of a sample from the true geometric horizon distribution of the $\pi_{\text{right}}$ pretrained policy on the sparse-reward ant environment. Note that the CEMC loss is explicitly optimising for this metric and achieves very strong performance, while the CETD and \lltwotd losses perform much worse. Further, the \lltwotd loss is much less stable than CETD and actually diverges late in training.

When training GHMs using normalising flows, we use a similar architecture to that proposed by ~\citet{janner2020gamma}. We use a normalising flow consisting of 2 coupling layers, each including a batch norm flow~\citep{dinh2016density}, a 1x1 invertible convolution~\citep{kingma2018glow}, and a conditional neural spline~\citep{durkan2019neural}. The neural spline includes a rational quadratic spline with range between -5 to 5, 8 knots, and whose parameters are outputted by an MLP with a single hidden layer of size 256. When training using the \lltwotd loss, we use a target network with a target update period of 200 learner steps to generate the bootstrap targets as suggested by~\citet{janner2020gamma}.

\subsection{Evaluating \ggpi performance for varying GHM training budgets}
\label{sec:appendix:ghm_training_budget}

In our main experiments, we train GHMs using VAEs for 500000 learner steps, which is sufficient to plateau the training ELBO.
We now examine the sensitivity of our proposed method to the training budget afforded to GHM training.

\begin{figure}[h!]
    \centering
    \includegraphics[scale=0.5]{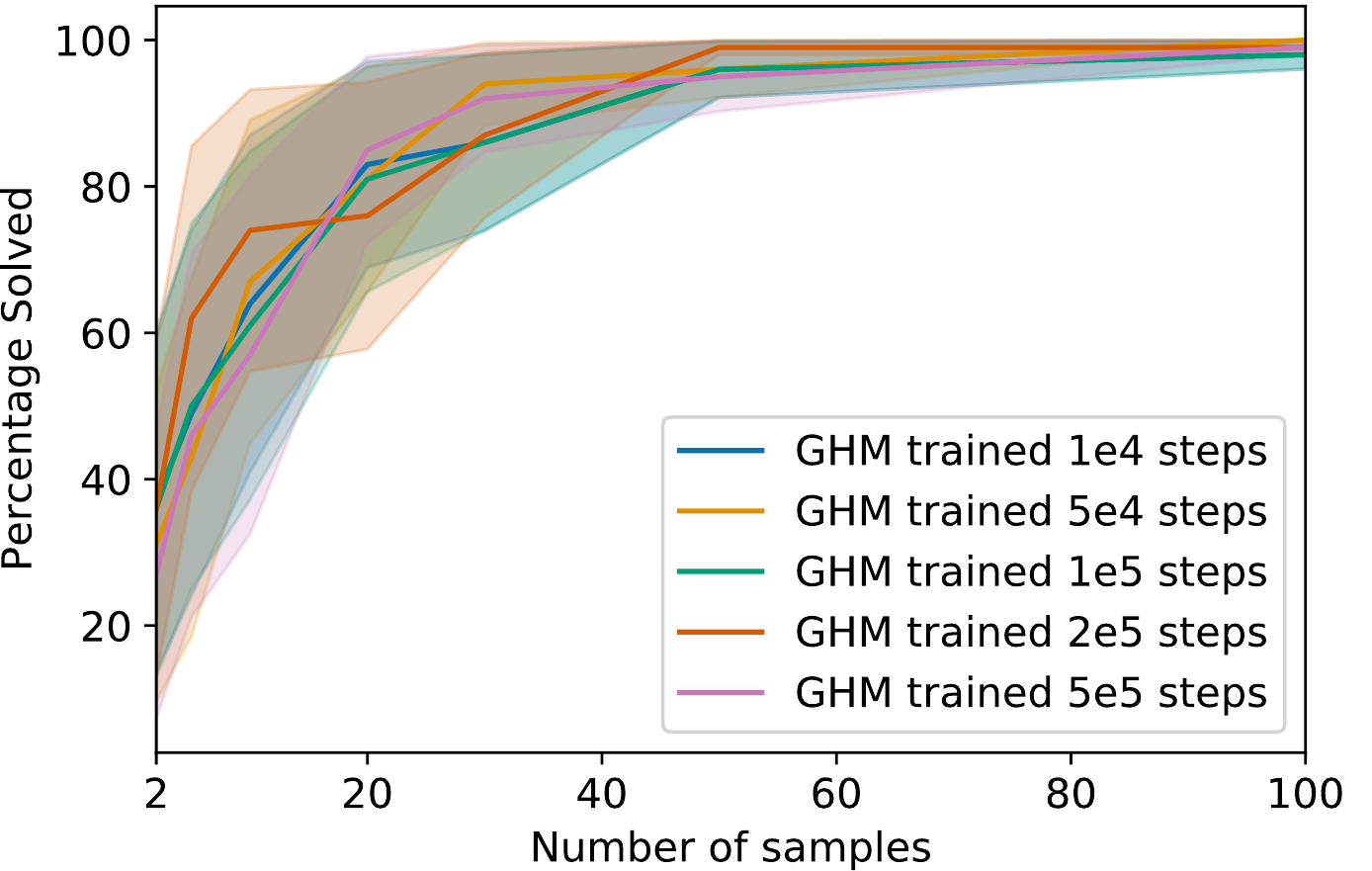}
    \caption{Performance of depth-2 \ggpi on the sparse-reward ant task with snapshots of GHM models taken at various stages through training.}
    \label{fig:ghm_training_steps}
\end{figure}

Figure~\ref{fig:ghm_training_steps} shows that GHMs trained with as few as $10^4$ learner steps (taking only 2 minutes wall-clock time on our distributed training setup described in Appendix~\ref{sec:appendix:experiment_details_ant}) are still successful in planning. Additional preliminary experiments with even fewer learner steps did not result in GHMs useful for planning.

\section{An extension of the main evaluation result}\label{sec:appendix:extensions}\label{sec:smp-eval-general}

For simplicity, in the main paper we presented Theorem~\ref{thm:prop-smp-eval} for action-independent rewards. There is a simple adaptation to this result that applies to general reward functions $r: \mathcal{X} \times \mathcal{A} \rightarrow \mathbb{R}$.

\begin{restatable}{theorem}{thmSMPEvalGeneral}\label{thm:prop-smp-eval-general}
    Consider an MDP with expected reward function $r : \mathcal{X} \times \mathcal{A} \rightarrow \mathbb{R}$, and let $\nu = \pi_1 \overset{\alpha}{\rightarrow} \cdots \overset{\alpha}{\rightarrow} \pi_n$. Writing $\beta =  \gamma(1-\switchprob)$, we have
    \begin{align*}
    & r(x, a) + \frac{\gamma}{1-\gamma} \times \Bigg\lbrack  \sum_{m=1}^{n-1} \frac{1-\gamma}{1 - \beta}\left(\frac{\gamma - \beta}{1-\beta}\right)^{m-1} \left( (1-\alpha)r^{\pi_m}(X^{(m)}) + \alpha r^{\pi_{m+1}}(X^{(m)}) \right)
     + \left(\frac{\gamma - \beta}{1-\beta}\right)^{n-1} r^{\pi_n}(X') \Bigg\rbrack \, ,
    \end{align*}
    where $(X^{(0)}, A^{(0)})=(x, a)$, $X^{(m)} \sim \mu^{\pi_m}_{\beta}(\cdot|X^{(m\indexminus 1)}, A^{(m \indexminus 1)})$, $A^{(m)} \sim \pi_{m+1}(\cdot|X^{(m)})$, $X' \sim \mu^{\pi^{(n)}}_\gamma(\cdot|X^{(n)}, A^{(n)})$, is an unbiased estimator for $Q^\nu_\gamma(x, a)$.
\end{restatable}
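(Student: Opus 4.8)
The plan is to mirror the proof of Theorem~\ref{thm:prop-smp-eval}, introducing exactly one extra case split to handle the action-dependence of $r$. First I would reduce the statement to a claim about $g^\nu(x,a) := \mathbb{E}^\nu_{x,a}[r(X_T, A_T)]$ with $T \sim \text{Geometric}(1-\gamma)$: since $T$ is independent of the trajectory and has support $\{1,2,\dots\}$, the usual geometric-horizon argument gives $Q^\nu_\gamma(x,a) = r(x,a) + \frac{\gamma}{1-\gamma} g^\nu(x,a)$ (here $A_T$ is the action genuinely taken at step $T$, drawn from whichever base policy is active then). So it suffices to show the bracketed expression in the theorem is unbiased for $g^\nu(x,a)$. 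I would prove this by induction on $n$; the base case $n=1$ is precisely the action-dependent identity $Q^{\pi_1}_\gamma(x,a) = r(x,a) + \frac{\gamma}{1-\gamma}\mathbb{E}_{X'\sim\mu^{\pi_1}_\gamma(\cdot|x,a)}[r^{\pi_1}(X')]$ of Proposition~\ref{prop:basic-eval}, matching the estimator with its empty sum and its term $r^{\pi_n}(X') = r^{\pi_1}(X')$.

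For the inductive step write $\nu = \pi_1 \overset{\switchprob}{\rightarrow} \cdots \overset{\switchprob}{\rightarrow} \pi_n$ with suffix $\nu' = \pi_2 \overset{\switchprob}{\rightarrow} \cdots \overset{\switchprob}{\rightarrow} \pi_n$, and let $T_1 \sim \text{Geometric}(\switchprob)$ be the first switch time. The essential new point is that the event $\{T \le T_1\}$ used in the proof of Theorem~\ref{thm:prop-smp-eval} must be refined into $\{T < T_1\} \sqcup \{T = T_1\}$: on $\{T<T_1\}$ no switch has occurred at step $T$, so $A_T \sim \pi_1(\cdot|X_T)$ and the contribution involves $r^{\pi_1}$; on $\{T=T_1\}$ the switch to $\pi_2$ has just happened at step $T_1 = T$, so $A_T \sim \pi_2(\cdot|X_T)$ and the contribution involves $r^{\pi_2}$. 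A direct computation of the same flavour as in the proof of Theorem~\ref{thm:prop-smp-eval} gives $\mathbb{P}(T<T_1) = \frac{(1-\gamma)(1-\switchprob)}{1-\beta}$ and $\mathbb{P}(T=T_1) = \frac{(1-\gamma)\switchprob}{1-\beta}$, and shows that conditional on \emph{either} of these events one has $T \sim \text{Geometric}(1-\beta)$ with $\pi_1$ governing all transitions up to $X_T$, hence $X_T \sim \mu^{\pi_1}_\beta(\cdot|x,a) \overset{\mathcal{D}}{=} X^{(1)}$ in both cases. Adding the two contributions yields $\mathbb{E}^\nu_{x,a}[r(X_T,A_T)\mathbbm{1}_{T\le T_1}] = \frac{1-\gamma}{1-\beta}\,\mathbb{E}\big[(1-\switchprob)r^{\pi_1}(X^{(1)}) + \switchprob\, r^{\pi_2}(X^{(1)})\big]$, which is exactly the $m=1$ term of the claimed expression.

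On the complementary event $\{T > T_1\}$ nothing changes relative to the existing proof: conditioning on $\{T>T_1\}$ makes $T_1$ and $T-T_1$ independent with laws $\text{Geometric}(1-\beta)$ and $\text{Geometric}(1-\gamma)$, so the switch state $X_{T_1}$ has law $\mu^{\pi_1}_\beta(\cdot|x,a) \overset{\mathcal{D}}{=} X^{(1)}$, and from there, with $A^{(1)} \sim \pi_2(\cdot|X^{(1)})$, the remaining expectation is exactly $\mathbb{E}[g^{\nu'}(X^{(1)}, A^{(1)})]$. Applying the inductive hypothesis to the $(n{-}1)$-policy \GSP $\nu'$ at $(X^{(1)}, A^{(1)})$, multiplying by $\mathbb{P}(T>T_1) = \frac{\gamma-\beta}{1-\beta}$, and relabelling the suffix variable $X^{(m+1)}$ as the $(m{+}1)$-st variable of the full estimator, I would combine this with the $m=1$ term above to recover precisely the bracketed expression; the final term $r^{\pi_n}(X')$ carries no convex combination because once $\pi_n$ is selected it is followed forever, so within that regime the policy active at the horizon is unambiguously $\pi_n$. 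The only genuinely new ingredient — and the one step that needs care — is the split $\{T\le T_1\} = \{T<T_1\}\sqcup\{T=T_1\}$ together with the verification that $X_T$ has the common conditional law $\mu^{\pi_1}_\beta(\cdot|x,a)$ on both pieces; this is exactly what produces the $(1-\switchprob)/\switchprob$ mixture of $r^{\pi_m}$ and $r^{\pi_{m+1}}$, and everything else is the bookkeeping already present in the proof of Theorem~\ref{thm:prop-smp-eval}.
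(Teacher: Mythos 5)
Your proposal is correct and takes essentially the same approach as the paper: the paper likewise derives the result as a corollary of Theorem~\ref{thm:prop-smp-eval}, with the single new ingredient being that, conditional on the geometric horizon $T$ landing in the $m$-th epoch, the switch occurs at exactly time $T$ with probability $\switchprob$ (by memorylessness), producing the $(1-\switchprob)r^{\pi_m} + \switchprob\, r^{\pi_{m+1}}$ mixture. Your explicit split $\{T \le T_1\} = \{T < T_1\} \sqcup \{T = T_1\}$ with the verified probabilities $\frac{(1-\gamma)(1-\switchprob)}{1-\beta}$ and $\frac{(1-\gamma)\switchprob}{1-\beta}$ is just a more fully worked-out version of that same observation, propagated to all epochs by the induction.
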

\begin{proof}
    The result can be proven as a straightforward corollary of Theorem~\ref{thm:prop-smp-eval}; the distribution of $X^{(m)}$ matches that of $X_T$ conditional on the geometric time $T$ falling between the times of the $(m-1)$\textsuperscript{th} and $m$\textsuperscript{th} switches, while the \GSP is executing policy $\pi_m$. Thus, to know what the distribution of actions should be when evaluating the reward function at this state, we need to know whether the switch happens at the current time step or not. From the memoryless property of the geometric distribution concerned, this probability is precisely $\alpha$. So with a weighting of $1-\alpha$, the reward is evaluated for the policy $\pi_m$, and with a weighting of $\alpha$, the reward is evaluated according to the distribution $\pi_{m+1}$ over policies that will be switched to at this time step.
\end{proof}

\end{document}